\documentclass[submission,copyright,creativecommons]{eptcs}
 % Name of the event you are submitting to
%\usepackage{breakurl}             % Not needed if you use pdflatex only.
%\usepackage[bookmarks=false]{hyperref}
\usepackage{mathrsfs}
\usepackage{amsmath}
\usepackage{amssymb}

\usepackage[]{algorithm2e}

\usepackage{enumitem}

\usepackage{graphicx}
\usepackage{epstopdf}

\usepackage{pbox}

%\usepackage{float}
%\newfloat{Algorithm}{t}{lop}

\DeclareGraphicsExtensions{.eps}

%%% all macros are in the file below
\usepackage{color}

\newcommand{\todo}[1]{#1}%{{\color{blue}#1}}
\newcommand{\revision}[1]{#1}%{{\color{blue}#1}}
\newcommand{\revise}[1]{#1}%{{\color{red}#1}}

%% These macro used to switch between short journal version and 
%% extended TR version

%%% macro to switch between short and long versions of paper
% uncomment 2 lines below for short version
%\newcommand{\swVers}[2]{\shortVersion{#1}{#2}}
%\renewcommand{\baselinestretch}{0.915} 
%\renewcommand{\baselinestretch}{0.93} 
%% uncomment 2 lines below for long version

%% additional macros below

% Proof stuff
\newcommand{\qed}{$\hfill\blacksquare$}
%\newcommand{\qed}{\mbox{} $\Box$}
%\newenvironment{proof}{{\it Proof:}}{\qed}

%% My version of captions

%\newcommand{\mycaption}[2]{\caption{{\bf #1} #2}}

\newcommand{\comment}[1]{}

%%%%%%%%%%% For centering table entries on decimal point %%%%%%%%%%%%%%%%%%
%\newcolumntype{.}{D{.}{.}{-1}}
%\newcolumntype{i}{D{.}{.}{2}} % useful for integers, specifying arbit #dec. places causes too wide a colum

% Space saving List environment for enumerations.
\newcounter{myctr}
\newenvironment{mylist}{\begin{list}{\arabic{myctr}.}
{\usecounter{myctr}
\setlength{\topsep}{1mm}\setlength{\itemsep}{0.25mm}
\setlength{\parsep}{0.1mm}
\setlength{\itemindent}{0mm}\setlength{\partopsep}{0mm}
\setlength{\labelwidth}{15mm}
\setlength{\leftmargin}{4mm}}}{\end{list}}

% Space saving List environment for itemizing
\newenvironment{myitemize}{\begin{list}{$\bullet$}
{\setlength{\topsep}{1mm}\setlength{\itemsep}{0.25mm}
\setlength{\parsep}{0.1mm}
\setlength{\itemindent}{0mm}\setlength{\partopsep}{0mm}
\setlength{\labelwidth}{15mm}
\setlength{\leftmargin}{4mm}}}{\end{list}}

%------------------------------

\newcommand{\class}{\ensuremath{\mathcal{C}}\xspace}
\newcommand{\concept}{\ensuremath{\mathit{c}}\xspace}

\newcommand{\Spec}{\ensuremath{\Phi}\xspace}
\newcommand{\target}{\ensuremath{\mathit{c}}\xspace}
\newcommand{\domain}{\ensuremath{\mathbf{E}}\xspace}
\newcommand{\ex}{\ensuremath{\mathit{x}}\xspace}
\newcommand{\ExSet}{\ensuremath{\mathit{X}}\xspace}

% Temporal operators

%=== TOGIS macros =======

\newtheorem{theorem}{Theorem}[section]
\newtheorem{lemma}[theorem]{Lemma}

\newtheorem{corollary}[theorem]{Corollary}
\newtheorem{example}{Example}[section]
\newtheorem{definition}{Definition}[section]

\newtheorem{langfamily}{Language Family}

\newenvironment{proof}[1][Proof]{\begin{trivlist}
\item[\hskip \labelsep {\bfseries #1}]}{\end{trivlist}}

\newcommand{\ogis}{\ensuremath{\mathsf{OGIS}}\xspace} % changed by sanjit

\newcommand{\tcegis}{\mathtt{CEGIS}}
\newcommand{\infcegis}{\mathtt{CEGIS}}
\newcommand{\infmncegis}{\mathtt{MINCEGIS}}
\newcommand{\infbcegis}{\mathtt{CBCEGIS}}
\newcommand{\infhmncegis}{\mathtt{PBCEGIS}}
\newcommand{\cegis}{\mathtt{cegis}}
\newcommand{\mncegis}{\mathtt{mincegis}}
\newcommand{\hmncegis}{\mathtt{pbcegis}}
\newcommand{\bcegis}{\mathtt{cbcegis}}

\newcommand{\dseq}{\mathtt{\delta}}

\newcommand{\bound}{B}

\newcommand{\lang}{L}
\newcommand{\langfun}{\todo{L_{map}}}
\newcommand{\trace}{\tau}
\newcommand{\nat}{\mathbb{N}}
\newcommand{\natbot}{\ensuremath{\nat_{\bot}}}
\newcommand{\range}{range}
\newcommand{\samples}{\mathtt{SAMPLE}}
\newcommand{\algo}{P}
\newcommand{\template}{\mathtt{TEMPLATE}}
\newcommand{\verifier}{\mathtt{CHECK}}
\newcommand{\mnverifier}{\mathtt{MINCHECK}}
\newcommand{\hmnverifier}{\mathtt{PBCHECK}}
\newcommand{\bverifier}{\mathtt{CBCHECK}}

\newcommand{\ce}{\mathtt{cex}}
\newcommand{\mncemap}{\mathtt{mce}}
\newcommand{\hmncemap}{\mathtt{pce}}

\newcommand{\engine}{T}
\newcommand{\siml}{\mathtt{sim}}

\newcommand{\deflearner}{\mathtt{\mathbf{L}}}
\newcommand{\deforacle}{\mathtt{\mathbf{O}}}

\newcommand{\inflearner}{\mathtt{LEARN}}
\newcommand{\learner}{\mathtt{learn}}

\newcommand{\orint}{\ensuremath{\mathcal{O}}}
\newcommand{\FIS}{FIS\xspace}
\newcommand{\queries}{\mathtt{\mathbf{Q}}}
\newcommand{\responses}{\mathtt{\mathbf{R}}}
\newcommand{\dialogue}{\mathtt{\mathbf{D}}} % same as \defdialogue

\newcommand{\defquery}{\ensuremath{\mathcal{Q}}}
\newcommand{\defresponse}{\ensuremath{\mathcal{R}}}

% various query types
\newcommand{\qmem}{\ensuremath{q_{\text{mem}}}}
\newcommand{\qposwit}{\ensuremath{q_{\text{wit}}^+}}
\newcommand{\qnegwit}{\ensuremath{q_{\text{wit}}^-}}
\newcommand{\qcorr}{\ensuremath{q_{\text{corr}}}}
\newcommand{\qeq}{\ensuremath{q_{\text{eq}}}}
\newcommand{\qsub}{\ensuremath{q_{\text{sub}}}}
\newcommand{\qdiff}{\ensuremath{q_{\text{diff}}}}
\newcommand{\qccorr}{\ensuremath{q_{\text{ccorr}}}}
\newcommand{\qceq}{\ensuremath{q_{\text{ce}}}}
\newcommand{\qcsub}{\ensuremath{q_{\text{csub}}}}

\newcommand{\satEx}{\vdash} % example satisfies spec

\newcommand{\nex}{\mathtt{negex}}
\newcommand{\cemap}{\mathtt{ce}}
\newcommand{\negset}{\mathtt{negset}}

\newcommand{\scratch}{\mathtt{S}}
\newcommand{\Scratch}{\mathscr{S}}

\newcommand{\powset}[1]{\ensuremath{2^{#1}}}

%\title{A  Formal Theory of Synthesis: Oracle-Guided Inductive Learning}
%\newcommand{\acta}[1]{{\color{blue}#1}}
\newcommand{\acta}[1]{{#1}}

\title{A Theory of Formal Synthesis via Inductive Learning}
%\title{A Theory of Formal Synthesis via Inductive Learning~\footnote{This is extended version of work presented at SYNT'14 workshop and is currently submitted to Acta Informatica.}}
\author{Susmit Jha
\institute{United Technologies Research Center, Berkeley}
\email{jhask@utrc.utc.com}
\and
Sanjit A. Seshia
\institute{EECS, UC Berkeley}
\email{\quad sseshia@eecs.berkeley.edu}
%\institute{IEEE Senior Member~\footnote{The author is currently at UTRC Berkeley. This work was done before author joined UTRC Berkeley.}}
%\email{jha@eecs.berkeley.edu}
}

\begin{document}

\maketitle

\begin{abstract}
Formal synthesis is the process of generating a program satisfying a high-level formal specification.
In recent times, effective formal synthesis methods have been proposed based on 
the use of inductive learning. We refer to this class of methods that learn programs from examples 
as formal inductive synthesis.
In this paper, we present a theoretical framework for formal inductive synthesis. We discuss how
formal inductive synthesis differs from traditional machine learning.
We then describe oracle-guided inductive synthesis (OGIS), a framework that captures a
family of synthesizers that operate by iteratively querying an oracle.
An instance of OGIS that has had much practical impact is counterexample-guided inductive synthesis (CEGIS).
We present a theoretical characterization of CEGIS for learning any program that computes a recursive language.
In particular, we analyze the relative power of CEGIS variants where the types of counterexamples generated
by the oracle varies. We also consider the impact of bounded versus unbounded memory available to the learning algorithm.
In the special case where the universe of candidate programs is finite, we relate the speed of convergence
to the notion of teaching dimension studied in machine learning theory.
Altogether, the results of the paper take a first step towards a theoretical foundation for
the emerging field of formal inductive synthesis.
%This paper is a first step towards identification of a common paradigm of formal synthesis.
%This provides a framework to study theoretical decidability and complexity issues of formal synthesis.
%The theoretical results will enable
% principled design of practical formal synthesis techniques for different applications that combine
%inductive learning and deductive verification.
\end{abstract}

\comment {
Counterexample-guided inductive synthesis ($\cegis$) is used to synthesize programs from a candidate space of programs.
The technique is guaranteed to terminate and synthesize the correct program if the space of candidate programs is finite.
But the technique may or may not terminate with the correct program if the candidate space of programs is infinite.
In this paper, we perform a theoretical analysis of counterexample-guided inductive synthesis technique.
We investigate whether the set of candidate spaces for which the correct program can be synthesized using $\cegis$ depends on the
counterexamples used in inductive synthesis, that is, whether there are
{\it good mistakes} which would increase the synthesis power.
We investigate whether the use of minimal counterexamples instead of arbitrary counterexamples
expands the set of candidate spaces of programs for which inductive synthesis can successfully synthesize a correct program.
 We consider two kinds of  counterexamples:  minimal counterexamples
 and {\it history bounded} counterexamples. The history bounded
 counterexample used in
 any iteration of $\cegis$ is
 bounded by the examples used in previous iterations of inductive synthesis.
 We examine the relative change in power of inductive
 synthesis in both cases. We show that the synthesis technique using
 minimal counterexamples $\mncegis$ has the
 same synthesis power as $\cegis$
 but the synthesis technique using history bounded counterexamples $\hmncegis$
 has different power than that of $\cegis$, but none dominates the other.
}

\section{Introduction}
\label{sec-intro} 
\label{sec:intro}

%Summary of advances and challenges in formal methods.
%Define the notion of ``formal synthesis'' similar to formal verification --- i.e. correct-by-construction
%synthesis of a system from a high-level specification.
The field of formal methods has made enormous strides in recent decades.
Formal verification techniques such as model
checking~\cite{clarke-lop81,queille-sympprog82,clarke-00} and 
theorem proving (see, e.g.~\cite{owre-pvs-cade92,kaufmann-00,gordon-93})  
are used routinely in the computer-aided design of integrated circuits and
have been widely applied to find bugs in software, analyze models of embedded systems,
and find security vulnerabilities in programs and protocols. 
At the heart of many of these advances are computational reasoning engines such as
Boolean satisfiability (SAT) solvers~\cite{malik-cacm09}, 
Binary Decision Diagrams (BDDs)~\cite{bryant-ieeetc86}, and
satisfiability modulo theories (SMT) solvers~\cite{barrett-smtbookch09}.
Alongside these advances, there has been a growing interest in the
synthesis of programs or systems from formal specifications with
correctness guarantees. We refer 
to this area as {\em formal synthesis}. Starting with the seminal work
of Manna and Waldinger on deductive program synthesis~\cite{Manna80}
and Pnueli and Rosner on reactive synthesis from temporal
logic~\cite{pnueli-popl89}, there have been several advances that have
made formal synthesis practical in specific application domains such
as robotics, online education, and end-user programming.

%
%Inductive (program) synthesis, how this is one of the dominant paradigms in 
%formal synthesis.
%Give examples. 
%Major paradigm is CEGIS. Alternatives exist, but they are all instances of
%a more general paradigm of oracle-guided inductive synthesis (learning). 
%
Algorithmic approaches to formal synthesis range over a wide spectrum, from
{\em deductive synthesis} to {\em inductive synthesis}.
In deductive synthesis (e.g.,~\cite{Manna80}), a program is synthesized by 
constructively proving a theorem, employing logical inference and constraint solving.
On the other hand, inductive synthesis~\cite{gold67limit,summers-jacm77,shapiro1982algorithmic} 
seeks to find a program matching a set of input-output examples. At a
high level, it is thus an instance of
learning from examples, also termed as {\em inductive inference} or {\em machine learning}~\cite{angluin-acmcs83,Mitchell-ml}.
Many current approaches to synthesis blend induction and
deduction in the sense that even as they
generalize from examples, deductive procedures are used in the process
of generalization (see~\cite{seshia-dac12,Jha-thesis} for a detailed exposition). 
Even so, the term ``inductive synthesis'' is typically used to refer to all of them. We will
refer to these methods as {\em formal inductive synthesis} to
place an emphasis on correctness of the synthesized artifact.
These synthesizers generalize from examples by searching a restricted space of programs. 
In machine learning, this restricted space is called the {\em concept class}, and each element of that
space is often called a candidate {\em concept}. 
The concept class is usually specified syntactically. 
It has been recognized that this {\em syntax guidance}, also termed as a
{\em structure hypothesis}, can
be crucial in helping the synthesizer converge quickly to the target
concept~\cite{asplos06,seshia-dac12,alur-fmcad13}. 

% Difference with ``standard'' machine learning techniques.
%There are important similarities and also key differences between 
The fields of formal inductive synthesis and 
machine learning have the same high-level goal: to
develop algorithmic techniques for 
{\em synthesizing a concept} (function, program, or classifier) 
{\em from observations} (examples, queries, etc.). 
However, there are also important differences in the problem
formulations and techniques used in both fields. We identify some of
the main differences below:
\begin{mylist}
\item {\em Concept Classes:} In traditional machine learning, the
classes of concepts to be synthesized tend to be specialized,
such as linear functions or half-spaces~\cite{regression}, convex
polytopes~\cite{Hegedus-convex}, 
neural networks of specific forms~\cite{deeplearning-ref},
Boolean formulas in fixed, bounded syntactic forms~\cite{Jackson-dnf}, 
and decision trees~\cite{Quinlan-decisiontrees}.
However, in formal synthesis, the target concepts are 
general programs or automata with constraints or finite bounds imposed
mainly to ensure tractability of synthesis.

\item {\em Learning Algorithms:} 
In traditional machine learning, just as concept classes tend to be
specialized, so also are the learning algorithms for those classes~\cite{Mitchell-ml}.
In contrast, in formal inductive synthesis, the trend is towards using general-purpose
decision procedures such as SAT solvers, SMT solvers, and model
checkers that are not specifically designed for inductive learning. 

\item {\em Exact vs. Approximate Learning:} 
In formal inductive synthesis, there is a strong emphasis on {\em exactly}
learning the target concept; i.e., the learner seeks to find a concept
that is consistent with all positive examples but not with any
negative example. The labels for examples are typically assumed to be
correct. Moreover, the learned concept should satisfy a formal specification. 
In contrast, the emphasis in traditional machine
learning is on techniques that perform {\em approximate} learning,
where input data can be noisy, some amount of misclassification can be
tolerated, there is no formal specification, and the overall goal is to optimize
a cost function (e.g., capturing classification error).

\item {\em Emphasis on Oracle-Guidance:} 
In formal inductive synthesis, there is a big emphasis on learning in
the presence of an oracle, which is typically implemented using a
general-purpose decision procedure or sometimes even a human user.
%Moreover, these oracles are not always black boxes, and synthesis
%procedure can tailor the oracle to the task. 
Moreover, and importantly, the design of this oracle is
part of the design of the synthesizer.
In contrast, in traditional machine learning, the use of oracles is rare, and instead
the learner typically selects examples from a corpus, often drawing
examples independently from an underlying probability distribution. Even when
oracles are used, they are assumed to be black boxes that the learner has
no control over. The oracle is part of the problem definition in machine learning,
whereas in formal inductive synthesis, the design of the oracle is 
part of the solution.

\end{mylist}
The last item, oracle-guidance, is a particularly important difference, and
informs the framework we proposed in this paper.

%What this paper is about
In this paper, we take first steps towards a theoretical framework and
analysis of formal inductive synthesis. %As noted above, 
Most instances of
inductive synthesis in the literature rely on an oracle that answers
different types of queries. In order to capture these various synthesis
methods in a unifying framework, we formalize the notion of 
{\it {oracle-guided inductive synthesis}} ($\ogis$). 
While we defer a detailed treatment of $\ogis$ to
Section~\ref{sec:ogis}, we point out three dimensions in
which $\ogis$ techniques differ from each other:
\begin{mylist}
\item {\em Characteristics of concept class:} The concept class for
synthesis may have different characteristics depending on the
application domain. For instance, the class of programs from which the synthesizer
must generate the correct one may be finite, as in the synthesis of
bitvector programs~\cite{asplos06,jha-icse10,jha-11}, or infinite, as
in the synthesis of guards for hybrid
automata~\cite{jha-cps10,jha-emsoft11}. In the former
case, termination is easily guaranteed, but it is not obvious for
the case of infinite-size concept classes.

\item {\em Query types:} Different applications may impose differing
constraints on the capabilities of the oracle. In some cases, the
oracle may provide only positive examples. When verification engines
are used as oracles, as is typical in formal synthesis, the oracle may
provide both positive examples and counterexamples which refute candidate 
programs. More fine-grained properties
of queries are also possible --- for instance, an oracle may permit
queries that request not just any counterexample, but one that
is ``minimum'' according to some cost function.

\item {\em Resources available to the learning engine:} As noted
above, the learning algorithms in formal inductive synthesis tend to
be general-purpose decision procedures. Even so, for tractability,
certain constraints may be placed on the resources available to the
decision procedure, such as time or memory available. For example, one
may limit the decision procedure to use a finite amount of memory,
such as imposing an upper bound on the number of (learned) clauses for a
SAT solver.

\end{mylist}
We conduct a theoretical study of $\ogis$ by examining the impact of
variations along the above three dimensions. Our work has a particular
focus on {\em counterexample-guided inductive synthesis}
(CEGIS)~\cite{asplos06}, a particularly popular and effective
instantiation of the $\ogis$ framework.
When the concept class is infinite size, termination of CEGIS is
not guaranteed. We study the relative strength of different versions
of CEGIS, with regards to their termination guarantees. The versions
vary based on the type of counterexamples one can obtain from the
oracle. We also analyze the impact of finite versus infinite memory
available to the learning algorithm to store examples and hypothesized programs/concepts.
Finally, when the concept class is finite size, even though
termination of CEGIS is guaranteed, the speed of termination can
still be an issue. In this case, we draw a connection between the
number of counterexamples needed by a CEGIS procedure and the notion
of {\em teaching dimension}~\cite{goldman-td} previously introduced in
the machine learning literature.
 
To summarize, we make the following specific contributions in this paper:
\begin{mylist}
\item We define the {\em formal inductive synthesis} problem and
propose a class of solution techniques termed as {\it {Oracle-Guided Inductive Synthesis} } ($\ogis$).
We illustrate
how $\ogis$ generalizes instances of concept learning in machine learning/artificial intelligence as well as 
synthesis techniques developed using formal methods. We provide examples of synthesis techniques from literature
and show how they can be represented as instantiations of \ogis.

\item We perform a theoretical comparison of different instantiations
of the $\ogis$ paradigm in terms of their {\em synthesis power}. 
The synthesis power of an $\ogis$ technique is defined as the class of
concepts/programs (from an infinite concept class) that can be
synthesized using that technique. We
%synthesized in finite time using that technique. We
establish the following specific novel theoretical results:
\begin{myitemize}
\item For learning engines that can use
unbounded memory, the power of synthesis engines using oracle  
that provides arbitrary counterexamples or minimal counterexamples is
the same. But this is strictly more  
powerful than using oracle which provides counterexamples which are bounded by
the size of the positive examples.

\item For learning engines that use
bounded memory, the power of synthesis engines using arbitrary
counterexamples 
or minimal counterexamples is still the same. 
The power of synthesis engines using counterexamples bounded by
positive examples is not comparable 
to those using arbitrary/minimal counterexamples. Contrary to
intuition, using counterexamples bounded by positive 
examples allows one to synthesize programs from program classes which
cannot be synthesized using arbitrary or minimal counterexamples.

\end{myitemize}

\item For finite concept classes, we prove the NP hardness of the problem
of solving the formal inductive synthesis problem for finite domains
for a large class of $\ogis$ techniques.
We also show that the teaching
dimension~\cite{goldman-td} of the concept class is a lower bound on
the number of counterexamples needed for a CEGIS technique to
terminate (on an arbitrary program from that class).

\end{mylist}

%FIXME BELOW
%We present a formal synthesis paradigm .. minimal counterexamples (end of list)

The rest of the paper is organized as follows.
We first present the 
{\it {Oracle Guided Inductive Synthesis}} ($\ogis$) paradigm in
Section~\ref{sec-ogis}. We discuss related work in Section~\ref{sec-rel}. We present the notation
and definitions used for theoretical analysis in Section~\ref{sec-not} followed by the theoretical
results and their proofs in Section~\ref{sec-res} and Section~\ref{sec-finres}. 
We summarize our results and discuss open problems in
Section~\ref{sec-con}. A preliminary version of this paper appeared in
the SYNT 2014 workshop~\cite{jha-synt14}.

\section{Oracle-Guided Inductive Synthesis: $\ogis$} 
\label{sec-ogis}
\label{sec:ogis}

We begin by defining some basic terms and notation.
Following standard terminology in
the machine learning theory community~\cite{angluin88}, we define 
a concept $\concept$ as a set of examples drawn from a domain of examples
$\domain$. In other words, $\concept \subseteq \domain$. An example
$\ex \in \domain$ can be viewed as an input-output behavior
of a program; for example, a (pre, post) state for a terminating
program, or an input-output trace for a reactive program.
\revision{Thus, in this paper, we ignore syntactic issues in representing concepts
and model them in terms of their semantics, as a set of behaviors.} 
%We make this choice to facilitate the analysis of 
The set of all possible concepts is termed the {\em concept class},
denoted by $\class$. Thus, $\class \subseteq \powset{\domain}$.
\revision{The concept class may either be specified in the original
synthesis problem or arise as a result of a structure hypothesis that
restricts the space of candidate concepts.}
Depending on the application domain, $\domain$ can be finite or
infinite. The concept class $\class$ can also be finite or
infinite. Note that it is possible to place (syntactic) restrictions 
on concepts so that $\class$ is finite even when $\domain$ is infinite. 

One key distinguishing characteristic between traditional machine
learning and formal inductive synthesis is the presence of 
an explicit formal
specification in the latter. We define a specification $\Spec$ 
as a set of ``correct'' concepts, i.e., $\Spec \subseteq
\class \subseteq \powset{\domain}$. 
Any example $\ex \in \domain$ such that there is a concept $c \in
\Spec$ where $\ex \in c$ is called a {\em positive example}. 
Likewise, an example $\ex$ that is not contained
in any $c \in \Spec$ is a {\em negative example}. 
We will write $\ex \satEx \Spec$ to denote that $\ex$ is a positive example.
\acta{
An example that is specified to be either positive or negative is 
termed a {\em labeled example}.
}

Note that standard practice in formal methods is to define a specification
as a set of examples, i.e., $\Spec \subseteq \domain$. This is
consistent with most properties that are {\em trace
properties}, where $\Spec$ represents the set of allowed behaviors ---
traces, (pre,post) states, etc. --- of the program. 
However, certain practical properties of systems, e.g.,
certain security policies, are not trace properties (see, e.g.,~\cite{clarkson-jcs10}), and
therefore we use the more general definition of a specification.

We now define what it means for a concept to satisfy $\Spec$.
Given a concept $\target \in \class$ we say that $\target$ satisfies
$\Spec$ iff $\target \in \Spec$.
If we have a complete specification, it means that $\Spec$ is a
singleton set comprising only a single allowed concept.
In general, $\Spec$ is likely to be a partial specification
that allows for multiple correct concepts.

% old defn of spec
\comment{
One key distinguishing characteristic between traditional machine
learning and formal inductive synthesis is the presence of a formal
specification in the latter. We define a specification $\Spec$ 
as a set of examples, i.e., $\Spec \subseteq \domain$. 
\revision{Thus, we follow standard practice in formal methods by defining the 
specification $\Spec$ as the set of allowed examples --- behaviors, traces, (pre,post) states, etc. ---
of the program.}
Any example $\ex \in \domain$ such that $\ex \in \Spec$ is called a {\em
positive example}. Likewise, an example $\ex \not \in
\Spec$ is a {\em negative example}. 

Note that it is also possible to define the specification as a set
of permitted concepts, i.e., an element of $2^\class$. 
\revision{However, with our focus on defining concepts semantically rather
than syntactically, this approach is no different from that given in
the preceding paragraph.}
Consider what happens if concepts $c_1$ and $c_2$ are both in the specification
set. Then, so should $c_1 \cup c_2$ since the elements of $c_1$ and $c_2$
are allowed behaviors and thus all behaviors in the union should also be allowed.
Hence, this definition reduces to the union of all elements of the
set of permitted concepts, which is equivalent to the notion presented in the preceding
paragraph defining $\Spec \subseteq \domain$.
%
% For inductive invariants: semantics is defined in terms of pairs of states:
% - All pairs (s,t) where t is a next state of s, and the union of all
% the s states contains the union of all the t states
%
%

We now define what it means for a concept to satisfy $\Spec$.
If the specification $\Spec$ is complete, we say that a concept
$\target \in \class$ satisfies $\Spec$ iff $\target = \Spec$. 
If, however, as in most formal verification tasks, we only
have a partial specification, then we say that $\target$ satisfies
$\Spec$ iff $\target \subseteq \Spec$.
In other words, a partial specification
$\Spec$ may admit more behaviors than the
synthesized target. This is consistent with the standard conventions
in formal verification.
} % end comment

We now present a first definition of 
the {\em formal inductive synthesis} problem:
\begin{quote}
Given a {\em concept class} $\class$ and a domain of examples
$\domain$, 
the formal inductive synthesis problem is to find, 
using only a subset of examples from $\domain$,
a {\em target concept} $\target \in \class$ that
satisfies a specification $\Spec \subseteq \class$.
\end{quote}
\revision{This definition is reasonable in cases where only elements of
$\domain$ can be accessed by the synthesis engine --- the common case
in the use of machine learning methods.
However, existing formal verification and synthesis methods can use a somewhat richer set of inputs,
including Boolean answers to equivalence (verification) queries with respect to
the specification $\Spec$, as well as verification queries with respect to other
constructed specifications. Moreover, the synthesis engine typically does not 
directly access or manipulate the specification $\Spec$. 
In order to formalize this richer source of inputs as well as the indirect access
to $\Spec$, we introduce the concept
of an {\em oracle interface}. 
\begin{definition}
An {\em oracle interface} $\orint$ is a subset of $\defquery \times \defresponse$ where
% $\langle \defquery, \defresponse \rangle$ where
$\defquery$ is a set of query types, 
$\defresponse$ is a corresponding set of response types, and
$\orint$ defines which pairs of query and response types are semantically well-formed.
\qed
\label{def:oracle-interface}
\end{definition}
A simple instance of an oracle interface is one with a single query \acta{type} that
returns positive examples from $\domain$. In this case, the synthesis problem is to learn
a correct program from purely positive examples. 
The more common case in machine learning (of classifiers) is to have an oracle that supports two
kinds of queries, one that returns positive examples and another that returns negative examples.
As we will see in Sec.~\ref{sec:ogis-features}, there are richer types of queries that are
commonly used in formal synthesis. For now, we will leave $\defquery$
and $\defresponse$ as abstract sets.

\acta{
Implementations of the oracle interface can be nondeterministic algorithms
which exhibit nondeterministic choice in the stream of queries and responses.
% to a query in every execution of the oracle. 
We define
the notion of {\it nondeterministic mapping} to represent such algorithms.

\begin{definition}
A nondeterministic mapping $F: I \rightarrow O $ takes as input 
$i \in I$ and produces an output $o \in O(i) \subseteq O$ where 
$O(i)$ is the set of all valid outputs corresponding to input $i$
in $F$. 
\end{definition}
}

With this notion of an oracle interface, we now introduce our definition of formal
inductive synthesis (\FIS):
\acta{
\begin{definition}
Consider a {\em concept class} $\class$, a domain of examples
$\domain$, a specification $\Spec$, and an oracle interface $\orint$. 
The formal inductive synthesis problem is to find
a {\em target concept} $\target \in \class$ that
satisfies $\Spec$,
given only $\orint$ and $\class$. 
In other words, 
 $\domain$ and $\Spec$ can be accessed  only through $\orint$.
\qed
\label{def:formal-inductive-synthesis}
\end{definition}
Thus, an instance of \FIS is defined in terms of the tuple 
$\langle \class, \domain, \Spec, \orint \rangle$.
We next introduce a family of solution techniques for the \FIS problem.
}
A FIS problem instance defines an oracle interface 
and a solution technique for that problem instance 
can access the domain $\domain$ 
and the specification $\Spec$ only through that interface. 
%For example,
%the interface might include a query type which provides the specification
%$\Spec$ as response. Another query type could admit positive examples
%from the domain as valid responses. 
}

\comment{
The above definition can be restricted by making certain assumptions
about the learner and the oracle. For example, we might assume that
the oracle does not recieve the complete dialogue history, but only
its {\it summary} in the form of the candidate hypothesis generated
by the learner. A reasonable assumption on the learner could be that 
it only uses  the  responses to the queries in generating the
next hypothesis. 
Further, let us assume that we have only two kinds of queries: $q_1,q_2$.
In such a scenario, we can decompose the oracle into two different
oracles $\deforacle_1, \deforacle_2$ answering the queries $q_1$ and $q_2$ respectively. Often, the response is just an example $x \in \domain$
and one could assume that responses to both queries are from the set
$\domain$. Further, the oracle might query both oracles for each
hypothesis instead of selecting an oracle based on the hypothesis class.
Under these assumptions, the above definition would be restricted to: 
\begin{quote}
\todo{Problem Definition}: Given a {\em concept class} $\class$ and a domain of examples
$\domain$, 
the formal inductive synthesis problem is to find, 
a {\em target concept} $\target \in \class$ that
satisfies a specification $\Spec \subseteq \domain$ 
\todo{
using the two 
oracles $\deforacle_1: \class \rightarrow \domain$ and 
 $\deforacle_2: \class \rightarrow \domain$, and
a Learner $\deflearner: \domain^* \times \domain^* \rightarrow \class$.
}
\end{quote}

}

%
% Slightly elaborated definition
%\begin{quote}
%Given a {\em concept class} $\class$ and a domain of examples
%$\domain$, 
%the formal inductive synthesis problem is to find, using a
% subset of examples from $\domain$, a {\em target
%concept} $\target \in \class$ that
%satisfies a specification $\Spec \subseteq \domain$.
%\end{quote}
%

\subsection{\ogis: A family of synthesizers}
\label{sec:ogis-features}

Oracle-guided inductive synthesis (\ogis) is an approach to solve the
formal inductive synthesis problem defined above, encompassing a
family of synthesis algorithms. 

%%%%%%%%%%%
\comment{
\begin{figure}[htb!]
\centering%
\scalebox{0.7}{\includegraphics{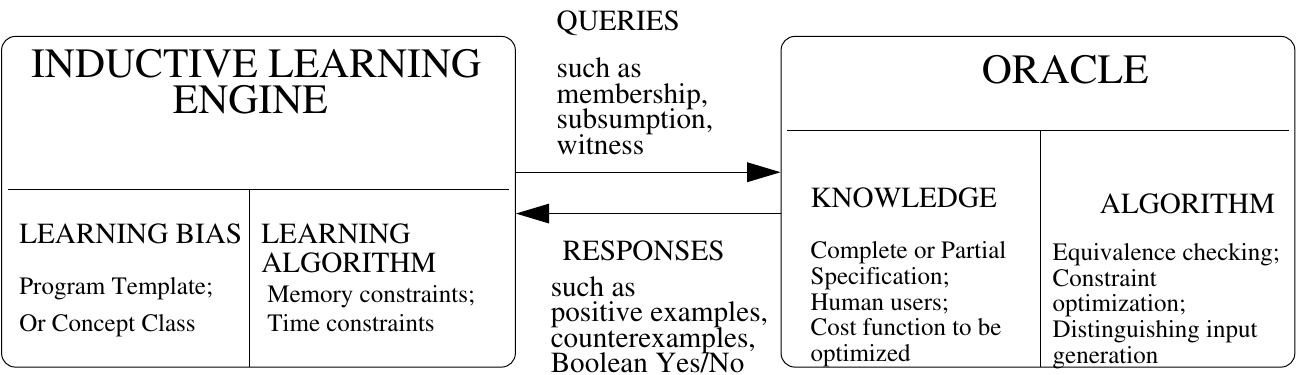}}
\caption{Oracle Guided Inductive Synthesis}
\label{fig:ogis}
\end{figure}
} % end comment
%%%%%%%%%%%%%%%%
\begin{figure}[htb!]
\centering
\includegraphics[height=5cm]{ogisfig.pdf}
\caption{Oracle Guided Inductive Synthesis}
\label{fig:ogis}
\end{figure}

As illustrated in Figure~\ref{fig:ogis}, \ogis comprises
two key components: an {\em inductive learning engine} (also sometimes
referred to as a ``Learner'')
and an {\em oracle} (also referred to as a ``Teacher''). 
The interaction between the learner and the oracle is in the form of
a {\em dialogue} comprising 
{\em queries} and {\em responses}. The oracle is defined by the types of
queries that it can answer, and the properties of its responses.
Synthesis is thus an iterative process: 
%which involves a ``conversation'' between the learner and oracle: in
at each step, 
the learner formulates and sends a query to the
oracle, and the oracle sends its response.
For formal synthesis, the oracle is also tasked with determining
whether the learner has found a correct target concept. Thus, the
oracle implicitly or explicitly maintains the specification $\Spec$
and can report to the learner when it has terminated with a correct
concept. 

\revision{
We first formalize the notions of learner and oracle.
\acta{Let $\queries$ be a set of queries of types $\defquery$,
and $\responses$ be a set of responses of types
$\defresponse$.} 
We allow both $\queries$
and $\responses$ to include a special element $\bot$ indicating the
absence of a query or response.
An element $(q,r) \in \queries \times \responses$ 
is said to {\em conform} to an oracle interface $\orint$ if
$q$ is of type $q_t$, $r$ is of type $r_t$ and $(q_t,r_t) \in \orint$.}
\revise{
A {\em valid dialogue pair} for an oracle interface $\orint$, 
denoted $d$, is a query-response
pair  $(q,r)$ 
such that $q \in \queries$, $r \in \responses$ and
 $(q,r) $ conforms to $\orint$.
 The set of valid dialogue pairs for an oracle interface
 is denoted by $\dialogue$ and }
%A {\em valid dialogue  set} for an oracle interface $\orint$, 
%denoted $\dialogue$, is a set of (query,response)
%pairs such that $\dialogue \subseteq \queries \times \responses$ where
%very $(q,r) \in \dialogue$ conforms to $\orint$.
\revise{$\dialogue^*$ denotes the set of {\em valid dialogue sequences}
--- finite sequences of valid dialogue pairs.
If $\dseq \in \dialogue^*$ is a valid dialogue sequence,
$\dseq[i]$ denotes a sub-sequence of $\dseq$ of length $i$
and $\dseq(i)$ denotes the $i$-th dialogue pair in the sequence.}

\begin{definition}
An {\em oracle} is a \acta{ nondeterministic } mapping $\deforacle: \dialogue^* \times \queries
\rightarrow \responses$.
\acta{ 
$\deforacle$ is {\it consistent} with a given interface $\orint$
iff given a valid dialogue sequence $\dseq$ and a query $q$ of type
$q_t$, $\deforacle(\dseq,q)$ is a response of type $r_t$ where
$(q_t, r_t) \in \orint$.
% SAS:TEXT BELOW DOES NOT MAKE SENSE TO ME
%The oracle is consistent with a given interface $\orint$
%if and only if for all $(q,r) \in \dialogue$, the 
%respective query and response types $(q_t, r_t) \in \orint$, and
%for all $q \in \queries, r \in \responses$, the respective query
%and response types $(q_t, r_t) \in \orint$.  
%
A {\em learner} is a 
nondeterministic mapping $\deflearner: \dialogue^* \rightarrow \queries \times \class$. 
$\deflearner$ is {\it consistent} with a given interface $\orint$
iff given a valid dialogue sequence $\dseq$, 
$\deflearner(\dseq) = (q,c)$ where $q \in \queries$ has type $q_t$ such
that there exists a response type $r_t$ s.t. $(q_t, r_t) \in \orint$.
%
% In other words, the learner only asks questions that it can expect an answer to.
%
% SAS:TEXT BELOW DOES NOT MAKE SENSE TO ME
%The learner is consistent with a given
%oracle interface $\orint$ if and only if 
%for all $(q,r) \in D$, the 
%respective query and response types $(q_t, r_t) \in \orint$,
%and for all $q \in \queries$ of query type $q_t$, there exists a 
%response type $r_t$ such that $(q_t,r_t) \in \orint$. 
}
\qed
\label{def:ogisoraclelearn}
\end{definition}
We will further assume in this paper that the oracle $\deforacle$ is {\em sound}, meaning
that it gives a correct response to every query it receives. For example, if asked for
a positive example, $\deforacle$ will not return a negative example instead. 
This notion is left informal
for now, since a formalization requires discussion of specific queries
and is orthogonal to the results in our paper.

Given the above definitions, we can now define the \ogis approach formally.
\acta{
\begin{definition}
Given a \FIS $\langle \class, \domain, \Spec, \orint \rangle$, 
%a set of query types $\defquery$, and a set of response types $\defresponse$,
an {\em oracle-guided inductive synthesis} (\ogis) procedure (engine)
is a tuple $\langle \deforacle, \deflearner \rangle$,
comprising 
an {\em oracle} $\deforacle: \dialogue^* \times \queries \rightarrow \responses$ and 
a {\em learner} $\deflearner: \dialogue^* \rightarrow \queries \times \class$,
where the oracle and learner are consistent with the given oracle
interface $\orint$ as defined above.
%where $\queries$ is a set of instances of queries of types $\defquery$,
%$\responses$ is a set of instances of responses of types
%$\defresponse$,
%and $\dialogue$ is a valid dialogue set for $\orint$.
\qed
\label{def:ogis}
\end{definition}
In other words, an \ogis engine comprises an oracle $\deforacle$ that
maps a ``dialogue history'' and a current query to a response, and
a learner $\deflearner$ that, given a dialogue history, outputs a
hypothesized concept along with a new query. 
Upon convergence, the final concept output by $\deflearner$ is the
output of the \ogis procedure.

We also formalize the definition of when an \ogis engine solves an \FIS problem.
\begin{definition}
%the \ogis procedure is a tuple $\langle \deforacle, \deflearner \rangle$.
 A dialogue sequence $\dseq \in \dialogue^*$  
 corresponding to \ogis procedure $\langle \deforacle, \deflearner \rangle$ is such that
 $\dseq(i)$ is $(q,r)$ where $\deflearner(\dseq[i-1]) = (q,c)$
 for some query $q \in \queries$ and some concept $c \in \class$,
 and $\deforacle(\dseq[i-1], q) = r $. 

 The \ogis procedure  $\langle \deforacle, \deflearner \rangle$ is said
 to {\em solve} the \FIS problem {\em with dialogue sequence 
 $\dseq$} if there exists an $i$ such that 
 $\deflearner(\dseq[i]) = (q,c)$, $c \in \class$ and $c$ satisfies
 $\Spec$, and for all $j > i$, 
 $\deflearner(\dseq[j]) = (q',c)$, that is, the \ogis procedure 
 converges to a concept $\concept \in \class$ that satisfies $\Spec$.

 The \ogis procedure  $\langle \deforacle, \deflearner \rangle$ is said
 to {\em solve} the \FIS problem if there exists a dialogue sequence $\dseq$
 with which it solves that problem.

\qed
\label{def:ogis-solve}
\end{definition}
}

The convergence and computational complexity of an \ogis procedure is
determined by the nature of the \FIS problem along with three 
factors: (i) the complexity of each
invocation of the learner $\deflearner$; (ii) the complexity of each invocation of
the oracle $\deforacle$, and (iii) the number of iterations (queries,
examples) of the loop before convergence. 
We term first two factors as {\em learner
complexity} and {\em oracle complexity}, and the third as {\em sample
complexity}. Sometimes, in \ogis procedures, oracle complexity is
ignored, so that we simply count calls to the oracle rather than the
time spent in each call.

An \ogis procedure is defined by properties of the learner and the
oracle. Relevant properties of the learner include (i) its {\em inductive bias} 
that restricts its search to a particular
family of concepts and a search strategy over this space, and
(ii) {\em resource constraints}, such as finite or
infinite memory. 
Relevant properties of the oracle include the types of queries it
supports and of the responses it generates.
We list below the common query and response types. In each case, the query
type is given in square brackets 
as a template comprising a query name along with the types of
the formal arguments to that query, e.g., examples $\ex$ or concepts $\concept$.
An instance of each of these types, that is, a query, 
is formed by substituting a specific arguments (examples, concepts, etc.) for 
the formal arguments.
\begin{mylist}
\item {\em Membership query:} [$\qmem(\ex)$] 
The learner selects an example $\ex$ and
asks ``Is $\ex$ positive or negative?'' The oracle responds with a
label for $\ex$, indicating whether $\ex$ is a positive or negative example.

\item {\em Positive witness query:} [$\qposwit$] 
The learner asks the oracle ``Give me a
positive example''. The oracle responds with an example $\ex \satEx
\Spec$, if one exists, and with $\bot$ otherwise.

\item {\em Negative witness query:} [$\qnegwit$] 
The learner asks the oracle ``Give me a
negative example''. The oracle responds with an example $\ex \not \satEx
\Spec$, if one exists, and with $\bot$ otherwise.

\acta{
\item {\em Counterexample query:} [$\qceq(\concept)$] 
The learner proposes a candidate
concept $\concept$ and asks ``Does the oracle have a counterexample 
demonstrating that $\concept$ is incorrect?'' (i.e., ``proof that $ \concept \not \in \Spec$?''). 
%If so, the oracle responds ``YES'' (and the synthesis can terminate). 
If the oracle can find a counterexample $\ex$ to 
$\concept \not \in \Spec$, 
the oracle provides the counterexample.
Otherwise, if the oracle cannot find any counterexample, it
responds with $\bot$.
%showing that $\concept \not \in \Spec$, it responds ``YES''.
Such a query allows us to accurately model the working of counterexample-guided
synthesis techniques such as~\cite{jin-hscc13} where the verification problem
is undecidable but, if a counterexample is reported, it is a true counterexample.
% and~\cite{solar-sigops06}
%where for efficiency reasons the underlying solver might search only over a
%subset of the solution space.
} 
%Here $\ex$ is an example such that either $\ex \in \concept$ but $\ex \not \satEx
%\Spec$, or $\ex \not \in \concept$ and there exists some other concept
%$\concept' \in \Spec$ containing $\ex$.

\item {\em Correctness query:} [$\qcorr(\concept)$] 
The learner proposes a candidate
concept $\concept$ and asks ``Is $\concept$ correct?'' (i.e., ``does
it satisfy $\Spec$?''). 
If so, the oracle responds ``YES'' (and the synthesis can terminate). 
If it is not so,
the oracle responds ``NO'' and provides the counterexample.
Here $\ex$ is an example such that either $\ex \in \concept$ but $\ex \not \satEx
\Spec$, or $\ex \not \in \concept$ and there exists some other concept
$\concept' \in \Spec$ containing $\ex$. 
\acta{This query is a stronger query than
counterexample query as it is guaranteed to provide a counterexample
whenever the proposed $\concept$ is not correct.}
 
For the special case of trace properties, the correctness query can
take on specific forms. One form is termed the {\em equivalence
query}, denoted $\qeq$, where the counterexample is in the symmetric 
difference of the single correct target concept and $\concept$.
% an example
%$\ex \in \concept \ominus \Spec$ where $\ominus$ denotes symmetric
%difference. This query is relevant in the case when the specification
%$\Spec$ is complete.
The other is termed the {\em subsumption query}, denoted $\qsub$,
where the counterexample is a negative example present in $\concept$,
and is used when $\Spec$ is a partial specification admitting several correct concepts. It is important to note that, in the general case, a verification query does not, by itself,
specify any label for a counterexample. One may need an additional
membership query to generate a label for a counterexample.

\item {\em Crafted Correctness (Verification) query:} [$\qccorr(\hat{\concept},\hat{\Spec})$]
As noted earlier, oracles used in formal inductive synthesis tend to be general-purpose decision
procedures. Thus, they can usually answer not only verification queries with respect to the
specification $\Spec$ for the overall FIS problem, but also verification queries
for specifications crafted by the learner. We refer to this class of queries as 
{\em crafted correctness/verification queries}. The learner asks ``Does $\hat{\concept}$ satisfy
$\hat{\Spec}$?'' for a crafted specification $\hat{\Spec}$ and a crafted concept $\hat{\concept}$.

As for $\qcorr$ one can define as special cases a crafted equivalence query type $\qceq$
and a crafted subsumption query type $\qcsub$.

% begin comment
\comment{
\item {\em Equivalence query:} The learner proposes a candidate
concept $\concept$ and asks ``Is $\concept = \Spec$?'' If so,
the oracle responds ``YES'' (and the synthesis can terminate). If not,
the oracle responds ``NO'' and provides a counterexample, an example
$\ex \in \concept \ominus \Spec$ where $\ominus$ denotes symmetric
difference. This query is relevant in the case when the specification
$\Spec$ is complete.

\item {\em Subsumption query:} This query is similar to the case of
the equivalence query but in the case when $\Spec$ is
partial. Formally, the learner proposes a candidate concept $\concept$
and asks ``Is $\concept \subseteq \Spec$?'' If so, the oracle responds
``YES'' (and synthesis can terminate). Otherwise, the oracle responds ``NO'' and provides a
counterexample, an element $\ex \in \concept \setminus \Spec$.
}
% end comment

\item {\em Distinguishing input query:} [$\qdiff(\ExSet,\concept)$]
In this query, the learner supplies a finite set of examples $\ExSet$ and a
concept $\concept$, where $\ExSet \subseteq \concept$, and asks ``Does
there exist another concept $\concept'$ s.t. $\concept \not =
\concept'$ and $\ExSet \subseteq \concept'$?'' If so, the oracle
responds ``YES'' and provides both $\concept'$ and an example $\ex \in
\concept \ominus \concept'$. The example $\ex$ forms a so-called
``distinguishing input'' that differentiates the two concepts
$\concept$ and $\concept'$. 
If no such $\concept'$ exists, the oracle responds ``NO''.

% begin comment
\comment{
\item {\em Distinguishing input query:} ($\qdiff$)
In this query, the learner supplies a finite set of examples $\ExSet$ and a
concept $\concept$, where $\ExSet \subseteq \concept$, and asks ``Does
there exist another concept $\concept'$ s.t. $\concept \not =
\concept'$ and $\ExSet \subseteq \concept'$?'' If so, the oracle
responds ``YES'' and provides both $\concept'$ and an example $\ex \in
\concept \ominus \concept'$. The example $\ex$ forms a so-called
``distinguishing input'' that differentiates the two concepts
$\concept$ and $\concept'$. 
If no such $\concept'$ exists, the oracle responds ``NO''.
} % end comment

The distinguishing input query has been found useful in scenarios
where it is computationally hard to check correctness using the
specification $\Spec$, such as in malware deobfuscation~\cite{jha-icse10}.

\end{mylist}
The query/response types $\qmem$, $\qposwit$, $\qnegwit$, $\qceq$, $\qcorr$, $\qccorr$ and
$\qdiff$ listed above are not meant to be exhaustive. Any subset of such
types can form an oracle interface $\orint$. 
We note here that, in the machine learning theory community, there
have been thorough studies of query-based learning; see Angluin's
review paper~\cite{Angluin-queries} for details. However, in our formalization
of \ogis, new query types such as $\qccorr$ and $\qdiff$ are possible due
to the previously-identified key differences with traditional machine learning
including the general-purpose nature of oracle implementations and
the ability to select or even design the oracle.
%exert control over the responses provided by the oracle.
%
Moreover, as we will see, our theoretical analysis 
raises the following questions that are pertinent in the
setting of formal synthesis where the learner and oracle are typically
implemented as general-purpose decision procedures:
\begin{myitemize}
\item {\em Oracle design:} 
When multiple valid responses can be made to a query, which ones are
better, in terms of convergence to a correct concept (convergence and complexity)?

\item {\em Learner design:} 
How do resource constraints on the learner or its choice of search
strategy affect convergence to a correct concept?

\end{myitemize}

\subsection{Examples of \ogis}
\label{sec:ogis-examples}

We now take three example synthesis techniques previously presented in literature and illustrate
how they instantiate the $\ogis$ paradigm. These techniques mainly differ in the oracle interface
that they employ.

\begin{example} Query-based learning of automata~\cite{angluin1987learning}:\\
% Use \em below to avoid the whole para becoming italicized -- just a workaround!!
{\em Angluin's classic work on learning deterministic finite automata
(DFAs) from membership and equivalence
queries~\cite{angluin1987learning} is an instance of \ogis with $\orint = \{\qmem, \qeq\}$.
The learner is a custom-designed algorithm called $L^*$, whereas the oracle is
treated as a black box that answers the membership and equivalence
queries; in particular, no assumptions are made about the form of counterexamples.
Several variants of $L^*$ have found use in the formal verification
literature; see~\cite{giannakopoulou-fmsd08} for more information.}
\end{example}

\begin{example} Counterexample-guided inductive synthesis (CEGIS)~\cite{asplos06}:\\
\em{
\acta{
 CEGIS was originally proposed as an algorithmic method for program
synthesis where the specification is given as a reference program and
the concept class is defined using a partial program, also referred to
as a ``sketch''~\cite{asplos06}. 
It has since proved very versatile,
also applying to partial specifications (see, e.g.,~\cite{jin-hscc13}) 
%(via more general counterexample queries, see, e.g.,~\cite{jin-hscc13}), 
and other ways of providing syntax guidance; see~\cite{alur-fmcad13} for a more detailed treatment.
In CEGIS, the learner (synthesizer) interacts with a ``verifier' that can take in a candidate program and
a specification, and 
try to find a counterexample showing that the candidate program
does not satisfy the specification. 
In CEGIS, the learner is typically implemented on top of a
general-purpose decision procedure such as a SAT solver, SMT solver,
or model checker. The oracle (verifier) is also implemented similarly. 
In addition to a counterexample-generating oracle, many instances of
CEGIS also randomly sample positive examples (see Sec. 5.4 of~\cite{asplos06}
and Fig. 3 of~\cite{jin-hscc13}). Moreover, the counterexample-generating
oracle is not required to be a sound verifier that can declare correctness
(e.g., see~\cite{jin-hscc13}).
Thus, we model CEGIS as an instance of \ogis 
with $\orint = \{\qposwit, \qceq\}$.
}
%with an original focus on equivalence queries~\cite{asplos06}. 
%where the oracle supports witness and equivalence queries. 

\acta{
As noted earlier, if the verifier is sound (can prove correctness of candidate
concept), then $\qceq$ can be substituted by $\qcorr$.} Moreover,
general-purpose verifiers 
typically support not only correctness queries with respect to the original specification,
but also crafted correctness queries, as well as membership queries,
which are special cases of the verification problem where the specification 
is checked on a single input/output behavior. We term an instantiation of CEGIS with
these additional query types as {\em generalized CEGIS},
which has an oracle interface
$\orint = \{\qposwit, \qcorr, \qccorr, \qmem\}$. We will restrict our
attention in this paper to the standard CEGIS.}

%The internal heuristics of the solvers can determine the properties
%of the learner and those of the responses from the oracle.}
\end{example} 

\begin{example} Oracle-guided program synthesis using distinguishing inputs~\cite{jha-icse10}:\\
{\em Our third example is an approach to program synthesis that uses distinguishing inputs when 
a complete specification
is either unavailable or it is expensive to verify a candidate program
against its specification~\cite{jha-icse10}. 
In this case,
distinguishing input queries, combined with witness and membership queries, provide a way to quickly generate a
corpus of examples that rule out incorrect programs. When there is only a single
program consistent with these examples, only then does a correctness 
query need to be made to ascertain its correctness. Thus, the oracle interface 
$\orint = \{ \qposwit, \qdiff, \qmem, \qcorr \}$ with $\qcorr$ being used sparingly. 
The learner and the oracle are implemented using SMT solving.}
\end{example}

%\acta{Removing ICE as example}
\comment{ \begin{example} ICE learning for invariant synthesis~\cite{garg-cav14}:\\
{\em Recently, an approach to invariant generation has been proposed that
uses learning from ``implications, counterexamples, and examples'' (ICE)
--- positive and negative examples (states)
coupled with queries to a solver to determine
whether a hypothesized invariant is inductive. When the latter query is
answered in the negative, it is accompanied by a counterexample that is in
the form of a pair of states (seemingly different from the positive/negative
examples that are single states), and which also does not indicate any specific
+/- label for these states. However, ICE learning is also an instance of
\ogis, when one observes that correctness queries in general do not provide labels.
Thus, $\orint = \{ \qposwit, \qnegwit, \qcorr \}$. Additionally, one can lift
the domain of examples from single states to pairs of states, and define an
corresponding concept class equivalent to the class of candidate invariants.}
\end{example} }

%Our focus in this paper is on CEGIS, one of the most popular and
%effective instantiations of the \ogis framework. We describe next the
%variations of CEGIS of which we perform a theoretical analysis.

\subsection{Counterexample-Guided Inductive Synthesis (CEGIS)}

Consider the CEGIS instantiation of the \ogis framework.
In this paper, we consider a general setting where the concept class
$\class$ is the set of programs corresponding to the set of {\em
recursive (decidable) languages}; thus, it is infinite. 
The domain $\domain$ of examples is also infinite. 
%The language is defined over an infinite alphabet and the set of 
%recursive languages (and hence, the programs) is also infinite. 
We choose such an expressive concept class and domain 
because we want to compare how the power of CEGIS varies as we vary
the oracle and learner. More specifically, we vary the {\em nature of
responses} from the oracle to 
correctness % subsumption 
and witness queries, and the {\em memory available} to the learner.

For the oracle, we consider four different types of counterexamples 
that the oracle can provide in response to a 
correctness  % subsumption
query. Recall that in formal synthesis, oracles are general-purpose
verifiers or decision procedures whose internal heuristics may
determine the type of counterexample obtained.
Each type describes a different oracle and hence, a different
flavor of CEGIS. Our goal is to compare these synthesis techniques and establish whether
one type of counterexample allows the synthesizer to successfully learn more programs than 
the other. The four kinds of counterexamples considered in this paper are as follows:
\begin{mylist}
\item {\it Arbitrary counterexamples:} This is the ``standard'' CEGIS
technique (denoted $\tcegis$) that makes no assumptions on the form of
the counterexample obtained from the oracle. Note however that our
focus is on an infinite concept class, whereas most practical
instantiations of CEGIS have focused on finite concept classes; thus,
convergence is no longer guaranteed in our setting. 
This version of CEGIS serves as the baseline for comparison against other synthesis techniques.

\item {\it Minimal counterexamples:} 
%For any total order on $\domain$,
\acta{
We require that the verification oracle provide a counterexample from
$\domain$ which
is minimal for a given ordering over $\domain$. The size of examples can be used
%is minimal for a given partial order. The size of examples can be used
for ordering. The exact definition of ``size'' is left abstract and
can be defined suitably in different contexts. The intuition
is to use counterexamples of smaller size which eliminates more
candidate concepts. }
Significant effort has been made on improving validation
engines to produce counterexamples which aid debugging by localizing
the error~\cite{antonio13,ChenSMV10}. The use of counterexamples in
$\tcegis$ conceptually is an iterative repair process and hence, it is
natural to extend successful error localization and debugging
techniques to inductive synthesis.  

\item {\it Constant-bounded counterexamples:} Here the ``size'' of the counterexamples produced by the
verification oracle is bounded by a constant. This is motivated by the use of bounds in formal
verification such as bounded model checking~\cite{biere-bmc} and 
bug-finding in concurrent programs~\cite{qadeer-bounded} using bounds
on context switches.

\item {\it Positive-bounded counterexamples:} Here the counterexample produced by the validation 
engine must be smaller than a previously seen positive example. This is motivated from the 
industrial practice of validation by simulation where the system is often simulated to a finite length to 
discover bugs. The length of simulation often depends on the traces which illustrate known positive behaviors.
It is expected that errors will show up if the system is simulated up to the length of the largest positive trace.
Mutation-based software testing and symbolic execution also has a
similar flavor, where a sample correct execution is mutated to find bugs.

\end{mylist}

In addition to the above variations to the oracle, we also consider two kinds of learners that
differ based on their ability to store examples and counterexamples:
\begin{mylist}
\item {\it Infinite memory:} In the typical setting of CEGIS, the
learner is not assumed to have any memory bound, allowing the learner
to store as many examples and counterexamples as needed.
Note that, for an infinite domain, this set of examples can grow unbounded.

\item {\it Finite memory:} A more practical setting is one where the
learner only has a finite amount of memory, and therefore can only
store a finite representation of examples or hypothesized programs. 
This notion of finite memory is similar to that used classically for language
learning from examples~\cite{wiehagen76}.
We give the first theoretical
results on the power of CEGIS and its variants, for general program synthesis, in this
restricted setting.

\end{mylist}

We introduce notation to refer to these variants in a more compact manner.
The synthesis engine using arbitrary counterexamples and with infinite memory %is motivated by $\engine_\tcegis$ algorithm~\cite{asplos06} and hence, we refer to it
is denoted as $\engine_\infcegis$. 
The variant of the synthesis engine which is restricted to
use finite memory is referred to as $\engine_\cegis$. 
Similarly, the synthesis engine
using minimal counterexamples and infinite memory is called minimal counterexample guided inductive
synthesis ($\engine_\infmncegis$). The variant of this engine using finite memory is referred to as $\engine_\mncegis$. 
\revise{
The synthesis engine using counterexamples which are smaller than a
fixed constant
is called a constant bounded counterexample guided inductive synthesis, 
and is denoted as $\engine_\infbcegis$ if the memory is not
finite and $\engine_\bcegis$ if the memory is finite. 
}
The synthesis engine using counterexamples which are smaller than the largest positive examples 
is called positive-history bounded counterexample guided inductive 
synthesis, and is denoted as $\engine_\infhmncegis$ if the memory is not
finite and $\engine_\hmncegis$ if the memory is finite. 

%\footnote{ Practically, this would mean replacing satisfiability solving based verification engines with those using Boolean optimization such as maximum satisfiability solving techniques.}
For the class of programs corresponding to the set of recursive languages,
our focus is on {\em learning in the limit}, that is, whether the synthesis technique converges to
the correct program or not (see Definition~\ref{defn:convergence} in Sec.~\ref{sec:not} for a formal
definition). This question is non-trivial since our concept class is not finite. 
In this paper,
we do not discuss computational complexity of synthesis, and the impact of different types
of counterexamples on the speed of convergence. Investigating the computational complexity for
concept classes for which synthesis is guaranteed to terminate is left as a topic
for future research.
 
We also present an initial complexity analysis for $\ogis$ in case of finite concept classes.
The decidability question for finite class of programs is trivial since convergence 
is guaranteed as long as the queries provide new examples or some new information about the 
target program. But the speed at which the synthesis approach converges remains relevant even 
for finite class of programs. We show
that the complexity of these techniques is related to well-studied
notions in learning theory such as the
{\it Vapnik-Chervonenkis dimension}~\cite{Blumer-vp} and the {\it teaching dimension}~\cite{goldman-td}.

%%%%%%%%%%%%%%%%%%%%%%%%%%%%%%%%%%%%%%%%%%%%%%%%%%%%%%%%%%%%%%%%%%%%%%%%%%
\comment{
For each of the variants of $\cegis$, we analyze whether it increases
the candidate spaces of programs where a synthesizer terminates with correct program.
We prove the following in the paper.
\begin{enumerate}
\item $\mncegis$ successfully terminates with correct program on a candidate space if and only if
$\cegis$ also successfully terminates with the correct program. So, there is no increase or decrease
in power of synthesis by using minimal counterexamples.
\item $\hmncegis$ can synthesize programs from some program classes where $\cegis$ fails to synthesize
the correct program. But contrariwise, $\hmncegis$ also fails at synthesizing programs from some program
classes where $\cegis$ can successfully synthesize a program. Thus, their synthesis power is not
equivalent, and none dominates the other.
\end{enumerate}

Thus, none of the two counterexamples considered in the paper are strictly {\it good} mistakes.
The history bounded counterexample can enable synthesis in additional classes of programs but
it also leads to loss of some synthesis power.

Sec. 4: Finite Domains
  - Teaching Dimension: a property of the concept class that gives a lower bound on
    number of iterations of CEGIS, also for learning with distinguishing inputs.
    --> does TD also give a lower bound for ALL OGIS techniques?

Sec. 5: Infinite Domains
   - Specific questions: What is the impact of the following factors on termination of OGIS loop?
      i) Type of counterexamples given by O
     ii) Resources available to L
   - Describe the following four variants of CEGIS:
      a) Arbitrary CEGIS (arbitrary counterexamples)
      b) Min CEGIS (min counterexamples according to some fixed but unspecified cost function)
      c) Constant bounded CEGIS (counterexamples whose "value", according to some computable function,
              is bounded by a constant --- motivate this by all the "bounding" techniques for bug-finding:
              BMC, context-bounding, delay-bounding, etc.)
      d) History bounded CEGIS (counterexamples whose value is bounded by history of previous examples)
           -- Note: rename "trace" to "history" in our previous draft
   - Results:
      * CEGIS = MinCEGIS
      * CB-CEGIS < CEGIS
      * HB-CEGIS != CEGIS (incomparable)
      I suppose we could also show that CB-CEGIS < HB-CEGIS? (encode the constant bound into the synthesis problem?)
    - Discussion: would be good to add some discussion about how these results are relevant to, say, SyGuS for QFLRA

Sec. 6: Conclusion and Open Problems
 - Summarize our results (maybe in a table?)
 - Pose various other questions and open proble

}

\section{Background and Related Work} \label{sec-rel}

In this section, we contrast the contributions of this paper with the
most closely related work
and also provide some relevant background.

\subsection{Formal Synthesis}

The past decade has seen an explosion of work in program
synthesis
(e.g.~\cite{sketching:pldi05,asplos06,jha-icse10,SGF10,KMPS12,udupa-pldi13}.
Moreover, there has been a realization that many of the 
trickiest steps in formal verification involve synthesis of artifacts
such as inductive invariants, ranking functions, assumptions,
etc.~\cite{seshia-dac12,grebenshchikov2012synthesizing}. 
Most of these efforts have focused on solution techniques for
specific synthesis problems. There are two main unifying characteristics across
most of these efforts: (i) syntactic restrictions on
the space of programs/artifacts to be synthesized in the form of
templates, sketches, component libraries, etc., and (ii) the use of
inductive synthesis from examples. 
The recent work on {\em syntax-guided synthesis}
(SyGuS)~\cite{alur-fmcad13} is an attempt to capture these disparate
efforts in a common theoretical formalism. While SyGuS is about
formalizing the synthesis {\it problem}, the present paper focuses on
formalizing common ideas in the {\it solution techniques}. Specifically,
we present \ogis as a unifying formalism for different solution
techniques, along with a theoretical analysis of different
variants of CEGIS, the most common instantiation of \ogis.
In this sense, it is complementary to the SyGuS effort.

\subsection{Machine Learning Theory}

Another related area is the field of machine learning, particularly
the theoretical literature. In Section~\ref{sec:intro}, we outlined
some of the key differences between the fields of formal inductive
synthesis and that of machine learning. Here we focus on the sub-field
of {\em query-based learning} that is the closest to the \ogis
framework. The reader is referred to Angluin's excellent papers on the
topic for more background~\cite{angluin88,Angluin-queries}. 

A major difference between the query-based learning literature and
our work is in the treatment of oracles, specifically, how much control
one has over the oracle that answers queries. In query-based learning, the
oracles are treated as black boxes that answer particular types of
queries and only need to provide one valid response to a query. 
%There has been no analysis of the impact of the choice of response from
%amongst multiple valid options, which the present paper provides. 
Moreover, it is typical in the query-based learning literature 
for the oracle to be specified a priori as part of the problem formulation. 
%and that the design of an oracle is not part of the
%solution to the learning problem.
In contrast, in our \ogis framework, designing a synthesis
procedure involves also designing or selecting an oracle.
The second major difference is that the query-based learning
literature focuses on specific concept classes and proves convergence
and complexity results for those classes. In contrast, our work proves
results that are generally applicable to programs corresponding to
recursive languages.

\subsection{Learning of Formal Languages}

The problem of learning a formal language from examples is a classic
one.
We cover here some relevant background material.

Gold~\cite{gold67limit} considered the problem of learning formal languages from
examples. Similar techniques have been studied elsewhere in 
literature~\cite{JantkeB81,Wiehagen90,blum75,angluin1980inductive}.
The examples
are provided to learner as an infinite stream.
The learner is assumed to have unbounded memory and can store all the examples.
This model is unrealistic in a practical setting but provides useful theoretical understanding
of inductive learning of formal languages. 
Gold defined a class of languages to be {\it identifiable in the limit} if there is a learning procedure which identifies the grammar
of the target language from the class of languages using a stream of input strings. The languages learnt using only positive
examples were called {\it text learnable} and the languages which require both positive and negative examples were termed
{\it informant learnable}. 
None of the standard classes of formal languages are identifiable in the limit from
text, that is, from only positive examples~\cite{gold67limit}. This
includes regular languages, context-free languages and
context-sensitive languages. 

A detailed survey of classical results in learning from positive examples
is presented by Lange et al.~\cite{Lange08}. The results summarize learning power with different limitations such as
the inputs having certain noise, that is, a string not in the target
language might be provided as a positive example with a small
probability. Learning using positive as well as negative examples has
also been well-studied in literature. A detailed survey is presented
in \cite{jain1999systems} 
and \cite{lange2000algorithmic}.  
Lange and Zilles~\cite{Lange04formallanguage} relate Angluin-style query-based learning with
Gold-style learning. They establish that any query learner using superset queries can be
simulated by a Gold-style learner receiving only positive data. But there are concepts
learnable using subset queries but not Gold-style learnable from positive data only.
Learning with equivalence queries coincides with Gold's model of limit learning from
positive and negative examples, while learning with membership queries equals 
finite learning from positive data and negative data.
In contrast to this line of work, we present a general framework $\ogis$ 
to learn programs or languages and Angluin-style or Gold-style 
approaches can be instantiated in this framework.
Our theoretical analysis focusses on varying the oracle and the nature of counterexample
produced by it to examine the impact of using different types of
counterexamples obtainable from verification or testing tools.

\subsection{Learning vs. Teaching}

We also study the complexity of synthesizing programs from a finite class of programs. This
part of our work is related to previous work on the complexity of {\it teaching}
in exact learning of concepts by Goldman and Kearns~\cite{goldman-td}. 
Informally, the teaching dimension of a concept
class is the minimum number of instances a teacher must reveal to uniquely identify
any target concept from the class. Exact bounds on teaching dimensions
for specific concept classes such as 
orthogonal rectangles, monotonic decision trees, monomials, binary relations
and total orders have been previously presented in literature~\cite{goldman-td,goldman-brto}.
Shinohara et al.~\cite{shinohara-alt90} also introduced a notion of teachability in which a concept class
is teachable by examples if there exists a polynomial size sample under which all consistent
learners will exactly identify the target concept.
Salzberg et al.~\cite{salzberg-td} also consider a model of learning with a helpful teacher. Their model
requires that any teacher using a particular algorithm such as the nearest-neighbor algorithm 
learns the target concept. This work assumes that the teacher knows the algorithm used by the learner.
%For the case of finite program classes, the programs can be thought to be concepts and 
%synthesizing programs from examples is similar to identifying the correct concept from examples.
We do not make any assumption on the inductive learning technique used by the $\ogis$ synthesis
engine. Our goal is to obtain bounds on the number of examples that need to be provided by the oracle to 
synthesize the correct program by relating our framework to the
literature on teaching.

\section{Theoretical Analysis of CEGIS: Preliminaries} \label{sec-not}
\label{sec:not}

Our presentation of formal inductive synthesis and \ogis so far has
not used a particular representation of a concept class or
specification. 
In this section, we begin our theoretical formalization of the counterexample-guided
inductive synthesis (CEGIS) technique, for which such a choice is necessary.
We precisely define the formal
inductive synthesis problem for concepts that correspond to
recursive languages. 
We restrict our attention to the case when the
specification is partial and is a trace property --- i.e.,
the specification is defined by a single formal language. 
%complete and is defined by such a language, since
This assumption, which is the typical case in formal verification and synthesis, 
also simplifies notation and proofs. Most of our results
extend to the case of more general specifications; we will make
suitable additional remarks about the general case where needed.
%We give precise definitions of the variants of CEGIS under study in
%the rest of the paper. 
For ease of reference, the major definitions and frequently
used notation are summarized 
in Table ~\ref{table:not}.

\subsection{Basic Notation}

%We now describe  the notation used in the paper. 
We use $\nat$ to denote the set of natural numbers. 
$\nat_i \subset \nat$ denotes a subset of natural numbers $\nat_i = \{ n | n < i\}$. 
Consider a set $S \subset \nat$. $\min(S)$ denotes the minimal element in $S$. 
The union of the sets is denoted by $\cup$ and the intersection of the sets
is denoted by $\cap$. $S_1\setminus S_2$ denotes set minus operation with the resultant set containing
all elements in $S_1$ and not in $S_2$.

\todo{
We denote the set $\nat \cup \{ \bot \}$ as $\natbot$.
A sequence $\sigma$ is a mapping from $\nat$ to $\natbot$.
We denote a prefix of length $k$ of a sequence by $\sigma[k]$.
So, $\sigma[k]$ of length $k$ is a mapping from $\nat_k$ to $\natbot$.
}
$\sigma[0]$ is an empty sequence also denoted by $\sigma_0$ for brevity. 
The set of natural numbers appearing in the sequence $\sigma[i]$ is
defined using a function $\samples$, where 
$\samples(\sigma[i]) = \range(\sigma[i]) - \{\bot\}$.
The set of sequences is denoted by $\Sigma$.

%We extend natural numbers to pairs. Let $\langle n_1, n_2 \rangle$ be any
%bijective computable function from $\nat \times \nat \rightarrow \nat$
%which is monotonically increasing in both of its arguments. Similarly, pairs
%can be extended to $n$-tuples. Assuming existence of such a bijective mapping,
%tuples can also be used in place of natural numbers as elements of a language.
%A language in this case would be a subset of such tuples.

\noindent
%\underline{Languages and Programs:} We also use standard definitions from computability theory
{\bf{Languages and Programs:}} We also use standard definitions from computability theory
which relate languages and programs~\cite{rogers-book87}. 
\revise{A set $\lang$ of 
natural numbers is called a computable or recursive  language if there is a program, that is, a 
computable, total function $\algo$ such that for any natural number $n$, 
$$\algo(n) = 1 \; \mathtt{ if } \;  n \in \lang \; \mathtt{ and } \; \algo(n) = 0 \; 
\mathtt{ if } \; n \not \in \lang$$
We say that $\algo$ {\em identifies} the language $\lang$.
%\st{
%We denote the complement of language $\lang_i$ by $\overline \lang_i = %\nat \setminus \lang_i$. 
%we denote the union of two languages $L_i$ and $L_j$ by $L_i \cup L_j$, 
%and the intersection of two languages $L_i$ and $L_j$ by $L_i \cap L_j$.
%} 
Let $\langfun(\algo)$  denote the language $\lang$ 
 identified by the program $\algo$.
The mapping $\langfun$ is not necessarily one-to-one and hence,
syntactically different programs might identify the same language. 
In formal synthesis, we do not distinguish between syntactically different programs
that satisfy the specification.} 
%In the case of complete specifications, as
%we consider in this section, the goal of program
%synthesis is therefore to find the language that the target program
%identifies.}
%
%Thus, we distinguish only between semantically different
%programs and not the syntactically different programs which identify the %same language. 
%Recall that, given any recursive language, there is a program that
%identifies the language.
%there is a program that identifies a language if and only if it is a recursive language. 
Additionally, in this 
paper, we restrict our discussion to recursive languages 
because it includes many interesting and natural classes of languages
that correspond to programs and functions of various kinds, including regular, context free, context
sensitive, and pattern languages.
%constitute indexable classes of recursive languages.

\revise{
Given a sequence of non-empty languages 
$ \mathscr{\lang} = \lang_0, \lang_1, \lang_2, \ldots$, $\mathscr{\lang}$ is said to be an indexed family of languages 
if and only if for all languages $\lang_i$, 
there exists a recursive function $\template$ such that
$\template(j,n) = \algo(n)$ 
and $\langfun(\algo) = \lang_i$ for some $j$.
Practical applications of program synthesis often consider a family of candidate programs 
which contain syntactically
different programs that are semantically equivalent, that is,
they have the same set of behaviors. 
Formally, in practice program synthesis techniques permit picking $j$ such 
that $\template(j,n) = \algo(n)$ and 
$\langfun(\algo) = L_i$ for all $j \in I_j$ where the set $I_j$ 
represents the
syntactically different but semantically equivalent programs
that produce output $1$ on an input if and only if the
input natural number belongs to $\lang_i$.
Intuitively, a function
$\template$ defines an encoding of the space of candidate programs similar
to encodings proposed in the literature such as those on program sketching~\cite{asplos06} 
and component interconnection encoding~\cite{jha-icse10}.
In the case of formal synthesis where we have a specification $\Spec$, we are
only interested in finding a single program satisfying $\Spec$.
In the general case, $\Spec$ comprises a set of allowed languages, and
the task of synthesis is to find a program identifying some element of this set.
In the case of partial specifications that are trace properties, $\Spec$ comprises
subsets of a single target language $\lang_\target$.
%In the case of a complete specification, $\Spec = \{ \lang_\target \}$;
%i.e., it is defined by a single language $\lang_\target$, and 
%any target program $\algo_\target$ identifying $\lang_\target$ is a valid solution.
Any program $\algo_\target$ identifying some subset of $\lang_\target$ is a valid solution,
and usually positive examples are used to rule out programs identifying ``uninteresting'' subsets of
$\lang_\target$.
Thus, going forward, we will define the task of program
synthesis as one of identifying the 
corresponding correct language $\lang_\target$.
}

\noindent
{\bf{Ordering of elements in the languages:}} 
A language corresponds to a set of program behaviors. 
We model this set in an abstract manner,
only assuming the presence
of a total order over this set, without prescribing any specific ordering relation. 
Thus, languages are modeled as sets of natural numbers. 
While such an assumption might seem restrictive, we argue that this is
not the case in the setting of CEGIS, where the ordering relation 
is used specifically to model the oracle's preference for returning
specific kinds of counterexamples.
%In practice, we can  extend a partial order 
%to a total order by adding a secondary ordering metric
%which breaks the ties. 
%The assumption ensures that the 
%preferential selection of examples from the language 
%by an oracle is deterministic.
For example, consider the case where elements of a language
are input/output traces.
We can construct a totally ordered set of all possible input/output traces 
using the length of the trace as the primary ordering
metric and the lexicographic ordering as the secondary ordering metric.
Thus, an oracle producing smallest counterexample would produce an 
unique trace which is shortest in length and is lexicographically 
the smallest.
%combined with a total ordering over elemens of the sequence.
%We can then map the traces to the set of natural numbers using this ordering.
%This set is isomorphic to the set of natural numbers $\nat$.
%Any ordering of the set elements can be chosen; for example, strings could be ordered
%lexicographically. 
The exact choice of ordering is orthogonal to results presented in our paper, and
using the natural numbers allows us to greatly simplify notation.

\subsection{CEGIS Definitions}

We now specialize the definitions from Sec.~\ref{sec:ogis} for the case of CEGIS.
An indexed family of languages (also called a language class) $\mathscr{\lang}$ defines the concept class $\class$ for synthesis.
The domain $\domain$ for synthesis is the set of natural numbers $\nat$
and the examples are $i \in \nat$.
%% old defn in terms of complete language
%\revise{Recall that we restrict our attention to the special case
%where the specification $\Spec$ corresponds to a single target
%language, i.e., $\Spec = \{ \lang_{\target} \}$. 
\revise{Recall that we restrict our attention to the special case
where the specification $\Spec$ is captured by a single target
language, i.e., $\lang_{\target}$ comprising all permitted program behaviors. 
Therefore, the formal inductive synthesis (FIS) problem defined
in Section~\ref{sec:ogis} (Definition~\ref{def:formal-inductive-synthesis}) 
can be restricted for this setting as follows:
%as follows for program synthesis using our notation:
\begin{definition}
\acta{
Given a {\em language class} $\mathscr{\lang}$, 
a domain of examples $\nat$, 
the specification $\Spec$ defined by a target language $\lang_\target$, 
and an oracle interface $\orint$, 
the problem of formal inductive synthesis of languages (and the associated
programs) is to identify 
a language in $\Spec$
using
only the oracle interface $\orint$.}
\end{definition}

Counterexample-guided inductive synthesis (CEGIS) is a solution to 
the problem of formal inductive synthesis of languages where
the oracle interface $\orint$ is defined as follows. 

\begin{definition}
\acta{
A {\em counterexample-guided inductive synthesis (CEGIS) oracle interface} is  
$\orint = \defquery \times \defresponse$ where
$\defquery = \{\qposwit, \qceq(\lang)\}$
with $\lang \in \mathcal{\lang}$, 
%$j \in \nat$, 
$\defresponse = \natbot$, and the specification $\Spec$ is 
defined as
subsets of a 
target language $\lang_\target$.
%In this section, we specialize $\Spec$ to be a singleton set of the form 
%{\color{black} $\{ \lang_\target \}$} .
The positive witness query $\qposwit$ returns a positive 
example $i \in \lang_\target$, and 
the counterexample query $\qceq$ takes as argument a candidate language $\lang$
and either returns a counterexample $i\in \lang \setminus \lang_\target$ 
showing that the candidate language $\lang$ is incorrect 
or returns $\bot$ if 
%$\lang \subseteq \lang_\target$. 
it cannot find a counterexample.}~\footnote{CEGIS techniques in
literature~\cite{asplos06,jin-hscc13} initiate search for correct
program using 
positive examples and use specification to obtain positive examples
corresponding to counterexamples.}

\end{definition}

%An oracle $\deforacle$ implementing the oracle interface
%$\orint$ defined above supports two kinds of queries:
%positive witness query that returns 
%positive examples and a correctness 
%query that returns counterexamples.

}

\begin{table}[h!]
\centering
\begin{tabular}{|c | c || c | c|} 
 \hline
 Symbol & Meaning & Symbol & Meaning \\ 
 \hline
 $\nat$ & natural numbers  & $\nat_i$ & natural numbers less than $i$\\ 
 $\min(S)$ & minimal element in set $S$ & $S_1 \setminus S_2$ & set minus \\
 $S_1 \cap S_2$ & set intersection & $S_1 \cup S_2$ & set union \\
 %$\overline (S)$ & set complement &  $S_1 \oplus S_2$ & asymmetric  difference: $\overline(S_1) \setminus S_2 \; \cup \; S_2 \setminus S_1$ 
 $\sigma$ & sequence of numbers & $\sigma_0$  &  empty sequence \\
 $\sigma[i]$ & sequence of length $i$ & $\sigma(i)$ & $i$th element of sequence $\sigma$ \\
 $\lang_i$ & language (a subset of $\nat$) & $\overline \lang_i$ & complement of language\\
 $\algo_i$ & program for $\lang_i$ & $\langfun(\algo_i) = \lang_i$ & language corresponding to $\algo_i$\\
 $\samples(\sigma)$ & natural numbers in $\sigma$ & $\Sigma$ & set of sequences \\ %language transcripts\\
 $\mathscr{\lang}$  & family of languages &  $\mathscr{\algo}$ & family of programs \\
\hline
% $\trace$ & transcript of language & $\ce$ & counterexample transcript\\
 $\trace$ & transcript & $\ce$ & counterexample transcript\\
 $\engine$ & synthesis engine & $\learner$ & inductive learning engine\\
 $\verifier_\lang$ & verification oracle for $\lang$ & $\mnverifier_\lang$ &  minimal counterexample oracle \\
 $\bverifier_{B,\lang}$ & bounded counterexample oracle & $\hmnverifier_\lang$ & positive bounded counterexample oracle \\
 $\infcegis$ & \pbox{20cm} {\acta{set of language  families} identified by\\ inf memory cegis engine} & $\cegis$ & \pbox{20cm}{\acta{set of language families} identified by\\  finite memory cegis engine}\\
 $\infmncegis$ & $\infcegis$ with $\mnverifier$ & $\mncegis$ & $\cegis$ with $\mnverifier$\\
 $\infbcegis$ & \pbox{20cm}{ $\infcegis$ with $\bverifier$ for a given\\ constant $B$} & $\bcegis$ & \pbox{20cm} { $\cegis$ with $\bverifier$ for a given\\ constant $B$}\\
 $\infhmncegis$ & $\infcegis$ with $\hmnverifier$ & $\hmncegis$ & $\cegis$ with $\hmnverifier$\\
\hline
\end{tabular}
\caption{Frequently used notation in the paper}
\label{table:not}
\end{table}

\revise{
\acta{
The sequence $\tau$ of responses of the positive witness $\qposwit$
query is called the {\em transcript}, and 
the sequence $\ce$ of the responses to the 
counterexample queries $\qceq$ 
is called the {\em counterexample sequence}.}
%In practice, the correctness query oracle is implemented
%using a verifier.  
The positive witness queries can be answered 
by the oracle sampling examples from the target language.
Our work uses the standard model for language
learning in the limit~\cite{gold67limit}, where the learner has
access to an infinite stream of positive examples from the target language. 
This is also realistic in practical CEGIS settings for infinite concept
classes (e.g.~\cite{jin-hscc13}) where more behaviors can be sampled over time.
We formalize these terms below.
%
%For example, the oracle could provide input/output pairs or traces
%of a program as positive examples. The witness query oracle could
%also be implemented using a different verification engine which returns
%positive examples from $\lang \cap \lang_\cand$. Our analysis of
%different synthesis engines is independent of the choice of witness
%query oracle. We only require that the  INCOMPLETE
}
\begin{definition}
A \todo{{\em transcript}} $\trace$ for a specification language $\lang_\target$ is a sequence with
%A \todo{{\em transcript}} $\trace$ for a language $\lang$ is a sequence with
 $\samples(\trace) = \lang_\target$. $\trace[i]$ denotes the prefix of the \todo{transcript} $\trace$ of length $i$.
 $\trace(i)$ denotes the $i$-th element of the \todo{transcript}.
\label{defn:transcript}
\end{definition}

\revise{

\begin{definition}
% A counterexample sequence $\ce$: 
\acta{
A {\em counterexample sequence} $\ce$ for a specification language $\lang_\target$ 
from a counterexample query $\qceq$ is a sequence with
$\ce(i) = \qceq(\lang_{cand_i})$, where $\ce[i]$ denotes the prefix of the counterexample sequence $\ce$ of length $i$,
%$\ce(i) = \qcorr(\algo_{cand_i})$, where $\ce[i]$ denotes the prefix of the counterexample sequence $\ce$ of length $i$,
 $\ce(i)$ denotes the $i$-th element of the counterexample sequence,
% and $\algo_{cand_i}$ is the argument of the 
 and $\lang_{cand_i}$ is the argument of the 
 $i$-th invocation of the query $\qceq$.}
\end{definition}

We now define the verification oracle in CEGIS that produces arbitrary counterexamples, 
as well as its three other variants which generate particular kinds of 
counterexamples. 
%instead of arbitrary counterexamples.

\begin{definition}
A {verifier} $\verifier_\lang$ for language $\lang$ is a 
\acta{nondeterministic} mapping from 
$\mathscr{\lang}$ to $\natbot$ such that $\verifier_\lang(\lang_i) = \bot$ if and only if
 $\lang_i \subseteq \lang$, and 
 $\verifier_\lang(\lang_i)  \in \lang_i \setminus \lang$ otherwise.
\end{definition}

\noindent
{\em Remark:} For more general specifications $\Spec$ that are a set of languages, the definition of $\verifier_\lang$ changes in
a natural way: it returns $\bot$ if and only if $\lang_i \in \Spec$ and otherwise returns an example $j$
that is in the intersection of the symmetric differences of each language $\lang \in \Spec$ and the candidate language $\lang_i$.
%\st{For example, in the case of subsumption checks, $\Spec$ comprises all languages that are subsets of
%a specification language $\lang_s$, and thus can be represented by $\lang_s$ alone. In this case,
%$\verifier(\lang_s,\lang_i) = \bot$ if and only if
%$\lang_i \subseteq \lang_s$, and $\verifier(\lang_s,\lang_i)  \in \lang_i \setminus \lang_s$ otherwise.
%These definitions extend to the other flavors of verifiers defined below; for brevity, we will omit
%the details.}

We define a 
minimal counterexample generating verifier below. The counterexamples are minimal with respect
to the total ordering on the domain of examples. 

%\begin{definition}
%A  {verifier} $\mnverifier_\lang$ for a language $\lang$ is a mapping from %$\mathscr{\lang}$ to $\natbot$ such that\\ $\mnverifier_\lang(\lang_i) = \bot$ if and only if $\lang_i \subseteq \lang$, and
%$\mnverifier_\lang(\lang_i) = \min( \overline \lang \cap \lang_i)$ otherwise.
%\end{definition}

\begin{definition}
A  {verifier} $\mnverifier_\lang$ for a language $\lang$ is a 
\acta{nondeterministic}  mapping from 
$\mathscr{\lang}$ to $\natbot$ such that $\mnverifier_\lang(\lang_i) = \bot$ if and only if 
$\lang_i \subseteq \lang$, and
$\mnverifier_\lang(\lang_i) = \min( \lang_i \setminus \lang)$ otherwise.
%acta{The $\min$ is computed with respect to a suitable partial ordering
%over the counterexamples.}
\end{definition}

Next, we consider another variant of counterexamples, 
namely (constant) bounded counterexamples. 
Bounded model-checking~\cite{biere-bmc} returns a counterexample trace for an incorrect design if it can find 
a counterexample of length less than the specified constant
bound. It fails to find a counterexample for an incorrect design
if no counterexample exists with length less than the given bound.
Verification of concurrent programs by bounding the number of context switches~\cite{qadeer-bounded} 
is another example of the bounded verification technique. This motivates
the definition of a verifier which returns counterexamples bounded by a constant $\bound$.

\begin{definition}
A  {verifier} $\bverifier_{\bound,\lang}$ is a 
\acta{nondeterministic} mapping from 
$\mathscr{\lang}$
to $\natbot$ such that
$\bverifier_{\bound,\lang}(\lang_i) = m$ where $m \in  \lang_i \setminus \lang \wedge m < \bound$ 
for the given bound $\bound$, and $\bverifier_{\bound,\lang}(\lang_i) = \bot$ \acta{if such $m$ does not exist}.
\end{definition}

The last variant of counterexamples is  {\it positive bounded} counterexamples.
The verifier for generating positive bounded counterexample is also provided
with the \todo{transcript} seen so far by the synthesis engine. The verifier generates a counterexample
smaller than the largest positive example in the \todo{transcript}. If there is no counterexample smaller than the
largest positive example in the \todo{transcript}, then the verifier does not return any counterexample.
This is motivated by the practice of mutating correct traces to find bugs in programs and designs.
The counterexamples in these techniques are bounded by the size of positive examples (traces) 
seen so far.\footnote{Note that we can extend this definition to include counterexamples of size bounded
by that of the largest positive example seen so far plus a constant. The proof arguments given in Sec.~\ref{sec:res} 
continue to work with only minor modifications.}
 
%\begin{definition}
%A  {verifier} $\hmnverifier_\lang$ for a language $\lang$ is a mapping from $\mathscr{\lang} \times \Sigma $ to $\natbot$ such that\\ $\hmnverifier_\lang(\lang_i,\trace[n]) = m$ where $m \in \overline \lang \cap \lang_i \wedge m < \trace(j)$ for some $j \leq n$, and $\hmnverifier_\lang(\lang_i,\trace[n]) = \bot$ otherwise.
%\end{definition}

\begin{definition}
A {verifier} $\hmnverifier_\lang$ is a \acta{nondeterministic}  
mapping from 
$\mathscr{\lang} \times \Sigma $ to $\natbot$ such that $\hmnverifier_\lang(\lang_i,\trace[n]) = m$ where 
$m \in  \lang_i \setminus \lang \wedge m < \trace(j)$ for some $j \leq n$, and $\hmnverifier_\lang(\lang_i,\trace[n]) = \bot$ 
\acta{if such $m$ does not exist}.
\end{definition}

We now define the oracle for counterexample guided inductive
synthesis. We drop the queries in dialogue since there are only
two kind of queries and instead only use the sequence of responses:
transcript $\tau$ and the counterexample sequence $\ce$. The oracle
also receives as input the current candidate language $\lang_{cand}$ 
to be used as the argument of the $\qcorr$ query.
The overall response of the oracle 
is a pair of elements in $\natbot$. 
\begin{definition}
\acta{An oracle $\deforacle$ for counterexample-guided inductive synthesis
({\em CEGIS oracle}) 
%is a mapping $\Sigma \times \Sigma \times \mathcal{\lang}  \rightarrow \nat \times \nat$
is a nondeterministic 
mapping $\Sigma \times \Sigma \times \mathcal{\lang}  \rightarrow \natbot \times \natbot$
such that 
%$\deforacle(\tau[i-1],\ce[i-1],\lang_{cand}) = (\tau[i],\ce[i])$
$\deforacle(\tau[i-1],\ce[i-1],\lang_{cand}) = (\tau(i),\ce(i))$
where $\tau(i)$ is the 
nondeterministic response to positive witness query $\qposwit$  and
$\ce(i)$ is the  nondeterministic response to counterexample query $\qceq(\lang_{cand})$.
The oracle can use 
any of the four verifiers presented earlier to generate the
counterexamples.
An oracle using $\verifier_\lang$ is called $\deforacle_{\cegis}$,
 one using $\mnverifier_\lang$ is called $\deforacle_{\mncegis}$,
 one using $\hmnverifier_\lang$ is called $\deforacle_{\hmncegis}$
and  one using $\bverifier_{\bound,\lang}$ is called $\deforacle_{\bcegis}$.}
\end{definition}

We make the following reasonable assumption on the oracle.
The oracle is assumed to be {\em consistent}: it does not provide the same
example both as a positive example (via a positive witness query) 
and as a negative example (as a counterexample). 
Second, the oracle is assumed to be {\em non-redundant}: it 
does not repeat any positive examples that it may
have previously provided to the learner; for a finite target language,
once the oracle exhausts all positive examples, it will return $\bot$.

The learner is simplified to be a mapping from
the sequence of responses to a candidate program. 
\begin{definition}
An {\em infinite memory learner} $\inflearner$ is a function 
$\Sigma \times \Sigma \rightarrow \mathscr{\lang}$ such that
$\inflearner(\trace[n],\ce[n]) = \lang$
where $\lang$ includes all positive examples in $\trace[n]$
and excludes all examples in $\ce[n]$.\footnote{This holds due to
the specialization of $\Spec$ to a partial specification, and as a trace property. For general
$\Spec$, the learner need not exclude all counterexamples.} 
$\inflearner (\sigma_0, \sigma_0)$ is a predefined constant
representing an initial guess $\lang_0$ of the language, which, for
example, could be $\nat$.
\end{definition}

We now define a finite memory learner which cannot take
the unbounded sequence of responses as argument. The finite
memory learner instead uses the previous candidate program 
to summarize the response sequence. We assume that languages
are encoded in terms of a finite representation 
\acta{(index of the 
language since the language class is an indexed family of languages
and we assume that every index needs unit memory)}
such as a
program that identifies that language.
Such an iterative learner only needs finite memory. 

\begin{definition}
A {\em finite memory learner} $\learner$ is a recursive function 
%$\mathscr{\algo} \times \natbot \times \natbot  \rightarrow \mathscr{\algo}$ such that 
$\mathscr{\lang} \times \natbot \times \natbot  \rightarrow \mathscr{\lang}$ such that 
for all $n \geq 0$, $\learner(\lang_n,\trace(n),\ce(n)) = \lang_{n+1}$, where
$\lang_{n+1}$ includes all positive examples in $\trace[n]$ and 
excludes all examples in $\ce[n]$. 
%all previous positive examples used in constructing $\lang_n$ except $\ce(n)$
%if $\ce(n) \in \lang_n$. 
We define $\lang_0 = \inflearner (\sigma_0, \sigma_0)$ to be the initial guess of
the language, which for example, could be $\nat$. 
\acta{For ease of presentation,
we omit the finite memory available to the learner in its functional
representation above. The learner can store additional finite information.}
%
%$\learner (\sigma_0, \sigma_0)$ is a predefined constant
%representing an initial guess $\lang_0$ of the language, 
\end{definition}

The synthesis engine using infinite memory can now be defined as follows.
\begin{definition}
\acta{
An infinite memory 
$\infcegis$ engine $\engine_{\infcegis}$ is a pair
$\langle \deforacle_{\cegis}, \inflearner \rangle$ comprising 
a CEGIS oracle $\deforacle_{\cegis}$ and an infinite memory learner $\inflearner$,
where, there exists $\tau$ and $\ce$ such that for all $i\geq 0$, 
$\deforacle_{\cegis}(\tau[i],\ce[i],\lang_{i}) = (\trace(i+1), \ce(i+1))$
and $ \lang_{i} = \inflearner(\trace[i],\ce[i])$.
Since the oracle $\deforacle_{\cegis}$ is nondeterministic, 
$\engine_{\infcegis}$  can have multiple transcripts $\tau$ and 
counterexample sequences $\ce$. 
}
%and 
%
% OLD STUFF
%The behavior of synthesis engine is summarized by 
%the mapping $\Sigma \times \Sigma \times \mathcal{\lang}$ such that
%$\Sigma \times \Sigma \rightarrow \mathscr{\algo}$ is a mapping 
%$$\engine_{\infcegis} (\trace[n], \ce[n]) = \inflearner(\trace[n],\ce[n])$$ where
%$\inflearner$ is a function 
%$\Sigma \times \Sigma \rightarrow \mathscr{\algo}$ 
%that characterizes the learning engine and how it eliminates %counterexamples.
%$\inflearner$ has infinite memory and can store unbounded
%transcripts $\trace[n]$ and $\ce[n]$.
%$\trace$ is a transcript for language 
%$\lang$ and $\ce$ is a counterexample sequence such that
%$$\ce(i) = \verifier_\lang(\langfun(\engine_{\infcegis}(\trace[i-1], \ce[i-1]))) $$
%$\engine_{\infcegis} (\sigma_0, \sigma_0)$ is a predefined constant %representing an initial guess 
%$\algo_0$ of the program, which for example, could be program %corresponding to the universal language $\nat$.
\end{definition}

\comment{If the oracles are not consistent and the same example is provided as 
a positive example as well as a counterexample, then the learning
engine cannot find any consistent program and it returns the program
corresponding to empty language. The counterexample to empty language by
the subsumption query verifier is $\bot$. So, the overall synthesis engine
produces the program corresponding to empty language $\emptyset$ when
it is not possible to synthesize a program consistent with the output of
oracles.}

A synthesis engine with finite memory cannot store unbounded infinite 
transcripts. 
So, the bounded memory $\cegis$ synthesis engine 
$\engine_{\cegis}$ uses a finite memory learner $\learner$.
\comment{maintains its current hypothesis program and some bounded memory at every iteration. 
An inductive learning function $\learner$ uses bounded memory and 
takes the current hypothesis program, an example
and a counterexample, and produces the corresponding next hypothesis program. 
$\learner$ can access 
additional state $\scratch$ stored by the synthesis engine in the finite bounded memory.}

\begin{definition}
\acta{
A finite memory 
$\cegis$ engine $\engine_{\cegis}$ is a tuple
$\langle \deforacle_{\cegis}, \learner \rangle$ comprising 
a CEGIS oracle $\deforacle_{\cegis}$ and a finite memory learner $\learner$ 
where, there exists $\tau$ and $\ce$ such that for all $i\geq0$,   
$\deforacle_{\cegis}(\tau[i],\ce[i],\lang_{i+1}) = (\trace(i+1), \ce(i+1))$
 and
$\lang_{i} = \learner(\lang_i, \trace(i),\ce(i)) $.
Since the oracle $\deforacle_{\cegis}$ is nondeterministic, 
$\engine_{\infcegis}$  can have multiple transcripts $\tau$ and counterexample
sequences $\ce$. }
%where\\ $\deforacle_{\cegis}(\tau[i],\ce[i],\lang_{cand}) = (\trace[i+1], \ce[i+1])$,
%and $\learner(\trace[i],\ce[i]) = \lang_{cand}$ for all $i>0$.
\end{definition}
}

\comment{
\begin{definition}
A finite memory 
$\cegis$ engine $\engine_{\cegis} : \Sigma \times \Sigma \rightarrow \mathscr{\algo}$ is defined recursively 
below.
$$\engine_{\cegis} (\trace[n], \ce[n]) = \learner( \engine_{\cegis} (\trace[n-1], \ce[n-1]),  \trace(n), \ce(n) , \scratch) $$
where $\learner$ is a recursive function 
$\mathscr{\algo} \times (\nat \cup \{\bot\}) \times (\nat \cup \{\bot\})  \times \Scratch \rightarrow \mathscr{\algo}$ 
that characterizes the learning engine and how it eliminates counterexamples,
$\scratch \in \Scratch$ is a finite state that might be maintained by the
learner beyond maintaining the current hypothesis, $\trace[n]$ is a 
transcript for language 
$\lang$ and $\ce$ is a counterexample sequence such that
$$\ce(i) = \verifier_\lang(\langfun(\engine_{\cegis}(\trace[i-1], \ce[i-1]))) $$
$\engine_{\cegis} (\sigma_0, \sigma_0)$ is a predefined constant representing an initial guess 
$\algo_0$ of the program, which, for example, could be program corresponding to the universal language $\nat$.
\end{definition}

Intuitively, $\cegis$ is provided with a transcript along with a counterexample
transcript formed by counterexamples to the latest conjectured languages from the verifier $\verifier_\lang$.
}

\acta{A pair $(\tau,\ce)$ is a valid transcript and counterexample
sequence for $\engine_{\cegis}$ if the above definitions hold for that pair.
We denote this by $(\trace,\ce)\models \engine_{\cegis}$.}
Similar to Definition~\ref{def:ogis}, the convergence of 
the counterexample-guided synthesis engine is defined as follows:
\begin{definition}
\acta{
We say that $\engine_{\cegis}: \langle \deforacle_{\cegis}, \learner \rangle$ identifies $\lang$, that is, it converges to $\lang$, 
written $\engine_{\cegis} \rightarrow \lang$}  if and only if
there exists $k$ such that for all $n\geq k$, 
\acta{$\learner(\lang_n, \trace[n], \ce[n]) =  \lang$
for all valid transcripts $\tau$ and counterexample sequences $\ce$ of 
$\engine_{\cegis}$}. 
\label{defn:convergence}
\label{defn:identifies-lang}
\end{definition}
This notion of convergence is standard in language
learning in the limit~\cite{gold67limit}. For the case of general specifications $\Spec$,
as given in Definition~\ref{def:ogis}, the synthesizer must converge to {\it some} language in $\Spec$.
\comment{\begin{definition}
$\engine_{\cegis}$ identifies a language $\lang$ if and only if 
for all transcripts $\tau$ for the specification language %$\lang_\target$, 
%for all transcripts $\tau$ for language $\lang$, 
$\engine_{\cegis}$ converges to $\lang$,
that is, 
$\forall \tau \;\; (\engine_{\cegis} \xrightarrow{\tau} \lang)$.
%any finite prefix transcript $\trace[n]$ such that
%$\samples(\trace[n]) \subseteq \lang_i$, and counterexample sequences %$\ce[n]$, the synthesis engine
%converges to the language $\lang_i$
%for some extension of the finite transcript, that is,
%for some extension $\trace' \supseteq \trace[n]$ 
%and $\ce \supseteq \ce[n]$, 
%$\engine_{\cegis}(\trace', \ce) \rightarrow \algo_i$.
\label{defn:identifies-lang}
\end{definition}
}
%{\color {black} The synthesis engine is required to converge to the 
%target language only on transcripts. 
As per Definition~\ref{defn:transcript}, a transcript is an infinite sequence of examples  
which contains all the elements in the target language. Definition~\ref{defn:convergence}
requires the synthesis engine to converge to the correct
language after consuming a {\em finite} part of the transcript
and counterexample sequence. This notion of
convergence is standard in the literature on 
language learning in the limit~\cite{gold67limit}~\footnote{In this framework, a synthesis engine
is only required to converge to the correct concept without requiring it to 
recognize it has converged and terminate. For a finite concept or language, 
termination can be trivially guaranteed when the oracle is assumed to be non-redundant and does not repeat examples. 
%The synthesis engine terminates when no new counterexample or positive example is possible. (which is adopted in many implementations of CEGIS for finite target concept).
}.
%Definition~\ref{defn:identifies-lang} requires that, for {\em any} %transcript, the synthesis engine
%must identify the language using only a finite number of positive
%examples and counterexamples even if the language is infinite. 

We extend Definition~\ref{defn:identifies-lang} to general specifications $\Spec$ as follows:
$\engine_{\cegis}$ identifies a specification $\Spec$ if it 
identifies {\it some} language in $\Spec$.
As noted before, this section focuses on the case of a partial specification that is a trace property. 
In this case, $\Spec$ comprises all subsets of a target language $\lang_\target$.
Since Definition~\ref{defn:transcript} defines a transcript as comprising all positive examples
in $\lang_\target$ and Definition~\ref{defn:identifies-lang} requires convergence for all possible
transcripts, the two notions of identifying $\Spec$ and identifying $\lang_\target$ coincide.
We therefore focus in Sec.~\ref{sec:res} purely on language identification with the observation that our results
carry over to the case of ``specification identification''.

\begin{definition}
\acta{$\engine_{\cegis} = \langle \deforacle_{\cegis}, \learner \rangle$} identifies a language family $\mathscr{\lang}$ 
if and only if 
$\engine_{\cegis}$ identifies every language $\lang \in \mathscr{\lang}$.
\end{definition}

The above definition extends to families of specifications in an exactly analogous manner.
We now define the set of language families that can be identified by the 
inductive synthesis engines as $\cegis$ formally below. 
%(By replacing ``language family''
%with ``specification family'', we get the analogous definitions for the general case.)

%%%%%%%%%  exits-forall issue %%%%%%%%%%%%%%

\begin{definition}
\acta{$\cegis = \{ \; \mathcal{L} \; | \; \exists \learner \; 
\forall \deforacle_{\cegis} \; . \; $ the engine $\engine_{\cegis} = \langle \deforacle_{\cegis}, \learner \rangle$ identifies $\mathscr{\lang} \}$.}\\ 
\end{definition}

\comment{

\begin{definition}
\acta{$\cegis = \{ \; \mathcal{L} \; | \; \exists \learner \; $
such that  $  \engine_{\cegis} = \langle \deforacle_{\cegis}, \learner \rangle$ identifies $\mathscr{\lang} \}$.}\\ 
To elaborate, 
$\cegis = \{ \; \mathcal{L} \; | \; \exists \learner \; \forall (\trace,\ce)$
such that  $  \engine_{\cegis} = \langle \deforacle_{\cegis}, \learner \rangle$ and $(\trace,\ce) \models \engine_{\cegis}$, 
the synthesis engine 
$\engine_{\cegis}$ identifies every language in  $\mathscr{\lang} \}$.
\end{definition}
}

%%%%%%%%%%% exits-forall issue end %%%%%%%%%%%%%%%%%%

\acta{The convergence of synthesis engine to the correct language, 
identification condition for a language, and 
language family identified by a synthesis engine
are defined similarly 
as listed in 
%. The synthesis engines
%and the corresponding set of language families are listed in 
Table~\ref{tbl:cegisvariants}.}

\begin{table}[h!]
\centering
\begin{tabular}{|c | c | c | c | c |} 
 \hline
 {\small {Learner /\ Oracle}} &  $\deforacle_{\cegis}$ &  $\deforacle_{\mncegis}$ &  $\deforacle_{\hmncegis}$\\% &  $\deforacle_{\bcegis}$ \\
 \hline
 Finite memory  $\learner$ & $ \engine_\cegis, \cegis$ & $\engine_\mncegis, \mncegis$ & $\engine_\hmncegis, \hmncegis$\\% & $\engine_\bcegis, \bcegis$ \\
  Infinite memory  $\inflearner$ & $ \engine_\infcegis, \infcegis$ & $\engine_\infmncegis, \infmncegis$ & $\engine_\infhmncegis, \infhmncegis$ \\% & $\engine_\infbcegis, \infbcegis$ \\
 \hline
\end{tabular}
\caption{Synthesis engines and corresponding sets of language families}
\label{tbl:cegisvariants}
\end{table}

\acta{
The constant bounded counterexample-guided inductive synthesis oracle
$\deforacle_{\bcegis}$ 
uses the verifier $\bverifier_{\bound,\lang}$. It takes 
an additional parameter $\bound$ which is the
constant bound on the maximum
size of a counterexample. If the verifier cannot find a counterexample
below this bound, it will respond with $\bot$. 

\begin{definition}
\acta{Given a bound $\bound$, $\engine_{\bcegis} = \langle \deforacle_{\bcegis}, \learner \rangle$}
where $\deforacle_{\bcegis}$ uses $\bverifier_{\bound,\lang}$,  
we say that $\engine_{\bcegis}$ 
identifies a language family $\mathscr{\lang}$ 
if and only if 
$\engine_{\bcegis}$ identifies every language $\lang \in \mathscr{\lang}$.
\end{definition}

Note that the values of $\bound$ for which a language family $\mathscr{\lang}$ is identifiable 
can be different for different $\mathscr{\lang}$. 
%can be specified separately
%for each language family. For example, we can 
%specify the maximum length of bitvectors while searching for counterexamples
%in the synthesis of bit-manipulating programs using a given template library. 
The overall class of language families 
identifiable using $\deforacle_{\bcegis}$ oracles can thus be defined as follows:

\begin{definition}
\acta{
$\bcegis = \{ \; \mathcal{L} \; | \; \exists \bound \; \; \exists \learner \; . \;
\forall \deforacle_\bcegis \; s.t. \deforacle_\bcegis$ uses $\bverifier_{\bound,\lang} \; \; . \;$ 
the engine $\engine_{\cegis} = \langle \deforacle_\bcegis, \learner \rangle$ identifies $\mathscr{\lang} \}$ 
}
%.\\
%where $\deforacle_\bcegis$ uses $\bverifier_{\bound,\lang}$}.\\
\end{definition}

%%%%%%%%%%%% exists-forall issue %%%%%%%%%%%%%%%%%
\comment{
\begin{definition}
\acta{
$\bcegis = \{ \; \mathcal{L} \; | \; \exists \bound \; \exists \learner \;$ such that $\engine_{\cegis} = \langle \deforacle_\bcegis, \learner \rangle$ identifies $\mathscr{\lang} \}$ where
$\deforacle_\bcegis$ uses $\bverifier_{\bound,\lang}$}.\\
To elaborate, 
$\bcegis = \{ \; \mathcal{L} \; | \; \exists \bound \exists \learner \; \forall (\trace,\ce)$
such that  $  \engine_{\cegis} = \langle \deforacle_{\bcegis}, \learner \rangle$ and $(\trace,\ce) \models \engine_{\bcegis}$, 
the synthesis engine 
$\engine_{\bcegis}$ identifies every language in  $\mathscr{\lang} \}$.
\end{definition}
}
%%%%%%%%%%% exists forallissue end %%%%%%%%%%%%%%%%%%
} 

%%%%%%%%%%%%%%%%%%%%%%%%%%%%%%%%%%%% commented out stuff below %%%%%%%%%%%%%%%%%%%%%%%%%%%%%%%

\comment{
\begin{table}[h!]
\centering
\begin{tabular}{|c | c | c | c | c |} 
 \hline
 Synthesis Engine & Oracle  & Learner & Set of Language Families \\
 \hline
 $\engine_\cegis$ & $\deforacle_{\cegis}$ & Finite memory learner $\learner$ & $\cegis$ \\
  $\engine_\mncegis$ & $\deforacle_{\mncegis}$ & Finite memory learner $\learner$ & $\mncegis$ \\
  $\engine_\hmncegis$ & $\deforacle_{\hmncegis}$ & Finite memory learner $\learner$ & $\hmncegis$ \\
  $\engine_\bcegis$ & $\deforacle_{\bcegis}$ & Finite memory learner $\learner$ & $\bcegis$ \\
  $\engine_\infcegis$ & $\deforacle_{\cegis}$ & Infinite memory learner $\inflearner$ & $\infcegis$ \\
  $\engine_\infmncegis$ & $\deforacle_{\mncegis}$ & infinite memory learner $\inflearner$ & $\infmncegis$ \\
  $\engine_\infhmncegis$ & $\deforacle_{\hmncegis}$ & Infinite memory learner $\inflearner$ & $\infhmncegis$ \\
  $\engine_\infbcegis$ & $\deforacle_{\bcegis}$ & Infinite memory learner $\inflearner$ & $\infbcegis$ \\
 \hline
\end{tabular}
\caption{Synthesis engines}
\label{tbl:cegisvariants}
\end{table}
}

\comment{ the case where the learning engine $\learner$
has infinite memory. 
The languages identified by the infinite memory synthesis engine 
$\engine_{\infcegis}$ is denoted by $\infcegis$.

We denote the infinite memory synthesis engine using minimal counterexamples,
(constant) bounded counterexamples and positive bounded counterexamples 
as $\engine_{\infmncegis}$, $\engine_{\infbcegis}$ and $\engine_{\infhmncegis}$ respectively
where the verifier $\verifier$ is replaced by $\mnverifier$,
$\bverifier$ and $\hmnverifier$ respectively in 
the definition of $\engine_{\infcegis}$. The languages identified by these 
synthesis engines are denoted by $\infmncegis$, $\infbcegis$ and $\infhmncegis$ respectively.

Similarly, $\engine_{\mncegis}$, $\engine_{\bcegis}$ and $\engine_{\hmncegis}$ are 
obtained from  $\engine_{\cegis}$ by replacing the verifier $\verifier$ by $\mnverifier$,
$\bverifier$ and $\hmnverifier$ respectively. The
languages identified by these 
synthesis engines are denoted by $\mncegis$, $\bcegis$ and $\hmncegis$ respectively.
}

\comment{
\subsection{Background}

 It is also
known that no class of language
with at least one infinite language over the same vocabulary as the rest of the languages in the class, can be learnt purely
from positive examples. We can illustrate this infeasibility of
identifying languages from positive examples with a simple
example.

Consider a vocabulary $V$ and let $V^*$ be all the strings that can be formed using vocabulary $V$. The strings in $V^*$ are $x_1, x_2, \ldots$. Let us consider the set of languages
$$L_1 = V^* - \{x_1\}, L_2 = V^* - \{x_2\}, \ldots$$
Now a simple algorithm to learn languages from positive examples can guess the language to be $L_i$ if $x_i$ is the string with the smallest index not seen so far as a positive example. This algorithm can be used to inductively identify the correct language using
just positive examples. But now, if we add a new language $V^*$ which contains all the strings from vocabulary $V$ to our class of language, that is, $$L_2 = V^*, V^* - \{x_1\}, L_2 = V^* - \{x_2\}, \ldots$$
The above algorithm would fail to identify this class of languages.

In fact, no algorithm using positive examples would be able to inductively identify this 
class of languages!\cite{gold67limit}. 
The key intuition is that if the data is all positive, no finite trace of positive data can distinguish whether the currently guessed language is the target language or is merely a subset of the target language. Now, if we consider the presence of negative counterexamples, the learning or synthesis
algorithm can begin with the first guess as $V^*$. If there are no counterexamples, then $V^*$ is the
correct language. If a counterexample $x_i$ is obtained, then the next guess is $V^* - \{x_i\}$, and
this is definitely the correct language.
}

\comment{ 
\textbf{TODO: fix min-inductive synth and historybounded-inductive synth accordingly}

\textbf{Not fixed notations below}
Now, we consider a variant of counterexample guided inductive synthesis where we use minimal counterexamples instead of arbitrary counterexamples $\mncegis$. 

\begin{definition}
A $\mncegis$ engine $\engine_{\mncegis} : \sigma \times \sigma \rightarrow \mathscr{\algo}$ is defined recursively below.\\
$\engine_{\mncegis} (\trace[n], \ce[n]) = F( \engine_{\mncegis} (\trace[n-1], \ce[n-1]),  \trace(n), \ce(n) ) $\\
where $F$ is a recursive function $\mathscr{\algo} \times \nat \times \nat \rightarrow \mathscr{\algo}$ that characterizes the learning engine and how it eliminates counterexamples, $\trace[n]$ is a trace for language $\lang$ and
$\ce$ is a counterexample sequence such that\\
$\ce(i) = \mnverifier_\lang(\lang(\engine_{\mncegis}(\trace[i-1], \ce[i-1]))) $.\\
$\engine_{\mncegis} (\sigma_0, \sigma_0)$ is a predefined constant representing an initial guess $\algo_0$ of the program, which, for example,
could be program corresponding to the language $\nat$.
\end{definition}

The convergence of the $\mncegis$ synthesis engine to a language and family of languages is defined in similar way as $\cegis$.

\begin{definition}
We say that $\engine_{\mncegis}$ converges to $\algo_i$, that is,
$\engine_{\mncegis}(\trace, \ce) \rightarrow \algo_i$
if and only if
there exists $k$ such that for all
$n\geq k$, $\engine_{\mncegis}(\trace[n], \ce[n]) = \algo_i$.
\end{definition}

\begin{definition}
$\engine_{\mncegis}$ identifies a language $\lang_i$ if and only if for all traces
$\trace$ of the language $\lang_i$
and counterexample sequences $\ce$,
$\engine_{\mncegis}(\trace, \ce) \rightarrow \algo_i$.
$\engine_{\mncegis}$ identifies a language family $\mathscr{\lang}$ if and only if $\engine_{\mncegis}$ identifies every $\lang_i \in \mathscr{\lang}$.
\end{definition}

\begin{definition}
$\mncegis = \{ \; \mathcal{L} \; | \; \exists \engine_{\mncegis} \; .\; \engine_{\mncegis}$ identifies $\mathscr{\lang} \}$
\end{definition}

Next, we consider another variant of counterexample guided inductive synthesis $\hmncegis$ where we use {\it history bounded} counterexamples instead of arbitrary counterexamples. We define a history bounded counterexample generating verifier before defining $\hmncegis$. Unlike the previous
cases, the verifier for generating history bounded counterexample is also provided
with the trace seen so far by the synthesis engine. The verifier generates a counterexample
smaller than the largest element in the trace. If there is no counterexample smaller than the
largest element in the trace, then the verifier does not return any counterexample.
From the definition below, it is clear that we need to only order elements in the language
and do not need to define an ordering of $\bot$ with respect to the language
elements since the comparison is done between an element in non-empty $\lang \cap \lang_i$
and elements $\trace[j]$ in the trace.

\begin{definition}
A  {verifier} $\hmnverifier_\lang$ for a language $\lang$ is a mapping from $\mathscr{\lang} \times \sigma $ to $\natbot$ such that\\ $\hmnverifier_\lang(\lang_i,\trace[n]) = m$ where $m \in \overline \lang \cap \lang_i \wedge m < \trace(j)$ for some $j \leq n$, and $\hmnverifier_\lang(\lang_i,\trace[n]) = \bot$ otherwise.
\end{definition}

\begin{definition}
A $\hmncegis$ engine $\engine_{\hmncegis} : \sigma \times \sigma \rightarrow \mathscr{\algo}$ is defined recursively below.
$\engine_{\hmncegis} (\trace[n], \ce[n]) = F( \engine_{\hmncegis} (\trace[n-1], \ce[n-1]),  \trace(n), \ce(n) ) $\\
where $F$ is a recursive function $\mathscr{\algo} \times \nat \times \nat \rightarrow \mathscr{\algo}$ that characterizes the learning engine and how it eliminates counterexamples, $\trace[n]$ is a trace for language $\lang$ and
$\ce$ is a counterexample sequence such that\\
$\ce(i) = \hmnverifier_\lang(\lang(\engine_{\mncegis}(\trace[i-1], \ce[i-1])), \trace[i-1]) $.\\
$\engine_{\hmncegis} (\sigma_0, \sigma_0)$ is a predefined constant representing an initial guess $\algo_0$ of the program, which for example,
could be program corresponding to the language $\nat$.
\end{definition}

The convergence of the $\hmncegis$ synthesis engine to a language and family of languages is defined in similar way as $\cegis$ and $\mncegis$.

\begin{definition}
We say that $\engine_{\hmncegis}$ converges to $\algo_i$, that is,
$\engine_{\hmncegis}(\trace, \ce) \rightarrow \algo_i$
if and only if
there exists $k$ such that for all
$n\geq k$, $\engine_{\hmncegis}(\trace[n], \ce[n]) = \algo_i$.
\end{definition}

\begin{definition}
$\engine_{\hmncegis}$ identifies a language $\lang_i$ if and only if for all traces
$\trace$ of the language $\lang_i$
and counterexample sequences $\ce$,
$\engine_{\hmncegis}(\trace, \ce) \rightarrow \algo_i$.
$\engine_{\hmncegis}$ identifies a language family $\mathscr{\lang}$ if and only if $\engine_{\hmncegis}$ identifies every $\lang_i \in \mathscr{\lang}$.
\end{definition}

\begin{definition}
$\hmncegis = \{ \; \mathcal{L} \; | \; \exists \engine_{\hmncegis} \; .\; \engine_{\hmncegis}$ identifies $\mathscr{\lang} \}$
\end{definition}
}

\section{Theoretical Analysis of CEGIS: Results}
\label{sec-res}
\label{sec:res}
%\section{Infinite Program Classes}~\label{sec-res}

In this section, we present the theoretical results when the class of languages 
(programs) is infinite.
\revise{ We consider two axes of variation. 
We first consider the case in which the inductive learning
technique has finite memory in Section~\ref{sec-resfin}, 
and then the case in which it has
infinite memory in Section~\ref{sec-resinf}.  
For both cases, we consider the four kinds of counterexamples mentioned in Section~\ref{sec-intro}
and Section~\ref{sec-not};
namely, arbitrary counterexamples, minimal counterexamples, constant bounded counterexamples and 
positive bounded counterexamples. }

For simplicity, our proofs focus on the case of partial specifications that are trace properties,
the common case in formal verification and synthesis. Thus, $\Spec$ comprises subsets of a target
specification language $\lang_\target$.
However, many of the results given here 
extend to the case of general specifications. Most of our theorems
show differences between language classes for CEGIS variants --- i.e., theorems showing that there is a specification on which
one variant of CEGIS converges while the other does not --- and for these, it suffices to show such a difference for 
the more restricted class of partial specifications.
The results also extend to the case of equality between language classes (e.g., Theorem~\ref{thm-min})
in certain cases; we make
suitable remarks alongside.

\subsection{Finite Memory Inductive Synthesis}
\label{sec-resfin}

We investigate the four language classes $\cegis, \mncegis, \bcegis$ and  $\hmncegis$ identified by
the synthesis engines  $\engine_{\cegis}$, $\engine_{\mncegis}$, $\engine_{\bcegis}$ and $\engine_{\hmncegis}$
and establish relations between them. We show that $\bcegis \subset \mncegis = \cegis$, 
$\hmncegis \not \subseteq \cegis$ and $\hmncegis \not \supseteq \cegis$.

\subsubsection{Minimal vs. Arbitrary Counterexamples}

We begin by showing that replacing a deductive
verification engine which returns arbitrary counterexamples with a deductive verification engine which 
returns minimal counterexamples does not change the power of counterexample-guided inductive synthesis.
The result is summarized in Theorem~\ref{thm-min}.

\begin{theorem} \label{thm-min}
The power of synthesis techniques using arbitrary counterexamples and those using minimal counterexamples are 
equivalent, that is, $\mncegis = \cegis$.
\end{theorem}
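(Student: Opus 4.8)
The plan is to establish the set equality $\mncegis = \cegis$ by proving the two inclusions separately; they are of very different character. The inclusion $\cegis \subseteq \mncegis$ is the easy one. Since $\min(\lang_i \setminus \lang_\target)$ always lies in $\lang_i \setminus \lang_\target$, the minimal-counterexample verifier $\mnverifier_\lang$ is itself an admissible instance of the arbitrary verifier $\verifier_\lang$. Consequently, a finite-memory learner $\learner$ that identifies a family $\mathscr{\lang}$ no matter which counterexamples the oracle $\deforacle_{\cegis}$ returns must in particular converge (in the sense of Definition~\ref{defn:convergence}) when that oracle happens to return minimal counterexamples; the very same engine then witnesses $\mathscr{\lang} \in \mncegis$.

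For the reverse inclusion $\mncegis \subseteq \cegis$, which carries the real content, I would fix a family $\mathscr{\lang} \in \mncegis$ together with a finite-memory learner $\learner_m$ that identifies it under $\deforacle_{\mncegis}$, and construct from it a learner $\learner_a$ that identifies $\mathscr{\lang}$ under the arbitrary-counterexample oracle $\deforacle_{\cegis}$. The learner $\learner_a$ runs $\learner_m$ as a subroutine and intercepts each correctness query: when $\learner_m$ proposes a candidate $\lang_i$ and the arbitrary verifier returns some $c \in \lang_i \setminus \lang_\target$, $\learner_a$ must hand $\learner_m$ the value $m = \min(\lang_i \setminus \lang_\target)$ that $\learner_m$ expects. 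Since $c$ bounds $m$ from above and $\lang_i$ is recursive, only the finitely many elements of $\lang_i$ below $c$ need be classified, and $m$ is the least of them lying outside $\lang_\target$. Crucially, by Definition~\ref{defn:transcript} the transcript $\tau$ enumerates all of $\lang_\target$, so every such element that is in fact positive is eventually revealed; in the limit $\learner_a$ therefore feeds $\learner_m$ exactly the minimal-counterexample stream it would see under $\deforacle_{\mncegis}$, and Definition~\ref{defn:identifies-lang} then transfers convergence from $\learner_m$ to $\learner_a$.

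The main obstacle is making this reconstruction effective under the finite-memory restriction. The transcript can certify that a small element of $\lang_i$ belongs to $\lang_\target$, but it can never, at any finite stage, certify that an element is \emph{absent} from $\lang_\target$ --- yet it is precisely such an absent element that forms the minimal counterexample. A naive ``wait until the element appears'' rule therefore stalls exactly in the case that matters, which would make $\learner_a$ converge to an incorrect candidate. I would resolve this by dovetailing the downward search with additional correctness queries from the interface $\orint = \{\qposwit, \qcorr\}$, so that smaller counterexamples are actively exposed rather than merely awaited, and by keeping the bounded state of this search --- the current candidate together with the best counterexample bound found so far --- within the learner's finite memory. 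The crux of the proof is then to argue that this interleaved search never blocks and that the reconstructed stream converges to the genuine minimal-counterexample sequence, so that convergence of $\learner_m$ is inherited by $\learner_a$. Finally, I would note that the equality is robust to generalizing $\Spec$ beyond trace properties, using the extension of $\verifier_\lang$ remarked upon just after its definition.
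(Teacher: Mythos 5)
Your proposal follows essentially the same route as the paper: the easy inclusion $\cegis \subseteq \mncegis$ is argued identically (the minimal verifier is one admissible arbitrary verifier, so a learner robust to arbitrary counterexamples converges in particular on minimal ones), and for $\mncegis \subseteq \cegis$ the paper, like you, has the learner reconstruct the minimal counterexample with finitely many additional correctness queries bounded above by the first arbitrary counterexample $c$, keeping only the best bound found so far in finite memory. The one thing the paper does that you leave vague is to make the ``downward search'' concrete: it characterizes $\mnverifier_\lang(\lang_i)$ as the least $j \leq c$ with $\verifier_\lang(\{j\}) \neq \bot$, i.e.\ it issues correctness queries on singleton languages (membership queries in disguise), so the search terminates after at most $c$ queries outright and no dovetailing with the transcript is needed --- which directly discharges the ``never blocks'' concern you identify as the crux.
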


\begin{proof}
$\mnverifier_\lang$ is a special case of $\verifier_\lang$ in that a minimal counterexample
reported by $\mnverifier_\lang$ can be treated as arbitrary counterexample to simulate \acta{$\engine_\cegis$ using 
$\engine_\mncegis$}. 
%Intuitively,
%using minimal counterexample is not worse than using arbitrary counterexamples.
Thus, $\cegis \subseteq \mncegis$.

The more interesting case to prove is $\mncegis \subseteq \cegis$. For a language $\lang$, 
let $\mncegis$ converge to the correct 
%program $\algo$, that is $\langfun(\algo)=\lang$, 
language $\lang$ on  \todo{transcript} $\trace$. 
We show that $\engine_{\cegis}$ can simulate $\engine_{\mncegis}$ and also converge to $\lang$
%$\algo$
on \todo{transcript} $\trace$.
The proof idea is to show that a finite learner can simulate
$\mnverifier_\lang$ by making a finite number of calls to $\verifier_\lang$. 
Therefore, the learner sees the same counterexample sequence with $\verifier_\lang$
as with $\mnverifier_\lang$ and thus converges to the same language in both cases.

{\color{black} % susmit
Consider an arbitrary step of the dialogue between learner and verifier
when a counterexample is returned.
Let the arbitrary counterexample returned by the verifier
for a candidate language $\lang_i$ 
be $c$, that is $\verifier_\lang(\lang_i) = c$. 
Thus, $c$ is an upper bound on the minimal counterexample returned by $\mnverifier_\lang$. 
The latter can be recovered using the following characterization:
$$\mnverifier_\lang( \lang_i) = \text{ minimum } j \text{ such that }  \verifier_\lang(\{j\}) \text{ is not } \bot \text{ for } 0 \leq j \leq \verifier_\lang(\lang_i) $$
The learner can thus perform at most $c$ queries
to $\verifier_\lang$ to compute the minimal counterexample that would be
returned by $\mnverifier_\lang$. In case of totally ordered set (such as $\nat$),
we could do this more efficiently using binary search. 
At each stage of the iteration, 
the learner needs to store the smallest counterexample
returned so far. 
Thus, the work performed by the learner
in each iteration to craft queries to $\verifier_\lang$ 
can be done with finite memory.
$\mnverifier_\lang(\lang_i)$ can be computed
using finite memory and using at most $c= \verifier_\lang(\lang_i)$ calls of \acta{$\verifier_\lang$}. 
} %%susmit

\comment{
Then, $\verifier(\lang \cap \{c\}, \lang_i \cap \{c\}) \not = \bot$ 
and hence, $c$ is upper bound on the number of 
times that $\verifier$ needs to be called in finding the minimum 
counterexample. In other words,
$$\mnverifier(\lang, \lang_i) = \text{ minimum } j \text{ such that }  \verifier(\lang \cap \{j\}, \lang_i \cap \{j\}) \text{ is not } \bot \text{ for } 0 \leq j \leq \verifier(\lang, \lang_i) $$
 We only need finite memory to 
store the current value of  $j$ while iteratively calling the verifier 
$\verifier$.  Thus, $\mnverifier(\lang,\lang_i)$ can be computed
using finite memory and using at most $c= \verifier(\lang, \lang_i)$ calls of $\verifier$. 
}

Thus, $\engine_{\cegis}$ can simulate $\engine_{\mncegis}$
by finding the minimal counterexample at each step using the verifier
$\verifier$ iteratively as described above. 
This implies that $\mncegis = \cegis$.
\qed

\end{proof}

\comment{
We  use an auxiliary variable to store a language of the form $\lang_{0i}
= \{ j | j \leq i \text{ and } j \geq 0 \}$ where $i$ is bounded by the largest minimal
counterexample obtained during synthesis by $\engine_{\mncegis}$. 
This language is used in framing the correct
verification query for finding minimal counterexamples. \\
$$\engine_{sim}(L_0, \trace[n]) = \lang_n  \; \mathtt{ where } \; \lang_{i} = \learner (\lang_{i-1}, \trace(i), \mncemap(\lang_{i-1})) \; \mathtt{for} \; i=1,2,\ldots,n $$

Given a candidate language $\lang_i$, we can find the minimal counterexample
for $\lang_i$ using only an arbitrary counterexample generation verifier
$\verifier$ using the following technique. 
The proof idea is to simulate $\engine_{\mncegis}$ in two phases. In one phase, $\engine_{\cegis}$ finds the 
minimal counterexample~\footnote{The elements in our language are natural numbers with
a total ordering and hence, we can ensure that the counterexample is minimum.
In general, the ordering could be partial and the counterexample
is only minimal.} for a candidate language $L_j$ by iteratively calling $\verifier$. In the second phase, $\engine_{\cegis}$ 
consumes the next elements from the \todo{transcript}. 
While searching for minimal counterexample, $\engine_{\cegis}$ needs 
to store the backlog of the \todo{transcripts}. 

We describe a simulation of $\engine_{\mncegis}$ using $\verifier$. 
The simulation uses two auxiliary variables: one to store the 

We define a mapping $\engine_{sim}$ from $\mathscr{\lang} \times \Sigma \rightarrow \mathscr{\lang}$ which simulates $\engine_{\mncegis}$ while maintaining the backlog. Since the learner
has finite memory, we will also show that the backlog is always finite. 
We also use an auxiliary variable to store a language of the form $\lang_{0i}
= \{ j | j \leq i \text{ and } j \geq 0 \}$ where $i$ is bounded by the largest minimal
counterexample obtained during synthesis by $\engine_{\mncegis}$. 
This language is used in framing the correct
verification query for finding minimal counterexamples. \\
$$\engine_{sim}(L_0, \trace[n]) = \lang_n  \; \mathtt{ where } \; \lang_{i} = \learner (\lang_{i-1}, \trace(i), \mncemap(\lang_{i-1})) \; \mathtt{for} \; i=1,2,\ldots,n $$
We need to prove that $\mncemap$ can be computed with finite bounded memory
and finite bounded time in order to establish that $T_{sim}$ can 
simulate $\engine_{\mncegis}$ with finite memory. 
$$\mncemap(\lang_i) = \text{ minimum } i \text{ such that }  \verifier(\lang \cap \lang_{0i}, \lang_i \cap \lang_{0i}) \text{ is not empty }$$
Starting from $\lang_{00}$ and enumerating $\lang_{01}, \lang_{02} \ldots$, the computation of $\mncemap$ would find the 
minimum $i$ such that the verification query returns a counterexample if 
$\lang_i$ is not the correct language. This counterexample is clearly the
minimum counterexample. The number of queries needed is bounded by the 
size of the largest minimum counterexample in $\engine_{\mncemap}$. 
While the minimum counterexample is being bound, the incoming transcript
is stored in the backlog for being processed later. The size of this backlog
is bounded by the length of the transcript used by $\engine_{\mncemap}$ to
converge the correct language multiplied by the size of the largest
minimum counterexample used by $\engine_{\mncemap}$. Thus, 
$\engine_{\sim}$ can simulate $\engine_{\mncemap}$ using finite bounded
memory.
}

\comment{
$\mncemap$ denotes a mapping from languages to minimal counterexamples if 
one exists and is known during simulation. 
$$\mncemap: \mathscr{\lang} \rightarrow \nat \cup \{ \top \} \cup \{ \bot \}$$
Intuitively, this maps a candidate language $\lang_i$ 
 to  minimal counterexample as known 
to $\engine_{\cegis}$ so far in simulating $\engine_{\mncegis}$. $\mncemap$ maps the languages for which minimal counterexamples
are unknown to $\top$. If there is no counterexample to a given language, 
$\mncemap$ maps the language to $\bot$. At any given step, only finite number of languages have their minimal 
counterexamples known, and the rest are mapped to $\top$.\\
\\
We define a mapping $\engine_{\mncemap}$ from 
$\mathscr{\lang} \times \Sigma \rightarrow \mathscr{\lang}$ which simulates
$\engine_{\mncegis}$ based on the known $\mncemap$ so far, that is,
$$\engine_{\mncemap}(\lang_0, \trace[n]) = \lang_n \; \mathtt{ where } \; \lang_{i} = \learner (\lang_{i-1}, \trace(i), \mncemap(\lang_{i-1})) \; \mathtt{for} \; i=1,2,\ldots,n \; \mathtt{and} \; \lang_0 = lang$$
if $\mncemap(\lang_{i})$ is not $\top$ for $i=1,2,\ldots$ and it is undefined if  $\mncemap(\lang_{i})$ is 
$\top$ for any $i$. 

We now present the formal description of the proof.
For this simulation, we use some auxiliary variables maintained by 
$\engine_{\cegis}$ which store some finite information required
for simulating $\engine_{\mncegis}$. The key idea is for $\engine_{\cegis}$ to iteratively guess the minimal 
counterexample in multiple micro-steps
and then use that to simulate one step of $\engine_{\mncegis}$. 
%But simulating each step of $\engine_{\mncegis}$ takes finite number
%of micro-steps for $\engine_{\cegis}$ and uses finite storage.\\
\\
The first auxiliary component for this simulation is a minimal counterexample map
\comment{Intuitively, this maps a candidate program $\algo_i$ (language $\lang_i$) to  minimal counterexample as known 
to $\engine_{\cegis}$ so far in simulating $\engine_{\mncegis}$. If minimal counterexample is not known for a 
given program, $\mncemap$ maps the program to $\top$. If there is no counterexample to a given program, 
$\mncemap$ maps the program to $\bot$. At any given step, only finite number of programs have their minimal 
counterexamples known, and the rest are mapped to $\top$.}

Next, \\The first auxiliary component for this simulation is a minimal counterexample map
$$\mncemap: \mathscr{\lang} \rightarrow \nat \cup \{ \top \} \cup \{ \bot \}$$

Next, we define a mapping $\engine_{\mncemap}$ from 
$\mathscr{\lang} \times \Sigma \rightarrow \mathscr{\lang}$ which simulates
$\engine_{\mncegis}$ based on the known $\mncemap$ so far, that is,
$$\engine_{\mncemap}(\lang_0, \trace[n]) = \lang_n \; \mathtt{ where } \; \lang_{i} = \learner (\lang_{i-1}, \trace(i), \mncemap(\lang_{i-1})) \; \mathtt{for} \; i=1,2,\ldots,n \; \mathtt{and} \; \lang_0 = \lang$$
if $\mncemap(\lang_{i})$ is not $\top$ for $i=1,2,\ldots$ and it is undefined if  $\mncemap(\lang_{i})$ is 
$\top$ for any $i$. \\
\\
$\engine_{\mncemap}$ simulates $\engine_{\mncegis}$ using the same counterexamples and intermediate candidate languages 
for the known positive history $\mncemap$. If $\mncemap$ is $\top$ for any of the intermediate languages, $\engine_{\mncegis}$
is undefined.
Further, we record the language proposed by $\engine_{\mncegis}$ into the variable $\lang_\siml^m$ and the last
 language which initiated search for minimal counter example in $\lang_{last}$. 
\todo{A candidate language belongs to the indexed
family of languages and hence,
it is represented by its index.
The intermediate hypothesis languages occupy unit memory and this 
 enables us to record a bounded number of 
 intermediate hypothesis languages. 
 Such a memory model is used in learning theory
 literature to analyze whether a language family is learnable
 by recording only a finite number of intermediate hypothesis
 and a finite number of examples~\cite{gold67limit,jain1999systems} }.
$\trace_\siml^m$  records
the part of the \todo{transcript} already simulated by $\engine_{\cegis}$ and $\mu$ is the candidate minimal counterexample
while searching for minimal counterexample.
The maximum length of $\trace_\siml^m$ and 
the number of languages $\lang$ in $\mncemap$ not mapped
 to $\top$ is bounded by the length of the
 \todo{transcript} 
 required to converge to correct language by $\engine_{\mncegis}$
 that we are simulating. Hence, the memory required to store
 these auxiliary variables is also bounded. 
\\
\\
\noindent \textbf{Initialization}:
All the internal auxiliary variables are initialized as follows. $\lang_\siml^0 = \lang_0$ which is the same initialization as
$\engine_{\mncegis}$ being simulated, $\mu = 0$, $\lang_{last} = \lang_0$, and $\trace_\siml^0 = \sigma_0$. $\mncemap$ is initialized
to map all $\lang$ to $\top$ as no minimal counterexamples are known at the beginning.\\
\\
\noindent \textbf{Update}:
We describe the updates made in each iteration $m$. One of the following cases is true in each iteration.\\
\textbf{Case 1}: If $\lang_{last} = \lang_\siml^m$,  that is, we are in lock-step with the $\mncegis$ synthesis algorithm with the same candidate language.\\
\textbf{Case 1.1}: If there is any counterexample for $\lang_\siml^m$ (found using the verifier for $\engine_{\cegis}$), that is, the candidate
language has a counterexample and we need to find the corresponding minimal counterexample.\\
\textbf{Case 1.1.1}: If $\mncemap(\lang_\siml^m)$ is not $\top$, that is, the minimal counterexample for candidate language is already part of
$\mncemap$.\\
Let $\trace_{done}$ be the longest prefix for $\trace_\siml^m \trace(m+1)$ such that $\engine_{\mncemap}(\lang_\siml^m, \trace_{done})$ is defined. Since $\mncemap(\lang_\siml^m)$ is not $\top$, $\trace_{done}$ is not empty.
$\trace_{done} \trace_\siml^{m+1} = \trace_\siml^m \trace(m+1)$, $\lang_\siml^{m+1} = \engine_{\mncemap}(\lang_\siml^m, \trace_{done})$, $\lang_{last} = \lang_\siml^{m+1}$\\
We use the minimal counterexample from $\mncemap$ and then advance the simulation $\trace_{done}$ \todo{transcript}  
ahead if $\engine_{\mncemap}$ can simulate
the \todo{transcript} 
using minimal counterexamples from $\mncemap$  for all the intermediate candidate languages.\\
\textbf{Case 1.1.2}: If $\mncemap(\lang_\siml^m)$ is $\top$, that is, the minimal counterexample for candidate language is not known.\\
 $\trace_\siml^{m+1} = \trace_\siml^m \trace(m+1)$, $\lang_\siml^{m+1} = \lang_\siml^m \cap \{0\}$.\\
We initialize the candidate language $\lang_\siml^{m+1} $ for searching for minimal counterexample to  
$\lang_\siml^{m} \cap \{0\}$, that is,
it is either the language consisting only of the minimal element $\{ 0\}$ or is empty depending
on whether the minimal element belongs to the target language or not.
Since our verifier uses a subset query, empty language will return no counterexamples.  \\
\textbf{Case 1.2}: If there is no counterexample for $\lang_\siml^m$,\\
Let $\trace_{done}$ be the longest prefix for $\trace_\siml^m \trace(m+1)$ such that $\engine_{\mncemap}(\lang_\siml^m, \trace_{done})$ is defined.
$\trace_{done} \trace_\siml^{m+1} = \trace_\siml^m \trace(m+1)$, $\lang_\siml^{m+1} = \engine_{\mncemap}(\algo_\siml^m, \trace_{done})$, $\lang_{last} = \lang_\siml^{m+1}$
and $\mncemap(\lang_\siml^m) = \bot$.\\
The candidate language seen so far is subset of the target language and we consume as much of the \todo{transcript} $\trace_{done}$  as possible
for which $\engine_{\mncemap}$ is defined. \\
\\
\textbf{Case 2}: If $\lang_{last} \not = \lang_\siml^m$, that is, the simulation is trying to find the minimal counterexample as a result of case 1.1.2.\\
\textbf{Case 2.1}: If there is any counterexample $\ce_\siml$ for $\lang_\siml^m$ (found using the verifier for $\engine_{\cegis}$),\\
 Update $\mncemap(\lang_{last}) = \ce_\siml$. Let $\trace_{done}$ be the longest prefix for
$\trace_\siml^m \trace(m+1)$ such that $\engine_{\mncemap}(\lang_{last}, \trace_{done})$ is defined.
$\trace_{done} \trace_\siml^{m+1} = \trace_\siml^m \trace(m+1)$, $\lang_\siml^{m+1} = \engine_{\mncemap}(\lang_{last}, \trace_{done})$, $\lang_{last} = \lang_\siml^{m+1}$, $\mu = 0$\\
If there is a counterexample, since the candidate language was a single element set or empty, and verification engine checks for
containment in the target language, the only element in the language has to be the counterexample. 
\todo{Further, 
%starting from step 1.1.2 and with
%possible increments in step 2.2, 
this is the minimal counterexample (minimality is due to step
1.1.2 and step 2.2), 
and we add it to the $\mncemap$.}\\
\textbf{Case 2.2}: If there is no counterexample for $\lang_\siml^m$, that is, we have not yet found the minimal counterexample.\\
$\mu = \mu +1 $, $\lang_\siml^{m+1} = \lang_{last} \cap \{\mu\}$, and  $\trace_\siml^{m+1} = \trace_\siml^m \trace(m+1)$.\\
We increment $\mu$ and search for whether $\lang_{last} \cap \{\mu\}$ is in the target language. This is either empty or is a language
consisting of a single element $\{\mu\}$.\\
\\
\textbf{Progress}:
Now, we first show progress of the simulation in parsing \todo{transcript} $\trace[m]$. For any $m$, there exists $m' > m$ such that $\trace[m] = \trace_{done}^m \trace_\siml^m$, $\trace[m'] = \trace_{done}^{m'} \trace_\siml^{m'}$ and $\trace_{done}^{m}$ is a proper prefix of
$\trace_{done}^{m'}$. This follows from the observation that Case 2.2 cannot be repeated infinitely after Case 1.1.2 since $\lang_{last}$ has at least one counterexample. So, case 2.1 would eventually become true and hence, 
$\mncemap$ is extended. So, $\engine_{\mncegis}$ would be defined for a longer prefix and case 1.1.1
would be true. And consequently, the \todo{transcript} $\trace_{done}$ simulated so far is extended. \\
\\
\textbf{Correctness}:
Let $\engine_{\mncegis}$ converge on $\trace$ after reading prefix $\trace[n]$. 
From progress, after some $m \geq n$, $\trace[n]$ would
be a prefix of $\trace_{done}^m$. Since $\engine_{\mncegis}$  converges after reading 
$\trace[n]$, $\learner(\lang_n,\trace(n'),\ce(n')) = \lang_n$
for $n' > n$. Now, $\mncemap$ is not $\top$ for all intermediate languages 
$\lang_{m''}$ in $\engine_{\mncegis}$ for $m'' \leq m$.
So, $\lang_\siml^{m} = \engine_{\mncemap}(\lang_\siml^0, \trace[m]) = \engine_{\mncemap}(\lang_0, \trace[m]) = \lang_n$ and
for all $m' > m \geq n$, $\lang_\siml^{m'} = \learner(\lang_n,\trace(n),\ce(n))  = \lang_n$
So, $\engine_{\cegis}$ also converges to
$\lang_n$, that is, $\mncegis \subseteq \cegis$.\\
\\
}

Thus, $\mncegis$ successfully converges to the correct language 
 if and only if
$\cegis$ also successfully converges to the correct language. So, there is no increase or decrease
in power of synthesis by using the deductive verifier that provides minimal counterexamples.

{\em Remark:} The above result (and its analog in Sec.~\ref{sec-resinf}) 
also holds in the case of general specifications when CEGIS is
replaced by Generalized CEGIS. In particular, if either crafted correctness ($\qccorr$) or membership queries ($\qmem$)
are introduced, then it is easy to show that $\cegis$ can simulate $\mncegis$ by mimicking each
step of $\mncegis$ by recovering the same counterexample it used with suitable $\qmem$ or $\qccorr$ queries.
In this case, $\cegis$ can converge to every language that $\mncegis$ converges to, and hence identifies
the same class of specifications.

%\subsection{Synthesis Using {\it \bf History Bounded} Counterexamples}

\subsubsection{Bounded vs. Arbitrary Counterexamples}

We next investigate $\bcegis$ and compare its relative synthesis power  to $\cegis$.
As intuitively expected, $\bcegis$ is strictly less powerful than $\cegis$ as summarized
in Theorem~\ref{thm-b} which formalizes the intuition. 

\begin{theorem}
The power of synthesis techniques using bounded counterexamples 
is less than those using counterexamples, that is, $\bcegis \subset \cegis$.
\label{thm-b}
\end{theorem}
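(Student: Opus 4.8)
The plan is to establish the two inclusions $\bcegis \subseteq \cegis$ and $\cegis \not\subseteq \bcegis$ separately; together they give the strict containment. The inclusion $\bcegis \subseteq \cegis$ is the routine direction and follows the template of Theorem~\ref{thm-min}. Given any $\bcegis$ engine with bound $\bound$ that identifies a family $\mathscr{\lang}$, I would build a $\cegis$ engine whose finite-memory learner simulates it step for step. The only thing to reconstruct is the response of $\bverifier_{\bound,\lang}$ on a candidate $\lang_i$: since the learner knows the program for $\lang_i$, it can test membership $j \in \lang_i$ directly, and it can test $j \notin \lang$ for each $j < \bound$ by issuing the singleton query $\verifier_\lang(\{j\})$, which returns $\bot$ exactly when $j \in \lang$. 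Scanning $j = 0, \dots, \bound-1$ recovers precisely the counterexample that $\bverifier_{\bound,\lang}(\lang_i)$ would return, or detects that it is $\bot$; this costs at most $\bound$ calls to $\verifier_\lang$ and a counter bounded by $\bound$, hence finite memory. By the same reasoning as in Theorem~\ref{thm-min}, the choice among the finitely many sub-$\bound$ counterexamples is immaterial to the identified class, so the simulation is faithful and $\bcegis \subseteq \cegis$.

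For strictness I would exhibit a family identifiable by $\cegis$ but by no $\bcegis$ engine. Take $\mathscr{\lang} = \{\nat\} \cup \{\, \nat \setminus \{k\} \mid k \in \nat \,\}$, an indexed family of recursive languages. To see $\mathscr{\lang} \in \cegis$, let the learner start from the guess $\nat$ and retain it while the verifier returns $\bot$: if the target is $\nat$ no counterexample ever arises and it converges to $\nat$, while if the target is $\nat \setminus \{k\}$ the verifier must return the unique element $k$ of $\nat \setminus (\nat \setminus \{k\})$, after which the learner switches to $\nat \setminus \{k\}$ and stays there. A single counterexample thus pins down the target on every transcript, so a finite-memory learner converges, giving $\mathscr{\lang} \in \cegis$.

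The crux --- and the main obstacle --- is proving $\mathscr{\lang} \notin \bcegis$. Fix any $\bcegis$ engine with bound $\bound$. The key observation is that on the targets $\nat$ and $\nat \setminus \{k\}$ with $k \ge \bound$ the bounded verifier is useless: for target $\nat$ we have $\lang_i \setminus \nat = \emptyset$, and for target $\nat \setminus \{k\}$ the only possible counterexample is $k \ge \bound$, so in both cases the bounded verifier always answers $\bot$. Hence on the subfamily $\{\nat\} \cup \{\, \nat \setminus \{k\} \mid k \ge \bound \,\}$ the engine degenerates into a learner that sees only the positive stream $\trace$, i.e.\ a Gold-style text learner. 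I would then argue that this subfamily is not identifiable from positive data alone: the language $\nat$ has no finite tell-tale within it, since for any finite $S = \samples(\trace[n])$ one can pick $k \ge \bound$ with $k \notin S$, making $S \subseteq \nat \setminus \{k\} \subsetneq \nat$ with $\nat \setminus \{k\}$ in the subfamily. Concretely, if the engine converges to $\nat$ on some text by step $n$, that prefix extends to a text for $\nat \setminus \{k\}$ on which the (all-$\bot$) counterexample sequence is identical, so the engine cannot also converge to $\nat \setminus \{k\}$, contradicting identification of $\mathscr{\lang}$. The difficulty is in making this diagonalization watertight --- either by invoking Angluin's finite tell-tale characterization of text-identifiable indexed families or by constructing a single adversarial transcript directly --- and in carefully confirming that the all-$\bot$ counterexamples genuinely reduce $\bcegis$ to text learning on the chosen targets.
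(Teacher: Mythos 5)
Your proposal is correct, and while its skeleton (one inclusion plus a separating family) matches the paper's, both halves are executed differently. For $\bcegis \subseteq \cegis$ the paper merely says the learner can ignore counterexamples from $\verifier_\lang$ that exceed $\bound$; you instead reconstruct the bounded verifier's response exactly, scanning $j = 0,\ldots,\bound-1$ with singleton queries $\verifier_\lang(\{j\})$ --- the same device the paper itself deploys in the proof of Theorem~\ref{thm-min}. Your version is arguably tighter, since discarding an over-$\bound$ counterexample does not by itself reveal whether a sub-$\bound$ one exists; the one residual wrinkle (which the paper glosses over as well) is that $\bverifier_{\bound,\lang}$ may choose among several sub-$\bound$ counterexamples, whereas your scan recovers the minimal one, so strictly speaking the simulation fixes a particular instantiation of the bounded oracle. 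For strictness the paper uses Language Family~\ref{lf:cbweak}, in which every language lies entirely above $\bound$: all counterexamples then exceed the bound, the verifier is mute on every candidate, and $\bcegis$ fails, with both the implicit appeal to non-learnability from positive data alone and the claim that $\cegis$ succeeds left essentially unargued. Your family $\{\nat\} \cup \{\, \nat \setminus \{k\} \mid k \in \nat \,\}$ buys two things: both halves become explicit and easy ($\cegis$ identifies after at most one counterexample, and the failure of $\bcegis$ on the subfamily with $k \geq \bound$ is a genuine Gold-style text-learning impossibility, correctly certified by the absence of a finite tell-tale for $\nat$), and the family does not depend on $\bound$, so a single family defeats every bound simultaneously, whereas the paper's family is parameterized by the engine's bound. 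The one caution --- which you flag yourself --- is that the phrasing ``if the engine converges to $\nat$ on some text by step $n$, that prefix extends to a text for $\nat \setminus \{k\}$'' is not yet watertight, since the learner may revise its guess later on the extended text; the standard repair is exactly the locking-sequence (equivalently, Angluin tell-tale) argument you name, which is applicable here because the uniformly-$\bot$ counterexample stream on that subfamily reduces the $\bcegis$ engine to a pure text learner.
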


\begin{proof}
Since bounded counterexample is also a counterexample, we can easily simulate a bounded verifier
$\bverifier$ using a $\verifier$ by ignoring counterexamples from $\verifier$ if they are larger
than a specified bound $\bound$ which is a fixed parameter and can be stored in the finite memory
of the inductive learner. Thus, $\bcegis \subseteq \cegis$. 

We now describe a language class  for which the corresponding languages
cannot be identified using bounded counterexamples.
\begin{langfamily}
: $\mathcal{\lang}_{notcb} =\{ \lang_i |  
\todo{ i > \bound \text{ and } \lang_i = \{ n | n \in \nat \wedge n > i \}}
\}$ where
$\bound$ is a constant bound.
\label{lf:cbweak}
\end{langfamily}
\acta{We provide this by contradiction. Let us assume that there 
is a $\engine_\bcegis$ that 
can identify languages in 
 $\mathcal{\lang}_{notcb}$. Let the verifier used by $\engine_\bcegis$ 
 be $\bverifier$ and 
$\bound'$ be the constant bound on the counterexamples produced by
 $\bverifier$.
Let us consider the languages
$\mathcal{\lang}_{notcbfail}  =\{ \lang_j |  \lang_j \in \mathcal{\lang}_{notcb} \wedge j> B'  \} \subseteq  \mathcal{\lang}_{notcb}$, the set of counterexamples 
that can be produced by
$\bverifier$ is the same for these languages
(that is, $\{ n | n \in \nat \wedge n \leq B' \}$) 
since the counterexamples
produced by $\bverifier$ cannot be larger than $B'$. 
Thus, a synthesis engine $\engine_\bcegis$ cannot
 distinguish between languages in $\mathcal{\lang}_{notcbfail}$ which is
 a contradiction.
 Thus, $\engine_\bcegis$ cannot identify all languages in 
 $\mathcal{\lang}_{notcb}$. 
 $\engine_\cegis$ can identify all languages in  $\mathcal{\lang}_{notcb}$
 using a simple learner which proposes $L_i$ as the hypothesis
 language if $i$ is the smallest positive example seen so far. So, $\bcegis \subset \cegis$.
 }
\qed
 
\end{proof}

We next analyze $\hmncegis$,  and show that it is not equivalent to $\cegis$ or contained in it.
So, replacing a deductive verification engine
which returns arbitrary counterexamples with a verification engine which returns
counterexamples bounded by history of positive examples has impact on the power of the synthesis technique.
But this does not strictly increase the power of synthesis. Instead, the
use of positive history bounded counterexamples allows languages from
new classes to be identified but at the same time, language from some
language classes which could be identified by 
$\cegis$ can no longer be identified using positive bounded counterexamples.
The main result regarding the power of synthesis techniques using
 positive bounded counterexamples is summarized in Theorem~\ref{thm-hb}.

\begin{theorem} \label{thm-hb}
The power of synthesis techniques using arbitrary counterexamples and those using
positive bounded counterexamples are not equivalent, and none is more powerful than the other.
$\hmncegis \not = \cegis$. In fact,  $\hmncegis \not \subseteq \cegis$ and $\cegis \not \subseteq \hmncegis$.
\end{theorem}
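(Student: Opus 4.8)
The plan is to establish the two non-inclusions separately, each by exhibiting a language family that witnesses it; together they give $\hmncegis \neq \cegis$ with neither containing the other. Throughout I would invoke Theorem~\ref{thm-min} to treat $\cegis$ and $\mncegis$ interchangeably, so that on the $\cegis$ side I may assume without loss of generality that the verifier returns the \emph{minimal} counterexample $\min(\lang_i \setminus \lang_\target)$.

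For $\cegis \not\subseteq \hmncegis$ I would use the superfinite-style family $\mathscr{\lang}_2 = \{\nat\} \cup \{\,\{0,1,\ldots,M\} \mid M \in \nat\,\}$. The key observation is that any two members differ only on an ``upper'' set: if a hypothesis $\lang_i \in \mathscr{\lang}_2$ is not a subset of the target, then every element of $\lang_i \setminus \lang_\target$ exceeds the largest element the target can ever contribute as a positive example. Hence $\hmnverifier_{\lang_\target}$ returns $\bot$ on \emph{every} family hypothesis, so an $\hmncegis$ engine on $\mathscr{\lang}_2$ degenerates to a learner that sees only the positive transcript. Since $\mathscr{\lang}_2$ consists of an infinite language together with the ascending chain of finite sets whose union is that language, it is not identifiable from positive data alone~\cite{gold67limit}, so $\mathscr{\lang}_2 \notin \hmncegis$. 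Conversely $\mathscr{\lang}_2 \in \cegis$: the learner proposes $\nat$ and, on receiving any (necessarily large) counterexample, switches to proposing $\{0,1,\ldots,M\}$ for $M$ the largest positive example seen. If the target is $\nat$ no counterexample ever arrives and the guess stays $\nat$; otherwise it converges to the correct finite segment. This learner stores only a number and a mode bit, so it is a legitimate finite-memory learner.

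For the harder inclusion $\hmncegis \not\subseteq \cegis$, the mechanism I would exploit is the interaction between the \emph{consistency} requirement on the finite-memory learner and the two verifiers. Because each hypothesis $\lang_{n+1}$ must exclude every counterexample seen so far, a counterexample permanently bars all family members containing it. Under $\cegis$ the verifier is \emph{forced} to return an actual element of $\lang_i \setminus \lang_\target$ whenever one exists, so probing with a large over-approximation costs the learner that hypothesis (and every member sharing the returned element) forever; under $\hmncegis$ the same probe returns $\bot$ as long as the discrepancies lie beyond the positive horizon, so it bars nothing and may be reused indefinitely while positive examples accumulate. I would therefore construct a family $\mathscr{\lang}_1$ in which the only successful strategy requires exactly this repeated, non-committal probing with a fixed over-approximation: a finite-memory $\hmncegis$ learner retains the probe across rounds under its $\bot$ answers and converges, whereas each large counterexample a $\cegis$ learner is forced to accept either derails it from the needed over-approximation or forces it to track an unboundedly growing exclusion set that its single-hypothesis state cannot encode.

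The main obstacle is precisely this direction, in two parts: (i) pinning down a concrete $\mathscr{\lang}_1$ that forces the ``reusable probe'' behavior yet is still identified by \emph{some} explicit $\hmncegis$ learner, and (ii) proving that \emph{no} finite-memory $\cegis$ learner identifies it. For (ii) I would diagonalize with a locking-sequence argument against all finite-memory $\cegis$ engines (appealing to $\mncegis = \cegis$ to restrict attention to minimal-counterexample engines): for each candidate engine I would build a target in $\mathscr{\lang}_1$ on which its forced exclusions, combined with its bounded state, drive it into an infinite sequence of hypothesis changes, contradicting convergence. This is the step where finite memory is indispensable—in the unbounded-memory regime the learner could buffer positives and re-probe, and indeed there arbitrary/minimal counterexamples dominate positive-bounded ones—so the construction must make the required bookkeeping provably exceed any fixed finite state.
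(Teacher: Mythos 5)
Your first direction ($\cegis \not\subseteq \hmncegis$) is correct and in fact tightens the paper's own argument. The paper's Lemma~\ref{lemma-cegisbetter} uses only the chain of initial segments $\lang_i = \{ n \mid n \leq i\}$ and concludes non-identifiability from the observation that $\hmnverifier$ always answers $\bot$; but that observation alone does not suffice, since this chain \emph{is} identifiable from positive data by a finite-memory learner that tracks the maximum example seen. Your modification --- adjoining $\nat$ to obtain Gold's superfinite class and invoking its non-learnability from text --- supplies exactly the missing justification, and your two-mode finite-memory $\cegis$ learner (guess $\nat$, then switch to tracking the maximum positive after the first counterexample) is sound. So on this half you follow essentially the paper's route, with a repair.

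The genuine gap is in the harder direction $\hmncegis \not\subseteq \cegis$, where you explicitly leave both the witness family $\mathscr{\lang}_1$ and the impossibility proof as open obstacles --- and these are the mathematical heart of the theorem. The paper's Lemma~\ref{lemma-hcegisbetter} constructs a concrete family $\mathcal{L}_{pb} = \mathcal{L}^{32} \cup \mathcal{L}^{2}$: finite subsets of $\{3^j 2^i \mid j \in \{0,1\}\}$ containing some element $3\cdot 2^k$, together with infinite subsets of $\{2^i\}$ indexed by their minimum element. Moreover, the working mechanism there is not quite your ``reusable fixed over-approximation probe'': the $\hmncegis$ learner wins via a \emph{recovery} trick --- once some $3\cdot 2^j$ appears (so the target must be finite), it proposes singleton languages $\{3^k\}$ for $k = 2,3,\ldots$ to locate the positive horizon (the verifier returns the element as a counterexample iff it lies below some previously seen positive), and then proposes singletons $\{2^j\}$ below that horizon, using $\hmnverifier$'s counterexample/$\bot$ answers as history-bounded membership queries to reconstruct every \emph{forgotten} positive example, after which a Gold-style finite-language learner converges, all in finite memory. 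The impossibility half also differs from your sketch of forcing ``an infinite sequence of hypothesis changes'': the paper uses a locking-sequence argument in which, after the finite-memory learner converges on a prefix $\tau_s$ for an infinite target in $\mathcal{L}^{2}$, the two transcripts $\tau_s\, 2^m\, (3\cdot 2^p)\, \bot^\omega$ and $\tau_s\, (3\cdot 2^p)\, \bot^\omega$ --- valid transcripts of two \emph{distinct} finite languages in $\mathcal{L}^{32}$ --- induce identical learner states and hence identical limits, so the engine must fail on one of them. You correctly identify the underlying phenomenon (with finite memory, forgotten positives are unrecoverable under $\cegis$ but recoverable under $\hmncegis$, which is precisely why the inclusion flips to $\infhmncegis \subseteq \infcegis$ with unbounded memory), but absent a concrete family on which both the explicit $\hmncegis$ learner and the diagonalization against all finite-memory $\cegis$ engines actually go through, the harder half of the theorem remains unproven in your proposal.
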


We prove this using the following two lemmas. The first lemma \ref{lemma-cegisbetter}
shows that there is a family of
languages from which a language can be identified by $\cegis$ but,
this cannot be done by $\hmncegis$. The second lemma \ref{lemma-hcegisbetter} shows that
there is another family of languages from which a language can be identified by
$\hmncegis$ but not by $\cegis$.

\begin{lemma} \label{lemma-cegisbetter}
There is a family of languages $\mathcal{L}$ such that 
$\hmncegis$ cannot \acta{identify every language 
$\lang$ in $\mathcal{L}$ but
$\cegis$ can do so}, that is,
 $\cegis \not \subseteq \hmncegis$.
\end{lemma}

\begin{proof}
Now, consider the language family~\ref{lf:pbweak} formed by upper bounding the elements by some fixed constant. Let the target language $\lang$ (for which we want to identify $\lang_i$. In rest of the proof, we also refer to this family as $\mathcal{L}$ for brevity.
\begin{langfamily}
$\mathcal{L}_{notpb} = \{ \lang_i | i \in \nat \}$ such that 
$\lang_i = \{ n | n \in \nat \wedge n \leq i \}$.
\label{lf:pbweak}
\end{langfamily}

If we obtain a \todo{transcript} $\trace[j]$ at any point in synthesis
using positive bounded counterexamples, then for any intermediate language $\lang_j$ proposed by
$\engine_{\hmncegis}$, $\hmnverifier_\lang$  would always return $\bot$ since all the counterexamples
would be larger than any element in $\trace[j]$. This is the consequence of the chosen languages
in which all counterexamples to the language are larger than any positive example of the language.
So, $\engine_{\hmncegis}$ cannot identify the target language $\lang$.

But we can easily design a synthesis engine $\engine_{\cegis}$
using arbitrary counterexamples that can synthesize
$\algo$ corresponding to the target language $\lang$. The algorithm starts with $L_0$ as its
initial guess. If there is no counterexample, the algorithm next guess is $\lang_1$.
In each iteration $j$,
the algorithm guesses $\lang_{j+1}$ as long as there are no counterexamples. When a counterexample
is returned by $\verifier_\lang$ on the guess $\lang_{j+1}$,
the algorithm stops and reports the previous guess $\lang_j$ as the correct language.

Since the elements in each language $\lang_i$ is bounded by some fixed constant $i$, the above synthesis procedure $\engine_{\cegis}$ is guaranteed to terminate after $i$ iterations when identifying any language $\lang_i \in \mathcal{L}$. Further, $\verifier_\lang$ did not return any counterexample up to iteration $j-1$ and so, $\lang_j \subseteq \lang_i$. And in the next iteration, a counterexample was generated.
So, $\lang_{j+1} \not \subseteq \lang_i$. Since, the languages in $\mathcal{L}$ form a monotonic chain
$\lang_0 \subset \lang_1 \ldots $. So, $\lang_j = \lang_i$. In fact, $j=i$ and in the $i$-th iteration,
the language $L_i$ is correctly identified by $\engine_{\cegis}$.
Thus,  $\cegis \not \subseteq \hmncegis$.
\qed

\end{proof}

This shows that $\cegis$ can be used to identify languages when $\hmncegis$ will fail. Putting a
restriction on the verifier to only produce counterexamples which are bounded by the
positive examples seen so far does not strictly increase the power of synthesis.

We now show that this restriction enables identification 
of languages which cannot be identified by $\cegis$. 
%The proof uses a
%diaganolization argument similar to the argument used in \cite{Lange08} for showing the
%increase in inductive synthesis power when negative examples are introduced in addition
%to the positive examples.
%Recall that the set of languages considered in that case were
%$L_1 = V^* - \{x_1\}, L_2 = V^* - \{x_2\}, \ldots$ and the language $V^*$. The argument
%relies on indistinguishability of  $V^* - \{x_1\}$ and  $V^*$ with respect to finite traces
%of positive examples.

In the proof below, we construct a language which is not distinguishable
using arbitrary counterexamples and instead, it relies on the verifier keeping a record of the
largest positive example seen so far and restricting counterexamples to those below the largest positive
example.

\begin{lemma} \label{lemma-hcegisbetter}
There is a family of languages $\mathcal{L}$ such that, $\cegis$ cannot identify a  language $\lang$ in $\mathcal{L}$ but
$\hmncegis$ can identify  $\lang$, that is,
 $\hmncegis \not \subseteq \cegis$.
\end{lemma}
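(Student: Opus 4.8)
The plan is to produce a single indexed family of recursive languages $\mathcal{L}$ together with a distinguished target $\lang \in \mathcal{L}$ such that some positive-bounded engine $\engine_{\hmncegis}$ identifies \emph{every} member of $\mathcal{L}$ (so $\mathcal{L} \in \hmncegis$), while no arbitrary-counterexample engine identifies $\lang$ (so $\mathcal{L} \notin \cegis$). The entire separation rests on the one asymmetry between the two verifiers: on a candidate whose counterexamples all lie strictly above the largest positive example seen so far, $\hmnverifier_\lang$ returns $\bot$ (certifying ``correct below the current bound''), whereas $\verifier_\lang$ is forced to surface one of those counterexamples. A first useful simplification is Theorem~\ref{thm-min}: since $\cegis = \mncegis$, to prove $\mathcal{L} \notin \cegis$ it suffices to defeat the \emph{fixed} minimal-counterexample oracle $\deforacle_{\mncegis}$, so the only remaining adversarial freedom is the finite-memory learner.

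For the family I would use decoys that overshoot the target only above its bound. Concretely, $\mathcal{L}$ will contain $\lang$ and, for each index, a decoy $\lang'$ that agrees with $\lang$ on an initial segment and differs from it only by elements larger than every positive example that $\lang$ supplies below that stage; moreover $\mathcal{L}$ will deliberately omit the small subsets a learner would otherwise use simply to ``track'' the positive examples, forcing every hypothesis to be routed through these decoys. The point of this design is that, for the target $\lang$, the minimal counterexample returned by $\deforacle_{\mncegis}$ on any shared candidate stays at or above the bound and therefore never localizes the feature distinguishing $\lang$ from $\lang'$; but $\hmnverifier_\lang$ answers $\bot$ on exactly these candidates, which \emph{does} certify agreement below the bound and lets the learner eliminate $\lang'$.

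The $\hmncegis$ direction is then the routine half: I would exhibit one finite-memory learner that, for every target in $\mathcal{L}$, uses the $\bot$ confirmations (available precisely because the decoys overshoot only above each target's bound) together with the positive examples to rule out all competing decoys and converge, so $\mathcal{L} \in \hmncegis$. The $\cegis$ direction is the impossibility: assuming a finite-memory learner identifies $\lang$ against $\deforacle_{\mncegis}$, it must converge after some finite dialogue in which only positives of $\lang$ and counterexamples above the bound have appeared; I would then exhibit a transcript together with a consistent decoy $\lang' \in \mathcal{L}$ producing the \emph{identical} minimal-counterexample dialogue up to that point, so the learner's hypothesis is the same for both targets while it is required to converge to two different languages. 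Iterating this mind-change argument along an adversarial transcript yields a run with infinitely many mind changes, contradicting identification.

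The main obstacle is engineering the family so that the decoys are simultaneously (i) minimal-counterexample-consistent with $\lang$ for $\cegis$ --- i.e.\ every candidate the learner can legally pose returns the same response under $\deforacle_{\mncegis}$ whether the target is $\lang$ or $\lang'$, which is what blocks convergence --- and (ii) eliminable by the $\bot$-below-bound answers of $\hmnverifier_\lang$, which is what rescues $\hmncegis$; all of this while keeping $\mathcal{L}$ an indexed family of total recursive languages. In particular, care is needed so that the distinguishing elements of each decoy never appear early as positive examples (which would leak the distinction to $\cegis$ through the transcript) and never become the minimal counterexample to a legal candidate; the cleanest way to guarantee simultaneous failure against all finite-memory learners is to interleave the traps by diagonalizing against an enumeration of learners while maintaining the single positive-bounded learner that escapes every trap.
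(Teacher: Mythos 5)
Your central mechanism is unsound, and it points the intended asymmetry in the wrong direction. If a decoy $\lang'$ agrees with the target $\lang$ on everything below the current history bound and differs only above it, then on \emph{any} candidate the two verifiers $\hmnverifier_{\lang}$ and $\hmnverifier_{\lang'}$ give identical answers: either the same counterexample drawn from the shared region below the bound, or the same $\bot$. So the $\bot$ ``confirmations'' you rely on cannot eliminate $\lang'$ --- the decoy's verifier certifies exactly the same thing. Worse, in precisely this configuration the minimal-counterexample oracle you reduced to via Theorem~\ref{thm-min} is at least as informative per query: a minimal counterexample $c$ lying at or above the bound $B$ certifies that every candidate element below $c$ belongs to $\lang$ (which subsumes the $\bot$'s ``correct below $B$'') \emph{and} exhibits a concrete non-member. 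Hence the per-query information advantage in your setup belongs to $\mncegis$, not $\hmncegis$, and the half you call ``routine'' --- exhibiting one finite-memory learner that identifies every member of $\mathcal{L}$ under $\hmnverifier$ --- is in fact the crux of the lemma and has no working mechanism behind it in your plan.

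What the paper exploits is not information content per response but \emph{memory}: $\hmnverifier_\lang$ takes the transcript $\trace[n]$ as an argument, so the history bound functions as external storage that a finite-memory learner can read back through crafted queries. Concretely, the family is $\mathcal{L}^{2}$ (infinite subsets of $\{2^i\}$ indexed by their minimum) together with $\mathcal{L}^{32}$ (finite subsets of $\{2^i\} \cup \{3\cdot 2^i\}$ each containing a marker $3\cdot 2^k$). When a marker arrives, the positive-bounded learner \emph{recovers all forgotten positive examples}: it proposes singletons $\{3^k\}$ for $k=2,3,\ldots$ (never in any target), and the counterexample $3^k$ comes back exactly while $3^k$ is below some seen positive, which pins down an upper bound $3^{B_p}$ on the positives seen so far; then proposing singletons $\{2^j\}$ with $2^j < 3^{B_p}$ acts as a membership oracle, after which Gold-style identification of finite languages converges. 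Your impossibility half has the right shape --- it matches the paper's argument that a converged finite-memory learner absorbs a fresh positive $2^m$ without changing state, so the two transcripts $\tau_s\, 2^m\, (3\cdot 2^p)\bot^{\omega}$ and $\tau_s\, (3\cdot 2^p)\bot^{\omega}$, which belong to two different finite languages in $\mathcal{L}^{32}$, produce identical behavior (note a single such confusion suffices; no infinite mind-change iteration or diagonalization over an enumeration of learners is needed). But without the history-recovery idea the positive half fails, so the separation does not go through as proposed.
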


\begin{proof}
Consider the language 
$$\lang^{32} = \{ 3^j.2^i | j \in \{0,1\}, i \in \nat \}  $$ 
\acta{ where $3^j.2^i$ is a natural number obtained by taking the
product of $3$ raised to the power of $j$ and $2$ raised to the power
of $i$. $\lang^{32}$ is a set of these natural numbers.} 
We now construct a family of languages which are finite subsets of $\lang^{32}$ and have 
at least one member of the form $3.2^i$, that is,
$$\mathcal{L}^{32} = \{\lang^{32}_i | i \in \nat, \lang^{32}_i \subset \lang^{32}, \lang^{32}_i \text{ is finite } \text{ and } \exists k \; s.t. 
\; 3.2^k  \in \lang^{32}_i \}$$
We now consider the language
$$\lang^2 = \{ 2^i | i \in \nat\}$$ 
Now, let $\mathcal{\lang}^{2}$ be the family of languages such that the smallest element member in the language 
is the same as the index of the language, that is,
$$\mathcal{\lang}^{2} = \{ \lang^2_i | i \in \nat,  \lang^2_i \subseteq \lang^2, \lang^2_i \text{ is infinite and } \min(\lang^2_i) = 2^i \}$$

Now, we consider the following family of languages below.
\begin{langfamily} $$ \mathcal{L}_{pb} = \mathcal{L}^{32} \cup \mathcal{L}^{2}$$
\label{lf:pb} 
\end{langfamily}
We refer to this language as $\mathcal{L}$ in rest of the proof for brevity.
We show that there is a language $\lang$ in $\mathcal{L}$ such that the 
language 
$\lang$ cannot be identified by $\cegis$ but $\hmncegis$ can 
identify 
any language in $\mathcal{L}$.

The key intuition is as follows.
If the examples seen by synthesis algorithm till some iteration $i$ are all of the form 
$2^j$, then any synthesis technique cannot differentiate whether the
language belongs to $\mathcal{L}^{32}$ or $\mathcal{L}^{2}$. If the language belongs
to  $\mathcal{L}^{32}$, the synthesis engine would eventually obtain
an example of the form  $3.2^j$ (since each language in $\mathcal{L}^{32}$
has at least one element of this kind and these languages are finite). While
the synthesis technique using arbitrary counterexamples cannot recover the previous examples,
the techniques with access to the verifier which produces positive bounded counterexamples
can recover all the previous examples.

We now specify a $\engine_\hmncegis$ which can identify languages in $\mathcal{L}$. The synthesis approach works in two possible steps.
\begin{itemize}[leftmargin=*]
\item Until an example $3.2^j$ is seen by the synthesis engine, let $2^i$ be the smallest member element
seen so far in the \todo{transcript}, the learner proposes $\lang_i$ as the language. If the target
language $\lang \in \mathcal{L}^{2}$, the learner would eventually identify the language since
the minimal element will show up in the  \todo{transcript}.
If the target language $\lang \in \mathcal{L}^{32}$, then eventually, an example of the form
$3.2^j$  will be seen since $\lang$ must have one such member element. And after such an example 
is seen in the \todo{transcript}, the synthesis engine moves to second step.
\item After an example of the form $3.2^j$ is seen, the synthesis engine can now be sure that the language
belongs to $\mathcal{L}^{32}$ and is finite. Now, the learner can discover all the positive examples 
seen so far
using the following trick. We first discover the upper bound $B_p$
on positive examples seen so far.
{\color{black} %% susmit
$$B_p = \text{ minimum k such that } \hmnverifier_\lang(\{3^k\}, \trace[n]) \text{ returns }  \bot \text{ for } k = 2,3,\ldots$$  
Recall that $3^k, k = 2,3, \ldots$ are not in the target language
since they are not in any of the languages in the $\mathcal{L}$ to which the target language belongs.
$\hmnverifier_\lang$ will return the only element $3^k$ in the proposed
candidate language as a counterexample as long as
there is some positive example $2^i$ seen previously such that $2^i \geq  3^k$. So, $3^{B_p}$ is the upper bound on all the positive examples seen
 so far.
The learner can now construct singleton languages 
$\{2^j\}$ for $j=0,1,... l$ such that $2^l < 3^{B_p}$. 
If a counterexample is returned 
by $\hmnverifier_\lang(\{2^i\}, \trace[n])$ 
then $2^i$ is not in the target language.
If no counterexample is returned, then $2^i$ is in the target language. This allows the synthesis engine
to recover all the positive examples seen previously in finite steps.
} %% susmit
% and a hypothesis language is constructed 
%by adding all of these. And all the positive examples seen henceforth are added to the hypothesis language.
As we recover the positive examples, we run a Gold style
algorithm for identifying finite languages~\cite{jain07} 
to converge to the correct language.
%Since the language is finite, this will consume finite memory and terminate after finite steps. 
Thus, the learner would identify the correct language using finite memory.   
\end{itemize}

We now prove that $\cegis$ does not identify this family of languages. Let us
assume that $\mathcal{L} \in \cegis$. So, there is a synthesis engine $\engine_\cegis$
which can identify all languages in $\mathcal{L}$. So, $\engine_\cegis$ must converge to any language $\lang_1 \in \mathscr{L}^2$ after some finite
transcript $\tau_s$. Let us consider an extension $\tau_s 2^m$ of $\tau_s$ 
such that $2^m \in \lang_1$ and $2^m \not \in \samples(\tau_s)$. Such 
an element $2^m$ exists 
since $\tau_s$ is a finite transcript and $\lang_1$ is an
infinite language. Since the learner converges to $\lang_1$
starting from the initial language $\lang_0$ 
after consuming $\tau_s$, $\learner(\lang_0,\tau_s2^m, \ce') = \learner(\lang_0, \tau_s, \ce)$.

Let us consider two transcripts $\tau_s 2^m (3.2^p)\bot^\omega$ and 
$\tau_s (3.2^p)\bot^\omega$ where $\bot^\omega$ denotes repeating
$\bot$ infinitely in the rest of the transcript.
We know that $\learner(\lang_0,\tau_s 2^m, \ce') = \learner(\lang_0, \tau_s, \ce) = \lang_1$
and thus, $\learner(\tau_s 2^m (3.2^p) \bot^\omega, \ce') =  \learner(\tau_s (3.2^p) \bot^\omega, \ce) = \learner(\lang_1, (3.2^p) \bot^\omega, \ce'') $. So, the synthesis engine would behave exactly 
the same for both
transcripts, and if it converges to a language $\lang_2$ on one transcript, it would converge to the same language on the other transcript. But the two
transcripts are clearly from two different languages in $\mathcal{L}^{32}$.
One of the transcripts corresponds to
the finite language $\samples(\trace_s) \cup \{3.2^p\}$
and the other corresponds to $\samples(\trace_s) \cup \{2^m,3.2^p\}$.
This is a contradiction and hence, there is no synthesis engine using
arbitrary counterexamples
$\engine_{\cegis}$ that can identify all languages in $\mathcal{L}$. 

\qed

\comment{
\begin{algorithm}                      % enter the algorithm environment
\caption{Construction of the diagonal language $\lang_d$}          % give the algorithm a caption
\label{alg-hbproof}                           % and a label for \ref{} commands later in the document
%\begin{algorithmic}[H]
$s = 0$, $Done = false$\\
\While{$!Done$} {
    $Done = true$\\
   \If {there exists a $\tau \subseteq \tau_s$ s.t. $\lang_s = \engine_{\hmncemap}(\lang_0, \trace[n]) \not \in \negset $, $\lang_s \cap \overline {\samples(\tau_s)}$ is not empty and $\tau$ is shortest such \todo{transcript}} 
       {
          \tcc{*********Diagonalization*********}
          Let $x \in \lang_s \cap \overline { \samples(\tau_s) } $\\ 
          Extend the negative set of examples: $\nex := \nex \cup \{x\}$;\\
          Extend the counterexample map: $\cemap(\lang_s) = x$;\\
          $\tau_{s+1} = \tau_s$;
          $s = s+1$;
          $Done = false$\\
       }
   \If{there exists a $\tau \supseteq \tau_s$ such that $samples(\tau) \subseteq \{ 2^i | i \geq m \} \setminus \nex$, $\engine_{\hmncemap}(\lang_0, \trace[i])$ is defined for all $i \leq |\tau|$ and $\engine_{\hmncemap}(\lang_0, \trace) \not = \engine_{\hmncemap}(\lang_0, \trace_s)$}
      {
        \tcc{*********Add to language $\lang_d$*********}
        $\tau_{s+1} = \tau$;
        $\lang_d := \lang_d \cup \{ 2^i | 2^i \in \samples(\tau) \}$;
        $s = s + 1$;
        $Done = false$\\
      }
}
%\end{algorithmic}
\end{algorithm}

We construct a diagonal language $\lang_d$ using the simulation of the synthesis engine $\engine_\cegis$.
Initially, we have $\lang_m = \{2^m\}$ for some $m \in \nat$ such that $\lang_m$ is the initial guess of the
synthesis engine.
We construct two auxilary storage to contain the simulation information: 
\begin{itemize}
\item $\nex$ is the set of negative examples outside the language being constructed. We initialize $\nex$
to $\{3^i.2^j | i > 1 \} \cup \{ 2^i | i <  m \} $. It is augmented by all counterexamples added during 
the simulation of the synthesis engine.
\item The second is a counterexample map
$$\cemap: \mathscr{\lang} \rightarrow \nex \cup \{ \top \} \cup \{ \bot \}$$
Intuitively, this maps a candidate language $\lang_i$ to  a counterexample as known 
to $\engine_{\hmncegis}$ so far. When a counterexample is known, it is added to the
$\nex$ set. If a counterexample is not known for a 
given language, $\cemap$ maps the language to $\top$. If there is no counterexample to a given language, 
$\cemap$ maps the language to $\bot$. At any given step, only finite number of languages have their  
counterexamples known, and the rest are mapped to $\top$.
\end{itemize}

Next, we define a mapping $\engine_{\hmncemap}$ from 
$\mathscr{\lang} \times \Sigma \rightarrow \mathscr{\lang}$
based on the known $\cemap$ so far, that is,
$$\engine_{\hmncemap}(\lang_0, \trace[n]) = \lang_n \; \mathtt{ where } \; \lang_{i} = \learner (\lang_{i-1}, \trace(i), \cemap(\lang_{i-1})) \; \mathtt{for} \; i=1,2,\ldots,n \; \mathtt{and} \; \lang_0 = \lang$$
if $\cemap(\lang_{i})$ is not $\top$ for $i=1,2,\ldots$ and it is undefined if  $\cemap(\lang_{i})$ is 
$\top$ for any $i$. \\ 
%$\cexmap$ is a map from hypothesis programs to counterexamples. Initially, it is empty
We denote the set of hypothesis languages already proven to be incorrect (since they produce a counterexample)
by $\negset = \{ \lang_i | \cemap(\lang_i) \in \nex \}$

Let $\trace_s$ denote the \todo{transcript} simulated so far upto some step $s$. $\trace_0$ is a single member \todo{transcript} 
containing $2^m$. 
The construction of diagonal language is shown in Algorithm~\ref{alg-hbproof}. 
We first assume that the loop terminates for some $s$. So, $\lang_d = \samples(\trace_s) = \langfun(\lang_s)$.
So, for all intermediate hypothesis languages proposed by the algorithm at step $s'$, we have
$\langfun(\lang_{s'}) \subseteq \langfun(\lang_s)$ or  $\langfun(\lang_{s'}) \in \negset$.
Now, we consider two languages corresponding to the contents of two \todo{transcripts}: $\tau_s \; 2^p \; 3.2^q$ and
 $\tau_s \; 3.2^q$ where $p \geq m $ and $2^p \not \in \nex$ and $\nex$ is the set of counterexamples obtained
for $\tau_s \; 3.2^q$ (such a finite set exists since the synthesis engine must converge on this \todo{transcript}). 
Further, since the second {\it {if}} condition is not true, so for any extension of $\trace_s$ with 
$2^p$ such that , $\engine_{\hmncemap}(\lang_0, \trace_s) = \engine_{\hmncemap}(\lang_0, \trace_s 2^p)$.
So, the counterexample set $\nex$ is the same in simulation of $\engine_{\hmncemap}$ for $\trace_s$ as well as $\trace_s 2^p$. 
The counterexample received after seeing $3.2^q$ is the same for both \todo{transcripts}. So, we observe that the synthesis engine
has the same counterexample set $\nex$ and  $\engine_{\hmncemap}(\lang_0, \trace_s) = \engine_{\hmncemap}(\lang_0, \trace_s 2^p)$.
Thus, the behavior of the engine  $\engine_{\hmncemap}$ is, the same for 
both \todo{transcripts}. So, it cannot identify one of the two languages. This is a contradiction since we assumed that
the synthesis engine can identify any language in the language family $\mathscr{\lang}$. 
Intuitively, the finite memory synthesis engine cannot store all positive examples and we exploit this to show
that after some steps $s$ it cannot distinguish between two languages in $\mathscr{\lang}^{32}$.

So, our assumption that loop terminates for some $s$ is not true. We now consider that the loop does not terminate.
Let us consider the 
\todo{transcript} $\tau = \bigcup_{s\in \nat} \tau_s$, that is the 
concatenation of \todo{transcript} at each step of the simulation.
Now, $\tau$ is a \todo{transcript} for the language $\lang_d$ and $\samples(\tau) = \lang_d$.
If $\engine_{\hmncemap}$ converges for all \todo{transcripts} for all languages in $\mathscr{\lang}$, then it also must converge
on $\lang_d$ for $\tau$. So, after some step $s$, any intermediate conjecture language which enumerates a member element
not in $\samples(\tau) = \lang_d$ is in the $\negset$. Further, if the synthesis engine converges after some step $s$, then
$\engine_{\hmncemap}(\lang_0, \trace[s]) = \engine_{\hmncemap}(\lang_0, \trace[s'])$ for all $s'$. Thus, both the {\it {if} } conditions
are eventually not true and hence, the loop will terminate. This is a contradiction again. And so, no such $\engine_{\hmncemap}$ exists.
\qed
}

\end{proof}

\subsubsection{Different Flavors of Bounded Counterexamples}

Finally, we compare $\hmncegis$ and $\bcegis$ and show that they are not contained in each other.

\begin{theorem}
The power of synthesis techniques using bounded counterexamples 
is neither less nor more than the techniques using positive bounded counterexamples, 
that is, $\bcegis \not \subseteq \hmncegis$ and  $\hmncegis \not \subseteq \bcegis$.
\label{thm-bhb}
\end{theorem}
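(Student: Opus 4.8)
The plan is to prove the two non-containments separately, reusing the families and separations already established for Theorems~\ref{thm-b} and~\ref{thm-hb} for one direction and building a fresh ``dual'' family only for the harder direction.

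For $\hmncegis \not\subseteq \bcegis$ I would construct nothing new: I would reuse the family $\mathcal{L}_{pb}$ of Language Family~\ref{lf:pb}. Lemma~\ref{lemma-hcegisbetter} already gives $\mathcal{L}_{pb} \in \hmncegis$, so it suffices to argue $\mathcal{L}_{pb} \notin \bcegis$. This is a one-line corollary of facts already in place: Theorem~\ref{thm-b} gives $\bcegis \subseteq \cegis$, while the second half of Lemma~\ref{lemma-hcegisbetter} shows $\mathcal{L}_{pb} \notin \cegis$. Composing these, $\mathcal{L}_{pb} \notin \bcegis$, and since $\mathcal{L}_{pb} \in \hmncegis$ we obtain $\hmncegis \not\subseteq \bcegis$.

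For the converse, $\bcegis \not\subseteq \hmncegis$, I need a family the fixed bound can handle but positive-bounded counterexamples cannot, and the guiding intuition is the exact dual of the $\mathcal{L}_{pb}$ mechanism. In Lemma~\ref{lemma-hcegisbetter}, $\hmncegis$ won because a single large positive (the trigger $3\cdot 2^j$) permanently widened its window and let it recover the low part, whereas the arbitrary verifier could not discover how far to probe. Here the asymmetry is reversed: $\bverifier_{\bound,\lang}$ has a window $[0,\bound)$ that is available at \emph{every} step independent of which positives have arrived, while $\hmnverifier_\lang$ can only expose counterexamples strictly below the largest positive seen so far. I would therefore design a family in which (i) the crucial discriminating counterexample always lies below $\bound$, so $\bverifier_{\bound,\lang}$ can always return it, but (ii) for a suitable target the positive examples stay below that element, so $\hmnverifier_\lang$ is pinned to $\bot$ and the learner is starved of the one counterexample it needs. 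The family would be split into two parts in the style of $\mathcal{L}^2$ and $\mathcal{L}^{32}$: ``trap'' languages on which any finite-memory positive-bounded learner is forced to commit from a bounded amount of low data, together with languages agreeing with a trap language on all small positives but differing on a single element $e$ with $e < \bound$ lying above those positives. The diagonalization would then mirror the second half of Lemma~\ref{lemma-hcegisbetter}: exhibit two transcripts that drive any $\engine_{\hmncegis}$ through identical configurations — every correctness query being answered by $\hmnverifier_\lang$ with $\bot$, and the finite-memory learner being unable to retain the distinguishing low bit across its forced convergence — yet which are transcripts for two different languages. In parallel I would check $\mathcal{L} \in \bcegis$: since $e < \bound$, the engine $\engine_{\bcegis}$ can query the relevant candidate at any point and have $\bverifier_{\bound,\lang}$ return $e$ (or $\bot$), recovering the missing bit with bounded memory and converging on every member.

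The main obstacle is squarely this second direction, and specifically reconciling its two requirements. Because the window of $\hmnverifier_\lang$ grows with the largest positive, keeping it permanently below $e$ pushes the target toward being finite — and a finite target is identifiable from positive data alone, so the failure cannot come from a tell-tale/positive-data obstruction of the kind in Lemma~\ref{lemma-cegisbetter}. The delicate point, exactly as in Lemma~\ref{lemma-hcegisbetter}, is to force the failure through the \emph{finite-memory} restriction: the trap languages must compel the learner to converge, and thereby discard the low bit, before the distinguishing element can ever be observed, while $\bverifier_{\bound,\lang}$'s always-on window lets $\bcegis$ reconstruct that bit on demand. Getting the interleaving of the trap prefix, the forced convergence, and the hidden sub-$\bound$ element right is where the real work lies; once the family is fixed, verifying that $\engine_{\bcegis}$ converges on each of its languages is routine.
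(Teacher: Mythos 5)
Your first direction is correct and is a genuinely different route from the paper's. The paper proves $\hmncegis \not\subseteq \bcegis$ directly with the fresh witness family $\mathcal{L}_{notcb}$ of Language Family~\ref{lf:cbweak} (all elements above $\bound$, so $\bverifier_{\bound,\lang}$ is permanently silent while $\hmnverifier_\lang$ behaves like an arbitrary verifier), whereas you compose two already-proved facts: $\bcegis \subseteq \cegis$ (Theorem~\ref{thm-b}) and $\mathcal{L}_{pb} \in \hmncegis \setminus \cegis$ (Lemma~\ref{lemma-hcegisbetter}). Within the paper's framework this composition is sound, and it buys you something the paper's version does not: you avoid having to verify that a new family is $\hmncegis$-identifiable, reusing the one family whose $\hmncegis$-membership was already established in detail.

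The genuine gap is the second direction, $\bcegis \not\subseteq \hmncegis$: you deliver desiderata and a diagonalization sketch but no family and no proof, and you say yourself that this is ``where the real work lies.'' The paper, by contrast, settles it in a few lines with Language Family~\ref{lf:cbnotpb}, $\mathcal{L}_{cbnotpb} = \{\, \lang_i \mid i < \bound \,\}$ where $\lang_i = \{\, n \in \nat \mid n \leq i \,\}$: every counterexample to a candidate initial segment exceeds every positive example, so $\hmnverifier_\lang$ always returns $\bot$, while all counterexamples are below $\bound$, so $\bverifier_{\bound,\lang}$ behaves exactly like $\verifier_\lang$ and the incremental algorithm of Lemma~\ref{lemma-cegisbetter} identifies the family. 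That said, the obstacle you articulate is not a phantom: you correctly observe that a target kept permanently below the $\hmnverifier$ window is essentially finite, and finite languages are identifiable from positive data alone --- indeed, under the paper's own definitions a finite-memory learner that always conjectures $\lang_m$ for the largest positive $m$ seen so far identifies every member of $\mathcal{L}_{cbnotpb}$ using no counterexamples whatsoever, so the paper's inference from ``$\hmnverifier_\lang$ returns only $\bot$'' to ``$\hmncegis$ fails'' tacitly assumes a counterexample-driven learner rather than following from the stated definitions. So you identified a real weakness in the simple-witness strategy that the paper's proof glosses over; but having rejected that strategy, you were obliged to produce the finite-memory diagonalization you promise, and you do not: your own analysis (positive data plus the $\bot$-padding of exhausted transcripts suffices for finite targets) suggests the trap you describe may not be springable at all, and until a concrete family is exhibited and both halves are checked, this direction of the theorem stands unproven in your attempt.
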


\begin{proof}
We consider two languages considered in previous proofs and show that the
languages corresponding to one of them
can only be identified by $\hmncegis$ while the 
languages corresponding to the other
can only be identified by $\bcegis$.

Consider the language family~\ref{lf:cbweak} ($\mathcal{L}_{notcb}$)
 formed by lower bounding the elements by some fixed constant, that is,
\acta{
 $\mathcal{\lang}_{notcb} =\{ \lang_i |  
\todo{ i > \bound \text{ and } \lang_i = \{ n | n \in \nat \wedge n > i \}}
\}$ where
$\bound$ is a fixed integer constant. 
We have proved in Theorem~\ref{thm-b} that a synthesis engine
$\engine_\bcegis$ 
cannot identify all languages in 
 $\mathcal{\lang}_{notcb}$.
 On the other hand, 
 any counterexample is smaller than all positive examples
 in any language in $\mathcal{\lang}_{notcb}$. 
 So, a verifier
producing positive bounded counterexample behaves similar to an arbitrary counterexample verifier since any positive example is larger than 
all negative examples.
Thus, $\engine_\cegis$ can identify languages in this language class.
 So, $\hmncegis \not \subseteq \bcegis$.}
 
Now, consider the family of languages consisting of these, that is,
\begin{langfamily}
$\mathcal{L}_{cbnotpb} = \{ \lang_i | i < \bound \}$ where $\lang_i = \{ n | n \in \nat \wedge n \leq i \}$
\label{lf:cbnotpb}
\end{langfamily}
This is a slight variant of the language class considered in proving $\engine_\cegis$
to be more powerful than $\engine_\hmncegis$ where we have restricted the class of languages
to be a finite set. As stated earlier, $\hmnverifier$ does not produce 
any counterexample for these languages since all positive examples are smaller than
any counterexample. But $\bverifier$ can be used to identify 
languages in this class by selecting the bound of the counterexamples 
to be $\bound$. Since, the counterexamples
are at most of size $\bound$ for these languages, a bounded counterexample verifier 
behaves exactly like an arbitrary counterexample producing verifier.
Thus, $\bcegis \not \subseteq \hmncegis$.
\qed
\end{proof}

\comment{
Let us consider trace $\trace$ and counterexample sequence $\ce$ such that $\engine_\cegis$
converges in $n$ steps, that is $\engine_\cegis(\trace, \ce) \rightarrow \engine_\cegis(\trace[n], \ce[n])$.
Now, $\ce[n]$ is a valid counterexample sequence of any language $L$ such that
$\samples(\trace[n]) \subseteq L \subseteq \nat - \samples(\ce[n])$.
Since $\engine_\cegis$ must
recognize a language from any trace and any arbitrary counterexample sequence, we choose
a trace and counterexample sequence as follows.
Let us consider a trace $\trace'$  of the form $\trace[n] (\langle 1, z_1 \rangle)^\infty$.
The corresponding counterexample trace discovered by $\engine_\cegis$ is $\ce[n]$ followed
by minimal counterexamples, if any, after observing $\langle 1, z_1 \rangle$.
Now, we pick an element
$\langle 0, z_2 \rangle$ such that  $\langle 0, z_2 \rangle \not \in \ce[n]$
and $\langle 0, z_2 \rangle \not \in \trace[n]$.
Since $\ce[n]$ is a valid counterexample sequence of any language $L$ such that
$\samples(\trace[n]) \subseteq L \subseteq \nat - \samples(\ce[n])$, the behavior
of $\engine_\cegis$ is same for $\trace[n] (\langle 1, z_1 \rangle)^\infty$
as it is for $\trace[n] \langle 0, z_2 \rangle (\langle 1, z_1 \rangle)^\infty$.
Thus, $\engine_\cegis$ cannot distinguish between the
two languages: $\lang^d = \samples(\trace[n]) \cup \{ \langle 1, z_1 \rangle) \}$
and  $\lang^{d'} = \samples(\trace[n]) \cup \{ \langle 0, z_2 \rangle, \langle 1, z_1 \rangle) \}$.
Intuitively, $\engine_\cegis$ can forget some positive examples seen before observing  $\langle 1, z_1 \rangle$
and there is no way to regenerate these as it can be done with $\engine_\hmncegis$.

Thus, $\hmncegis \not \subseteq \cegis$.
}

%%%%%%%%%%%%%%%%%%%%%%%%%%%%%%%%%%%%%%%%%%%%%%%%%%%%%%%%%%%%%%%%%%%%%%%%%%%%%%%%%%%%%%%%%%%%%%%%%%
\subsection{Infinite Memory Inductive Synthesis}
\label{sec-resinf}

We now consider the case where the inductive learning engine has infinite unbounded memory. This case
is simpler than the one considered earlier with finite memory bound on the inductive learning engine
and 
\acta{most of the results presented here follow from the results proved for the finite memory case. For brevity of space, we only give proof sketches
highlighting the difference from the finite memory case.
\begin{enumerate}
\item The proof of Theorem~\ref{thm-min} works even when we replace the inductive learning engine
using finite memory with the one using infinite memory. 
%The simulation of $\mnverifier$ by $\verifier$
%can be made simpler if the memory is infinite but the simulation used in Theorem~\ref{thm-min} 
%also works for the infinite memory case. 
Further, the minimal counterexample can still be
used as an aribitrary counterexample. And so, $\infmncegis = \infcegis$.

\item Next, we show that $\infbcegis \subseteq \infcegis$.
Consider an arbitrary but fixed constant $\bound$. For this $\bound$,
consider all verifiers $\bverifier$ that only produce counterexamples bounded
by $\bound$. We wish to argue that any infinite memory learner
$\inflearner$ that can converge to a target language $\lang_{\target}$
using any $\bverifier$ can also do so using $\verifier$. 
The basic idea is as follows: since $\inflearner$ has infinite memory,
it can make extra queries to $\verifier$ to obtain counterexamples
bounded by $\bound$ and learns only from those. 
Suppose at some step it received a counterexample
$x$ bigger than $\bound$ for candidate language $\lang$. Then
$\inflearner$ constructs a new candidate language $\lang'$ that excludes $x$
but otherwise agrees with $\lang$.\footnote{We can do this as we have
a finite representation of $\lang$ (e.g., in the form 
of its characteristic function) and can modify this to initially check
if the input is $x$, and if so, to report that this is not in the
modified language.} 
It then queries $\verifier$ with this new candidate $\lang'$, and
iterates the process until a counterexample less than $\bound$ is
received (which must happen if such a counterexample exists).
$\inflearner$ uses its infinite-size memory to construct candidate languages
that keep track of a potentially unbounded number of counterexamples
bigger than $\bound$.
Thus, $\inflearner$ uses this procedure to convert any $\verifier$ into
some $\bverifier$. Since $\infbcegis$ comprises all language families
learnable by $\inflearner$ given {\it any} $\bverifier$, these language
families are also learnable by $\inflearner$ using $\verifier$.
Therefore, $\infbcegis \subseteq \infcegis$.
%%%
\comment{
$\engine_\infcegis$ has infinite memory and hence, it can not only 
store the index of a candidate language (which needs only one memory unit, 
but also store the elements of a candidate language explicitly,
even if the language is infinite. Thus, $\engine_\infcegis$
can repeatedly call $\verifier$ to obtain a constant bounded 
counterexample using the following approach.  
Given a bound $\bound$ on the counterexamples produced by $\bverifier$, $\engine_\infcegis$ uses $\verifier$ to obtain an arbitrary counterexample.
If this counterexample is smaller than $\bound$, it is used by the learner.
If this example is larger than $\bound$, it can be removed
from the explicitly stored candidate language by the learner and it
queries the verifier with this refined candidate language to obtain
a new counterexample. This iterates till the learner receives a 
counterexample smaller than $\bound$ or there is no such counterexample. 
Thus, the learner now uses only the constant bounded counterexamples, and 
hence, $\engine_\infcegis$ can identify any language that 
$\engine_\infbcegis$ can identify. 
}
%%%
\comment{simulate the synthesis engine $\engine_\infbcegis$ by checking whether the counterexample
from $\verifier$ is smaller than the bound $\bound$. If the counterexample is larger than $\bound$,
then, the counterexample 
is removed from the candidate language and the verifier
$\verifier$ is invoked iteratively till a counterexample smaller
than $\bound$ is found. 
While the $\verifier$ is being iteratively invoked, 
the positive example \todo{transcript} can be stored using the infinite unbounded
memory.
Thus, $\engine_\infcegis$ can simulate $\engine_\infbcegis$ and so $\infbcegis \subseteq \infcegis$.}

\item 
We now sketch the proof for $\infhmncegis \subseteq \infcegis$.  The
argument is similar to the previous case. 
%Once again, since the
%learner has infinite memory, it can explicitly store a candidate language, that is, it 
%can store all elements in a candidate language. It can also
Since the learner has infinite memory, it can
store all the positive examples seen so far. Moreover, similar to the
case of $\infbcegis$, it can construct a stream of candidate languages
to query $\verifier$ so as to obtain positive history bounded
counterexamples, as follows.
It queries $\verifier$
to obtain an arbitrary counterexample. 
If this is smaller than the 
largest positive example in stored positive examples, then
the learner uses this example for proposing the next hypothesis language. 
%If this is smaller than the 
%largest positive example in the explicitly stored positive examples, then
%the learner uses this example for proposing the next hypothesis language. 
If this counterexample is larger that the largest positive example, 
%it is removed from the explicitly stored candidate language, and the learner
it constructs a new candidate language by excluding this
counterexample from the previous candidate language, and
again queries $\verifier$ to obtain a new counterexample. This continues
until the learner can get a positive history bounded counterexample 
or there is no such counterexample. Thus, the learner now uses only
positive history bounded counterexamples, and hence, 
$\engine_\infcegis$ can identify any language that 
$\engine_\infhmncegis$ can identify. 
\comment{
If the memory of inductive learning engine is not finite, it can store all the examples
that are sent to it. So, the engine $\engine_\infcegis$ can use a verification oracle $\verifier$ providing
arbitrary counterexamples but internally the learner 
can record the history of positive examples and hence,
find the positive bounded counterexample through iterative calls
to the verifier. This can be done by checking whether the counterexample
is smaller than at least one of the positive examples seen so far. If it is not so, the counterexample
is removed from the proposed language, and the verification oracle is again queried by the learner to obtain a new
counterexample. This terminates when we obtain a counterexample smaller than the largest positive example
seen so far or we discover that no such counterexample exists. 
While the check is being done, the positive example \todo{transcript} can be stored using the infinite unbounded
memory. Thus, $\engine_\infcegis$ can simulate 
$\engine_\infhmncegis$ and hence, $\infhmncegis \subseteq \infcegis$.}

\end{enumerate}
}

We now present three languages used previously in proofs for inductive learning engines using 
\acta{finite} memory, and show how these languages allow us to distinguish relative power of synthesis engines.
\begin{enumerate}

\item Consider the language family~\ref{lf:cbweak}:\acta{ \
 $\mathcal{\lang}_{notcb} =\{ \lang_i |  
\todo{ i > \bound \text{ and } \lang_i = \{ n | n \in \nat \wedge n > i \}}
\}$ where
$\bound$ is a constant bound.} The argument in
Theorem~\ref{thm-b} also holds for the infinite memory synthesis engines, 
and so, $\mathcal{L}_{notcb} \in \overline \infbcegis \cap \infcegis$. 

Further, a positive history bounded verifier will always return a counterexample if one exists
since all counterexamples are smaller than any positive example in the language. Thus, $\engine_\infhmncegis$
can also identify languages in $\mathcal{L}_{notcb}$. 
Thus, $\mathcal{L}_{notcb} \in \overline \infbcegis \cap \infhmncegis$.

\item Consider the language family~\ref{lf:pbweak}: $\mathcal{L}_{notpb} = \{ \lang_i | i \in \nat \}$ where
$$\lang_i = \{ n | n \in \nat \wedge n \leq i \}$$
%Given this family $\mathcal{L}$, let the target language $\lang$ for which we want to synthesize a recognizing program $\algo$ be $\lang_i$. 
As argued in the proof of Theorem~\ref{thm-hb}, the verifier 
producing positive bounded counterexamples will not report any counterexample for any of the languages
in $\mathcal{L}_{notpb}$ because all counterexamples are larger than any positive example. So,
 languages in this family cannot be identified by $\engine_\infhmncegis$ but 
these can be identified using $\engine_\infcegis$.
So, $\mathcal{L}_{notpb} \in \overline \infhmncegis \cap \infcegis$.

\item Consider the finite language family~\ref{lf:cbnotpb}: $\mathcal{L}_{cbnotpb} = \{ \lang_i | i < \bound \}$ where
$$\lang_i = \{ n | n \in \nat \wedge n \leq i \}$$
As argued in proof of Theorem~\ref{thm-bhb}, the verifier  $\hmnverifier$ does not produce 
any counterexample for these languages since all positive examples are smaller than
any counterexample. But $\bverifier$ can be used to identify
languages in this class by selecting the bound to be $\bound$. Since, the counterexamples
are at most of size $\bound$ for these languages, a bounded counterexample verifier 
behaves exactly like an arbitrary counterexample producing verifier.
Thus,  $\mathcal{L}_{cbnotpb} \in \overline \infhmncegis \cap \infbcegis$. 
\end{enumerate}

We now summarize the results described in this section below.
For finite memory learners,
$\bcegis \subset \mncegis = \cegis$, 
$\hmncegis$ and $\cegis$ are not comparabale, that is,
$\hmncegis \not \subseteq \cegis$ and $\hmncegis \not \supseteq \cegis$.
$\bcegis$ and $\hmncegis$ are also not comparable.
In case of infinite memory learners,
$\infbcegis \subset \infmncegis = \infcegis$, and  
$\infhmncegis \subset \infcegis = \infmncegis$.
$\infbcegis$ and $\infhmncegis$ are again not comparable.
The results are summarized in Figure~\ref{fig:summary}.

\begin{figure}[htb!]
\centering
\includegraphics[height=7cm]{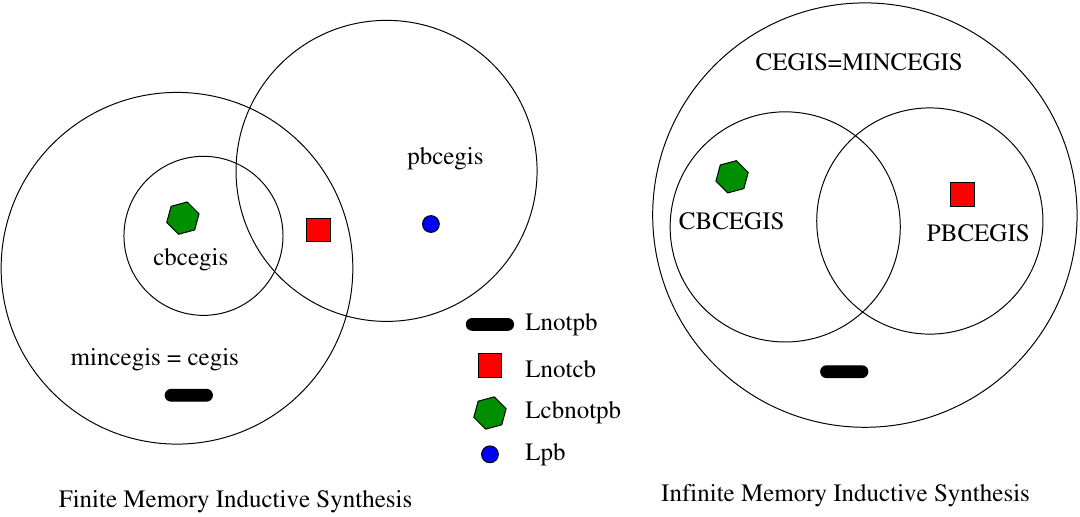}
\caption{Summary of Results on Decidability of Synthesis for Infinite Language Classes}
\label{fig:summary}
\end{figure}

\section{Analysis of $\ogis$ for Finite Language Classes}\label{sec-finres}
%\section{Finite Program Classes}\label{sec-finres}

We now discuss the case when the class of candidate programs (languages) has finite cardinality.
As in Sec.~\ref{sec:not}, rather than referring to programs we will refer to
synthesizing the languages identified by those programs. 
If the language class is finite then
there exists a terminating $\ogis$ procedure, e.g., one that simply enumerates languages
from this class until one satisfying the specification $\Spec$ is obtained.
Moreover, any implementation of
$\ogis$ which uses an oracle that provides new (positive/negative) examples in every iteration ruling out at least one
candidate language will terminate with the correct language. The counterexample guided inductive
synthesis approach~\cite{asplos06} for bitvector sketches and oracle guided inductive synthesis using 
distinguishing inputs~\cite{jha-icse10} for programs composed of a finite library of components 
are examples of $\ogis$ synthesis techniques applied to finite language classes. We analyze the
complexity of synthesis for finite language classes and discuss its relation to the notion of {\it teaching dimension}
from the concept learning literature~\cite{goldman-td}.
This connection between synthesis of languages from finite classes and
teaching of concepts was first discussed in \cite{jha-icse10}. 
Here we establish that
the size of the smallest set of examples for language (program) synthesis is bounded below by the
teaching dimension of the concept class corresponding to the class of languages.

\subsection{NP-hardness}

%In this section, we consider the question of efficiently identifying languages using an $\ogis$
%implementation when the space of possible languages is finite. 
We measure efficiency of an $\ogis$ synthesis engine using the notion of {\em sample complexity} 
mentioned in Sec.~\ref{sec:ogis} --- the number of queries 
(and responses) needed to correctly identify a language.
In order to analyze sample complexity, we need to fix the nature of queries to the oracle.
We focus on queries to which the oracle provides an 
example or counterexample in response.
We show that finding the minimal 
set of examples to be provided by the oracle such that the synthesis engine converges to 
the correct language is NP-hard.

\begin{theorem} 
Solving the formal inductive synthesis problem $\langle \class, \domain, \Spec, \orint \rangle$ for a finite $\class$ and finite $\domain$ 
with the minimum number of queries is NP-hard for any oracle interface $\orint$ comprising the correctness query $\qcorr$ 
(and possibly $qposwit$ and $qnegwit$).
%
%Finding a smallest set of examples for identifying the target language $\lang$ from a finite family of candidate languages $\mathcal{L}$ is NP-hard.
\label{thm-finite}
\end{theorem}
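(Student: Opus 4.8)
The plan is to prove NP-hardness by a polynomial reduction from \textsc{Set Cover}, whose decision version (``is there a subcollection of size at most $k$ covering the universe?'') is NP-complete. The guiding intuition is that, under a complete specification singling out a unique target concept $\target$, the learner can only legitimately converge to $\target$ once the examples it has been shown rule out \emph{every} competing concept in $\class$: an example $\ex$ eliminates exactly those concepts that disagree with $\target$ on $\ex$. Minimizing the set of examples the oracle must reveal so that all competitors are eliminated is therefore literally a minimum set-cover problem, and since the correctness query $\qcorr$ is the mechanism that delivers such distinguishing examples (as counterexamples), any interface containing $\qcorr$ inherits the hardness.

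Concretely, given a \textsc{Set Cover} instance with universe $U=\{u_1,\dots,u_n\}$ and sets $S_1,\dots,S_m\subseteq U$ with $\bigcup_i S_i = U$, I would build the instance $\langle \class,\domain,\Spec,\orint\rangle$ as follows. Take $\domain=\{\ex_1,\dots,\ex_m\}$, one example per set. Let the target be $\target=\emptyset$, and for each universe element $u_j$ introduce a decoy concept $\concept_j=\{\ex_i : u_j\in S_i\}$; set $\class=\{\target\}\cup\{\concept_1,\dots,\concept_n\}$ and $\Spec=\{\target\}$. Then $\target\ominus\concept_j=\concept_j=\{\ex_i : u_j\in S_i\}$, so a (negatively labeled) example $\ex_i$ distinguishes $\target$ from $\concept_j$ precisely when $u_j\in S_i$. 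Hence a set $E\subseteq\domain$ of revealed examples eliminates all decoys if and only if $\{\,S_i : \ex_i\in E\,\}$ covers $U$, and the smallest such $|E|$ equals the minimum set-cover size. The reduction is clearly polynomial in $m+n$.

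I would then tie $|E|$ to the number of queries. A correctness query on a decoy $\concept_j\neq\target$ returns a counterexample $\ex\in\concept_j\ominus\target=\concept_j$, i.e.\ some $\ex_i$ with $u_j\in S_i$; a cooperative oracle (the relevant regime, since we are asked for the \emph{minimum} number of examples the oracle must provide) can hand back exactly the elements of a minimum distinguishing set, one per query. Thus a run that collects the examples of a minimum cover and then confirms $\target$ uses $|E_{\min}|+O(1)$ queries, while \emph{no} terminating run can do better: to exclude each $\concept_j$ the learner must have witnessed an example of $\target\ominus\concept_j$, so it must have seen a distinguishing set, whose minimum size is the minimum cover. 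Consequently deciding whether the FIS instance is solvable with at most $k'$ queries decides whether $U$ admits a cover of size $k'-O(1)$, establishing NP-hardness.

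The hard part is making this query-count/set-cover correspondence exact while honoring the clause \emph{any oracle interface containing $\qcorr$}: a richer interface (also offering $\qmem$, $\qposwit$, $\qnegwit$, or $\qdiff$) must not permit beating the set-cover bound. I would handle this by arguing that in the constructed instance every \emph{specification-sensitive} query reveals at most the label of a single domain element---$\qmem(\ex_i)$ and $\qnegwit$ each yield one negative label, $\qcorr$ yields one counterexample, and $\qposwit$ is vacuous since $\target=\emptyset$ has no positive examples---and the label of $\ex_i$ eliminates exactly the decoys that disagree with $\target$ on $\ex_i$, i.e.\ those $u_j$ covered by $S_i$. The distinguishing-input query $\qdiff$ does not reference $\Spec$ at all and so cannot by itself certify any $\concept_j\notin\Spec$; it can only enumerate candidates, after which a label-revealing query is still required. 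Hence the lower bound ``at least one informative query per chosen set'' is preserved verbatim for every $\orint\supseteq\{\qcorr\}$, so the optimum coincides with minimum set cover and the NP-hardness conclusion holds uniformly across such interfaces.
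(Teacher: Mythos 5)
Your proposal is correct and takes essentially the same route as the paper: a reduction from minimum set cover in which domain examples correspond to the sets, decoy concepts to the universe elements, and membership of an example in the symmetric difference $c_j \ominus \target$ encodes set--element incidence, so that a minimum set of counterexamples uniquely identifying the target is exactly a minimum cover. Your explicit instantiation ($\target = \emptyset$, $c_j = \{\ex_i : u_j \in S_i\}$) merely concretizes the incidence condition the paper postulates abstractly, and your final paragraph on richer interfaces containing $\qcorr$ corresponds to the paper's remark immediately following its proof.
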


\begin{proof}
\comment{
Consider a finite language family 
$$ \mathcal{L} = \{ L | L \subseteq \nat_k \}$$
where $\nat_k = \{1,2,\ldots,k\}$.
We consider the programs corresponding to the characteristic function for the languages, that is,
$$\algo_\lang(j) = 1 \;\text{ if } \; j \in \lang \; \text{ and }\; 0 \; \text{ otherwise}$$
Let the target program to be synthesized be $\lang^*$.
}
We prove NP-hardness through reduction from the minimum set cover problem.
Consider the minimum set cover problem with $k$ sets 
$S_1,S_2, \ldots, S_k$ and a universe comprising
$m$ elements $x_1,x_2, \ldots, x_m$ which needs to be covered
using the sets. We reduce it to a formal inductive synthesis problem 
$\langle \class, \domain, \Spec, \orint \rangle$ 
where $\class = \{ L_1, L_2, \ldots, L_m, L_{m+1} \}$ is a set of
$m+1$ languages, $\domain = \{ e_1, e_2, \ldots, e_k \}$ is the domain 
comprising $k$ examples over
which the languages are defined and $\Spec = \{ L_{m+1} \}$ is the specification.
Intuitively, 
the $m$ languages $L_1, \ldots, L_m$ are associated to the $m$ elements in the set cover
problem. 
The $k$ examples correspond to the $k$ sets. 
The sets $L_1, L_2, \ldots, L_{m+1}$ are constructed as follows:
For all $1 \leq i \leq k$ and $1 \leq j \leq m$,
example $e_i$ belongs to the symmetric difference of $L_j$ and $L_{m+1}$ 
if and only if 
the set $S_i$ contains element $x_j$.
%Thus, $P_{j} (e_i) \not = P_{m+1}(e_i) \; \iff \; x_j \in  S_i$ where $P_j$ is program
%corresponding to language $L_j$. 
We can do this, for instance, by including $e_i$ in $L_j$ but not in $L_{m+1}$.

Consider the operation of an $\ogis$ procedure implementing
an $\orint$ containing $\qcorr$.
Every unsuccessful correctness query returns a counterexample which is 
an element of $\domain$ in the symmetric difference of the proposed $L_j$ 
and $L_{m+1}$. 
Let $e_{i_1}, e_{i_2}, \ldots, e_{i_n}$ be the smallest
set of counterexamples that uniquely identifies the correct language $L_{m+1}$.
So, for all $ 1 \leq j \leq m$, there exists some $i_l$ such that 
either $e_{i_l} \in L_j$ or $e_{i_l} \in L_{m+1}$ but not both.
%$P_{j}(e_{il}) \not = P_{m+1}(e_{il})$.
And so, for all $1 \leq j \leq m$, there exists some $i_l$ such that $x_j \in S_{i_l}$ where 
$i_l \in \{i_1, i_2, \ldots, i_n\}$. 
%is index of the examples in the smallest set of examples identifying the target language.
Moreover, dropping $i_l$ results in some $x_j$ not being covered (the corresponding $L_j$
is not distinguished from $L_{m+1}$).
Thus, $S_{i_1}, S_{i_2}, \ldots , S_{i_n}$ is a solution to the minimum set cover problem
which is known to be NP-complete. 
Similarly, it is easy to see that any solution to the minimum set cover problem
also yields a minimum counterexample set.

We can therefore conclude that 
solving the formal inductive synthesis problem $\langle \class, \domain, \Spec, \orint \rangle$ with
the minimum number of queries is NP-hard.

\qed
\end{proof}
We note that this proof applies to any FIS problem with an
oracle interface $\orint$ containing the correctness query $\qcorr$. 
%, including CEGIS. -- can't say this as CEGIS is now defined using counterexample query
%  which means that there could be a case where a cex exists but is not reported by a verifier.
Moreover, this proof can be easily extended to other oracle interfaces as well,
%including all interfaces described in Sec.~\ref{sec:ogis-examples}, as well as
such as
the version of the distinguishing input method that does not use the correctness query,
with $\orint = \{ \qposwit, \qdiff, \qmem \}$. In this latter case, the combined use of
$\qdiff$ and $\qmem$ yields the desired mapping.

\subsection{Relation to Teaching Dimension}

Goldman et al.~\cite{goldman-td,goldman-brto} 
proposed {\it teaching dimension} as a measure to study
computational complexity of learning. They consider a teaching model in which a helpful teacher
selects the examples of the concept and provides it to the learner. Informally, the teaching
dimension of a concept class is the minimum number of examples that a teacher must reveal to 
uniquely
identify any target concept chosen from the class.

For a domain $\domain$ and concept class $\class$, 
a concept $\concept \in \class$ is a set of examples from $\domain$.
So, \acta{$\class \subseteq 2^{\domain}$}.
In the learning model proposed by Goldman et al.~\cite{goldman-td,goldman-brto}, 
the basic goal of the teacher is to help the learner identify the target 
concept $\concept^*\in \class$ by providing an example sequence from $\domain$.  
We now formally define the teaching dimension of a concept class.

\begin{definition} (adapted from~\cite{goldman-td})
An example sequence is a sequence of labeled examples from $\domain$,
\acta{where the labels are given by some underlying specification}. 
%\acta{The label of an example is positive if it is generated by witness
%query and it is negative if it is generated by counterexample query.}
For concept class $\class$ and target concept
$\concept \in \class$, we say $T$ is a teaching sequence for $\concept$ 
(in $\class$) if $T$ is an example
sequence that uniquely identifies $\concept$ in $\class$ - 
that is, $\concept$ is the only concept in $\class$ consistent
with $T$. Let $T(\concept)$ denote the set of all teaching sequences for $\concept$. 
Teaching dimension $TD(\class)$ of the concept class is defined as follows:
$$ TD(\class) = \max_{\concept \in \class} \; (\min_{\tau \in T(\concept)} \, |\tau|) $$
\end{definition}

Consider an FIS problem where the specification is complete, i.e., $\Spec = \{ \lang_\target \}$.
Consider an instance of \ogis using any combination of witness, equivalence, subsumption, or
distinguishing input queries. Each of these queries, if it does not
terminate the \ogis loop, returns a new example for the learner. Thus,
the number of iterations of the \ogis loop, its sample complexity, is
the number of examples needed by the learner to identify a correct
language. Suppose the minimum such number of examples, for any
specification (target language $\lang_\target \in \class$), is $M_{\ogis}(\class)$. 
Then, the following theorem
must hold.
\begin{theorem}
$M_{\ogis}(\class) \geq TD(\class)$
\end{theorem}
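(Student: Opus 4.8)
The plan is to prove the inequality one target at a time and then take a worst case. Fix a complete specification $\Spec = \{\lang_\target\}$ and let $m(\lang_\target)$ denote the least number of examples the \ogis loop consumes before it converges to $\lang_\target$, minimised over all admissible oracle response sequences. Under the natural reading $M_{\ogis}(\class) = \max_{\lang_\target \in \class} m(\lang_\target)$, and since $TD(\class) = \max_{\concept \in \class}\min_{\tau \in T(\concept)}|\tau|$, it suffices to show, for every $\lang_\target \in \class$, that the examples accumulated at the moment of convergence form a teaching sequence for $\lang_\target$ in $\class$, i.e.
\[
m(\lang_\target) \;\geq\; \min_{\tau \in T(\lang_\target)} |\tau| .
\]

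First I would record that each admissible query type (witness, equivalence, subsumption, distinguishing input), when it does not halt the loop, hands the learner an example whose label with respect to $\lang_\target$ is determined: a positive/negative witness is labelled by definition, and a counterexample $\ex$ to a candidate $\concept$ returned by an equivalence query lies in $\concept \ominus \lang_\target$, so its label is recovered from whether $\ex \in \concept$ (which the learner knows, having proposed $\concept$); the distinguishing-input case is handled by relabelling its witness through the accompanying membership information. Hence at convergence the learner holds a finite set $\ExSet$ of correctly labelled examples, all consistent with $\lang_\target$, and $|\ExSet| = m(\lang_\target)$ for the optimal run.

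The crux, and the step I expect to be the main obstacle, is to show that $\ExSet$ uniquely identifies $\lang_\target$ within $\class$. I would argue by contradiction: suppose some $\concept' \in \class$ with $\concept' \neq \lang_\target$ is also consistent with $\ExSet$. Then I would exhibit a run of the same \ogis procedure against the specification $\{\concept'\}$ producing the identical dialogue, by verifying that the oracle for $\concept'$ may legally repeat every response the oracle for $\lang_\target$ gave: a positive witness $\ex \in \lang_\target$ also lies in $\concept'$ by consistency, a negative witness $\ex \notin \lang_\target$ also lies outside $\concept'$, and a counterexample $\ex \in \concept \ominus \lang_\target$ lies in $\concept \ominus \concept'$ as well, since if $\ex \in \lang_\target \setminus \concept$ then consistency gives $\ex \in \concept' \setminus \concept$, and symmetrically on the other side. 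The delicate part is confirming this reproducibility uniformly across the query types in the assumed interface, because the learner's output is a function of the dialogue alone: identical dialogues force identical final hypotheses, so the procedure would converge to the same concept for both targets, contradicting that \ogis solves the \FIS problem for every target in $\class$ (Definition~\ref{def:ogis-solve}).

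Finally, having established $\ExSet \in T(\lang_\target)$, I would conclude $m(\lang_\target) = |\ExSet| \geq \min_{\tau \in T(\lang_\target)} |\tau|$, and then take the maximum over $\lang_\target \in \class$ on both sides to obtain $M_{\ogis}(\class) \geq TD(\class)$.
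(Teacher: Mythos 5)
Your argument fails at exactly the step you flagged as the crux. The replay you construct checks only the witness and counterexample responses, but a dialogue in the sense of Definition~\ref{def:ogis-solve} also contains the oracle's affirmative verdicts ($\bot$/``YES'') to correctness and equivalence queries, and these are target-dependent even when every example exchanged so far is consistent with both targets. At the terminating query the hypothesis is $\lang_\target$ itself, and the oracle for $\concept' \neq \lang_\target$ cannot legally answer ``YES'' there: it is forced to return a fresh counterexample in $\lang_\target \ominus \concept'$. So identical dialogues cannot be exhibited; the run against $\concept'$ simply diverges at that point, consumes one more example, and may then converge to $\concept'$ --- no contradiction with the procedure solving the \FIS problem for every target. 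The verdicts are an example-free information channel, and your uniqueness claim ignores it.

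This is not merely a presentational issue, because the per-target inequality $m(\lang_\target) \geq \min_{\tau \in T(\lang_\target)} |\tau|$ on which your decomposition rests is false. Take $\domain = \{x\}$, $\class = \{\emptyset, \{x\}\}$, and a learner whose first hypothesis is $\emptyset$: for target $\emptyset$ the equivalence query returns ``YES'' immediately, so $m(\emptyset) = 0$, yet every teaching sequence for $\emptyset$ has length $1$, since the empty sequence is also consistent with $\{x\}$. Worse, even the maxed version is threatened under your definitions: for the class $\{\emptyset, \{x_1\}, \dots, \{x_n\}\}$ --- precisely the Goldman--Kearns class the paper cites as making $TD(\class) = |\class| - 1$ tight --- the first-guess-$\emptyset$ learner identifies every target with at most one counterexample (the counterexample to $\emptyset$ pins down the singleton), while $TD(\class) = n$. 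To be fair, the paper's own one-line proof by contradiction silently relies on the same unproved assertion, namely that the examples consumed by the \ogis run form a teaching sequence for the target, so your instinct to isolate and prove that claim was the right one; but any correct closure of the gap must account for the verdict channel, e.g., by counting terminating responses as part of the sample or by restricting to learners whose convergence is justified by the exchanged labeled examples alone, and your proposal as written does neither.
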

The theorem can be obtained by a straightforward proof by
contradiction: if $M_{\ogis}(\class) < TD(\class)$, then 
for each target concept to be learned, 
there is a shorter teaching sequence than $TD(\class)$, viz., the one used by the
\ogis instance for that target, contradicting the definition of teaching dimension.

Now, given that the teaching dimension is a lower bound on the sample
complexity of \ogis, it is natural to ask how large $TD(\class)$ can
grow in practice. This is still a largely open question for general
language classes. However, results from machine learning theory can
help shed more light on this question.  
One of these results relates the teaching dimension to a second metric
for measuring complexity of learning, namely the {\it
Vapnik-Chervonenkis} (VC) dimension~\cite{vapnik-vc}. We define this below.

\begin{definition}~\cite{vapnik-vc}
Let $\domain$ be the domain of examples and $\concept$ be a concept from the class $\class$. 
A finite set $\domain' \subseteq  \domain$ is shattered
by $\class$ if $\{\concept \cap \domain' | \concept \in \class \} = 2^{\domain'}$. 
In other words, $\domain' \subseteq \domain$  is shattered by $\class$ if for 
each subset $\domain'' \subseteq \domain'$, 
there is a concept $\concept \in \class$ which contains all of $\domain''$, but none
of the instances in $\domain'-\domain''$. The Vapnik-Chervonenskis (VC) dimension is defined to be smallest $d$ 
for which no set of $d+1$ examples is shattered by $\class$.
\end{definition}

Blumer et al.~\cite{Blumer-vp} have shown that the VC dimension of a concept class
characterizes the number of examples required for learning any concept in the class under the 
distribution-free or probably approximately correct (PAC) model of Valiant~\cite{valiant:learnable}.
The differences between teaching dimension and 
Vapnik-Chervonenkis dimension are discussed at length by Goldman and Kearns~\cite{goldman-td}. 
The following theorems from ~\cite{goldman-td} provides 
lower and upper bound on the teaching dimension of a finite concept class in
terms of the size of the concept class and its VC-dimension.

\begin{theorem}~\cite{goldman-td}
The teaching dimension $TD(\class)$ of any concept class $\class$ satisfies
the following upper and lower bounds:
$$ VC(\class)/\log(|\class|) \leq TD(\class) \leq |\class| - 1$$
where $VC(\class)$ is the VC dimension of the concept class $\class$ and $|\class|$ denotes the number of concepts
in the concept class.
\end{theorem}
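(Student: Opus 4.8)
The plan is to treat the two inequalities separately, since they have entirely different character: the upper bound is a constructive ``distinguish-from-everyone-else'' argument, while the lower bound reduces to the elementary relationship between VC dimension and class size.

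For the upper bound $TD(\class) \le |\class| - 1$, I would fix an arbitrary target concept $\concept \in \class$ and exhibit a teaching sequence of length at most $|\class| - 1$. For each of the $|\class| - 1$ other concepts $\concept' \ne \concept$, the difference set $\concept \ominus \concept'$ is nonempty, so I can pick a point $x_{\concept'}$ on which $\concept$ and $\concept'$ disagree and label it according to $\concept$, i.e. with $[x_{\concept'} \in \concept]$. The resulting collection of at most $|\class|-1$ labeled examples is consistent with $\concept$ but inconsistent with every other $\concept'$, hence it uniquely identifies $\concept$ and is a valid teaching sequence. Since $\concept$ was arbitrary, the minimum teaching-sequence length for every concept is at most $|\class|-1$, and taking the maximum over all concepts yields the bound.

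For the lower bound $VC(\class)/\log|\class| \le TD(\class)$ (reading $\log$ as $\log_2$), I would combine two observations. First, if $\domain' \subseteq \domain$ is a set of size $VC(\class)$ shattered by $\class$, then the restriction map $\concept \mapsto \concept \cap \domain'$ sends $\class$ onto all $2^{VC(\class)}$ subsets of $\domain'$, forcing $|\class| \ge 2^{VC(\class)}$ and hence $VC(\class) \le \log_2|\class|$, i.e. $VC(\class)/\log_2|\class| \le 1$. Second, whenever $|\class| \ge 2$ the empty example sequence is consistent with every concept and therefore identifies none, so each concept requires at least one example and $TD(\class) \ge 1$. Chaining these two facts gives $VC(\class)/\log_2|\class| \le 1 \le TD(\class)$.

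The main point to be careful about is not difficulty but bookkeeping: I would exclude (or handle separately) the degenerate case $|\class| = 1$, where $TD(\class) = 0$ and the ratio becomes $0/0$, and I would make explicit that the logarithm is base $2$, since this is precisely what lets the step $VC(\class) \le \log_2|\class|$ deliver the clean constant $1$. It is worth emphasizing that the lower bound is deliberately loose: the class of all singletons together with the empty set (where $VC(\class) = 1$ but $TD(\class) = |\class| - 1$) shows the gap between the two bounds can be arbitrarily large, so the real content is just the structural inequality $2^{VC(\class)} \le |\class|$ combined with $TD(\class) \ge 1$, rather than any tight correspondence between teaching dimension and VC dimension.
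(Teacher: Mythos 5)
Your proposal is correct, and there is nothing in the paper to compare it against line by line: the paper states this theorem as a cited result from Goldman and Kearns without reproducing any proof, so a self-contained argument is exactly what is called for. Your upper-bound argument is the standard one and is sound in this paper's setting, where concepts are modeled extensionally as subsets of $\domain$, so distinct concepts always have a nonempty symmetric difference from which a distinguishing labeled example can be drawn; the resulting sequence of at most $|\class|-1$ examples eliminates every competitor and hence uniquely identifies the target. Your lower-bound argument is also valid, and you are right that as stated the bound is weak: a shattered set of size $VC(\class)$ forces $|\class| \geq 2^{VC(\class)}$, hence $VC(\class)/\log_2|\class| \leq 1$, while any class with $|\class| \geq 2$ has $TD(\class) \geq 1$ because the empty sequence is consistent with at least two concepts; chaining gives the claim. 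Your bookkeeping caveats are the right ones (base-$2$ logarithm, and excluding $|\class| = 1$, where the ratio degenerates to $0/0$ while $TD(\class) = 0 = |\class|-1$). Two small remarks: your singletons-plus-empty-set example showing $TD(\class) = |\class| - 1$ with $VC(\class) = 1$ is essentially the same witness the paper alludes to when it says Goldman and Kearns exhibit a class for which the upper bound is tight; and one can also note the lower bound is itself tight, e.g.\ for a class where each concept $c_S$ carries a unique positive marker point, so that $TD(\class) = 1$ while $VC(\class) = \log_2|\class|$, which confirms that no stronger relationship than your trivial chain can hold at this level of generality.
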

Moreover, Goldman and Kearns~\cite{goldman-td} exhibit a concept class
for which the upper bound is tight. This indicates that without
restrictions on the concept class, one may not be able to prove very strong bounds on the sample
complexity of \ogis.

%%%%%%%%%%%%%%%%%
\comment{
The following corollary follows from the equivalence between teaching dimension and the 
smallest number of examples needed to synthesize a program for a finite program class.

\begin{corollary}
Let $\mathscr{\algo}$ be any program class and $\mathscr{\lang}$ be the corresponding 
language class. Let $M = |\mathscr{\algo}| = |\mathscr{\lang}|$ be the number of candidate 
programs (or languages) in the class and let
the number of examples needed to synthesize program from the class be $k$, then
$$VC(\mathscr{\algo}) / \log (|\mathscr{\algo}|)  \leq k \leq |\mathscr{\algo}| - 1$$
\end{corollary}
} % end comment
%%%%%%%%%%%%%%%%%%

To summarize, we have shown that 
solving the formal inductive synthesis problem for finite domains and finite concept
classes with the minimum number of queries
%finding the smallest set of examples which can be used to identify the correct
%language from a finite language class 
is NP-hard. 
%So, finding the smallest set of response-query sequence to identify the correct
%language in $\ogis$ is NP-hard. 
Further, we showed that the combinatorial measure of teaching
dimension captures the smallest number of examples required to identify 
the correct language.

\section{Conclusion} 
\label{sec:disco}
\label{sec-disco}
\label{sec:con}
\label{sec-con}

We  presented a theoretical framework and analysis of formal inductive 
synthesis by formalizing the notion of {\it oracle-guided inductive synthesis}
($\ogis$). We illustrated how $\ogis$ generalizes instances of concept 
learning in machine learning as well as synthesis
techniques developed using formal methods. 
We focus on counterexample-guided inductive synthesis (CEGIS) which is an
$\ogis$ implementations that uses 
the verification engine as the oracle. 
We presented different variations of $\cegis$ motivated by practice, and showed
that their synthesis power can be different, especially when the learning
engine can only store a bounded number of examples. 
%which differ in
%the size of the concept (language/program) class, types of queries supported by the oracle
%and the resource available to the inductive learning engine.
%We compared the synthesis power of these implementations. 
%
%For the case
%of infinite language class, we establish how synthesis power changes when 
%we consider learning engine with bounded (finite) and unbounded (infinite)
%memory, and when we   
%replace the oracle providing arbitrary counterexamples with those 
%that provide counterexamples which are minimal, bounded by a constant, 
%or bounded by positive examples. For the case of finite program classes, 
%all synthesis
%techniques will terminate, and so we describe how to characterize the
%efficiency of synthesis techniques. We showed that {\it teaching %dimension}
%of concept classes can be used for measuring efficiency of synthesis
%techniques, and presented lower and upper bounds for it. 
There are several directions for future work. We  
discuss some open problems below that would further improve the theoretical
understanding of formal inductive synthesis.
\begin{myitemize}
\item  Teaching dimension of concept classes such as decision trees and
axis parallel rectangles have been well-studied in literature. But teaching
dimension of formal concept classes such as programs in the 
{\it while}~\cite{Winskel-book93}
language with only linear arithmetic over integers is not known.
Finding teaching dimensions for these classes would help in
establishing bounds
on the number of examples needed for synthesizing programs from these classes.
\comment { 
\item Complexity and decidability results for reactive synthesis are 
well-studied in literature. Recently, bounded synthesis techniques
have been proposed which bound some system parameter such as the number of 
states. The teaching dimension for the concept class of
universal co-Buchi automaton with bounded number of states would provide 
bounds on the number of examples needed for solving the bounded synthesis 
problem.
} 
\item We investigated the difference in synthesis power when the learning
engine has finite memory vs when the learning engine has infinite 
memory. Another important question to consider is how the power
of the synthesis engine changes when we restrict the time complexity of
learning engine such as the learning engines which take time polynomial in 
the number of examples. 

\item We have not analyzed the impact of different learning strategies that
may traverse the space of possible programs (languages) in various ways.
This is also an interesting avenue for future work.

\end{myitemize}

In summary, our paper is a first step towards a theory of formal inductive synthesis,
and much remains to be done to improve our understanding of this emerging area with
several practical applications.

\comment { 
\section{Motivating Example}

In this section, we present a simple example that illustrates why it is non-intuitive to estimate
the change in power of synthesis when we consider alternative kinds of counterexamples.
Consider synthesizing a program which takes as input a tuple of two integers
$(x,y)$ and outputs $1$ if the tuple lies in a specific rectangle
$R$ (defined by diagonal points $(-1,-1)$ and $(1,1))$ and $0$ otherwise.

The target program is:
$$if\; ( (-1 \leq x \&\& x \leq 1)\&\&(-1\leq y \&\& y\leq 1) ) \; \; {op}=1 \;\; else \;\; op =0$$

The candidate program space is the space of all possible rectangles in $\mathbb{Z} \times \mathbb{Z}$
where $\mathbb{Z}$ denotes the set of integers, that is,

$$if\; ( (\alpha_x \leq x \&\& x \leq \beta_x)\&\&(\alpha_y\leq y \&\& y\leq \beta_y) ) \; \; op=1 \;\; else \;\; op =0$$

where $\alpha_x,\alpha_y,\beta_x,\beta_y$ are the parameters that need to be discovered by the synthesis
engine.

Now, consider a radial ordering of $(x,y)$ which uses $x^2+y^2 $ as the ordering index. If we consider synthesis using minimal counter-examples, it is clear that we can learn the rectangle: starting with an initial candidate program that always outputs $1$ for all $(x,y)$ in $\mathbb{Z} \times \mathbb{Z}$; validation engine producing minimum counterexamples would discover the rectangle boundaries. One possible sequence of minimal counterexamples would be $(0,2),(0,-2),(2,0),(-2,0)$. Since the boundary points form a finite set, $\mncegis$ will terminate with the correct program. But if the counterexamples are arbitrary
as in $\cegis$, it is not obvious whether the rectangle can be still learnt. Our paper proves that
$\cegis$ can also learn such a rectangle.

The question of synthesis power of different techniques using different nature of counterexamples is non-trivial when the space of programs is not finite. Even termination of inductive synthesis technique
is not guaranteed when the candidate space of programs is infinite. Thus, the question of comparing the
relative power of these synthesis techniques is interesting.

} %%% end comment

\comment {
Hence, $\hmncegis$ can synthesize programs from some program classes where $\cegis$ fails to synthesize
the correct program. But contrariwise, $\hmncegis$ also fails at synthesizing programs from some program
classes where $\cegis$ can successfully synthesize a program. Thus, their synthesis power is not
equivalent, and none dominates the other.
}

\bibliographystyle{eptcs}
\bibliography{togis}
%\bibliography{paper} % togis is more recent

%\nocite{*}
%\bibliographystyle{eptcs}
%bibliography{generic}
\end{document}